\newtheorem{theorem}{Theorem}
\newtheorem*{theorem*}{Theorem}
\newtheorem{lemma}{Lemma}
\newtheorem*{lemma*}{Lemma}
\newtheorem*{property*}{Property}
\newtheorem{definition}{Definition}
\newtheorem*{assumption*}{Assumption}
\newtheorem*{prop*}{Proposition}
\newtheorem{setting}{Setting}
\newtheorem*{setting*}{Setting}
\title{Implicit Regularization of Dropout}
\begin{document}

\author{
Zhongwang Zhang\textsuperscript{\rm 1},  
Zhi-Qin John Xu\textsuperscript{\rm 1,2}\thanks{Corresponding author: xuzhiqin@sjtu.edu.cn.}
\\
\textsuperscript{\rm 1}  School of Mathematical Sciences, Institute of Natural Sciences, MOE-LSC, Shanghai Jiao Tong University \\ 
\textsuperscript{\rm 2} Qing Yuan Research Institute, Shanghai Jiao Tong University \\
}

\maketitle

\begin{abstract}
It is important to understand how dropout, a popular regularization method, aids in achieving a good generalization solution during neural network training. In this work, we present a theoretical derivation of an implicit regularization of dropout, which is validated by a series of experiments. Additionally, we numerically study two implications of the implicit regularization, which intuitively rationalizes why dropout helps generalization. Firstly, we find that input weights of hidden neurons tend to condense on isolated orientations trained with dropout. Condensation is a feature in the non-linear learning process, which makes the network less complex. Secondly, we experimentally find that the training with dropout leads to the neural network with a flatter minimum compared with standard gradient descent training, and the implicit regularization is the key to finding flat solutions. Although our theory mainly focuses on dropout used in the last hidden layer, our experiments apply to general dropout in training neural networks. This work points out a distinct characteristic  of dropout compared with stochastic gradient descent and serves as an important basis for fully understanding dropout.
\end{abstract}

\section{Introduction}\label{sec..intro}
Dropout is used with gradient-descent-based algorithms for training neural networks (NNs) \cite{hinton2012improving,srivastava2014dropout}, which can improve the generalization in deep learning \cite{tan2019efficientnet,helmbold2015inductive}. For example, common neural network frameworks such as PyTorch default to utilizing dropout during transformer training.
Dropout works by multiplying the output of each neuron by a random variable with probability $p$ being $1/p$ and $1-p$ being zero during training.
Note that every time the concerning quantity is calculated, the variable is randomly sampled at each feedforward operation. 

The effect of dropout is equivalent to adding a specific noise to the gradient descent training. Theoretically, based on the method of the modified gradient flow \cite{hairer2003geometric}, we derive implicit regularization terms of the dropout training for networks with dropout on the last hidden layer. The implicit regularization of dropout can lead to two important implications, condensed weights and flat solutions, verified by a series of experiments\footnote{Code can be found at: \url{https://github.com/sjtuzzw/torch_code_frame}} under general settings.

Firstly, we study weight feature learning in dropout training. Previous works \cite{luo2021phase,zhou2021towards,zhou2022empirical} find that, in the nonlinear training regime, input weights of hidden neurons (the input weight of a hidden neuron is a vector consisting of the weight from its input layer to the hidden layer and its bias term) are clustered into several groups under gradient flow training. The weights in each group have similar orientations, which is called \emph{condensation}. By analyzing the implicit regularization terms, we theoretically find that dropout tends to find solutions with weight condensation. To verify the effect of dropout on condensation, we conduct experiments in the linear regime, such as neural tangent kernel initialization \cite{jacot2018neural}, where the weights are in proximity to the random initial values and condensation does not occur in common gradient descent training. We find that even in the linear regime, with dropout, weights show clear condensation in experiments, and for simplicity, we only show the output here (Fig. \ref{pic:condense_toy}(a)). As condensation reduces the complexity of the NN, dropout may help the generalization by constraining the model's complexity.

Secondly, we study the flatness of the solution in dropout training. We theoretically
show that the implicit regularization terms of dropout lead to a flat minimum. We experimentally verify the effect of the implicit regularization terms on flatness (Fig. \ref{pic:condense_toy}(b)). As suggested by many existing works \cite{keskar2016large,neyshabur2017exploring,zhu2018anisotropic}, flatter minima have a higher probability of better generalization and stability.

\begin{figure}[h]
	\centering
 	\subfloat[condensation]{\includegraphics[width=0.4\textwidth]{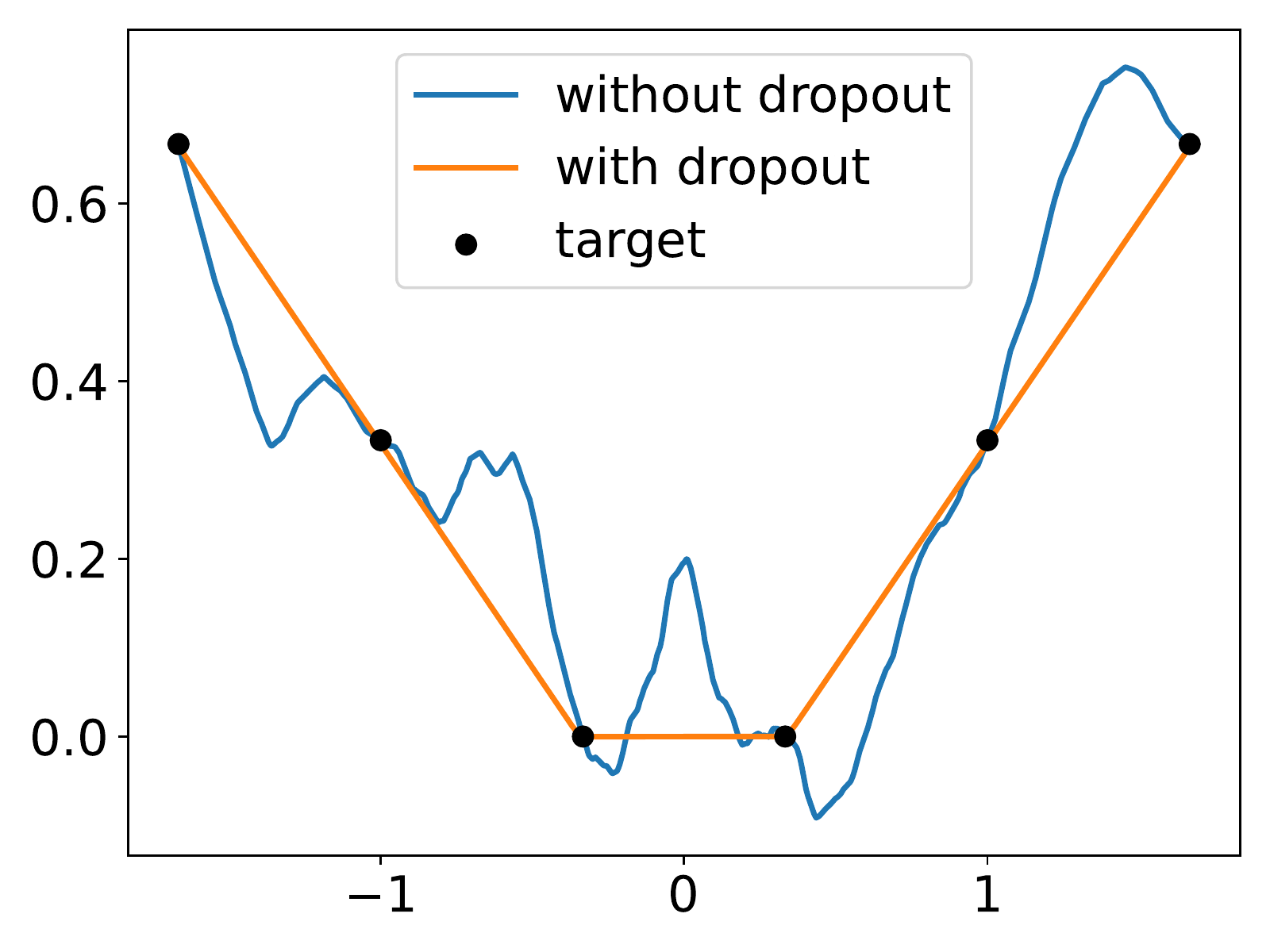}}	
  \subfloat[flatness]{\includegraphics[width=0.4\textwidth]{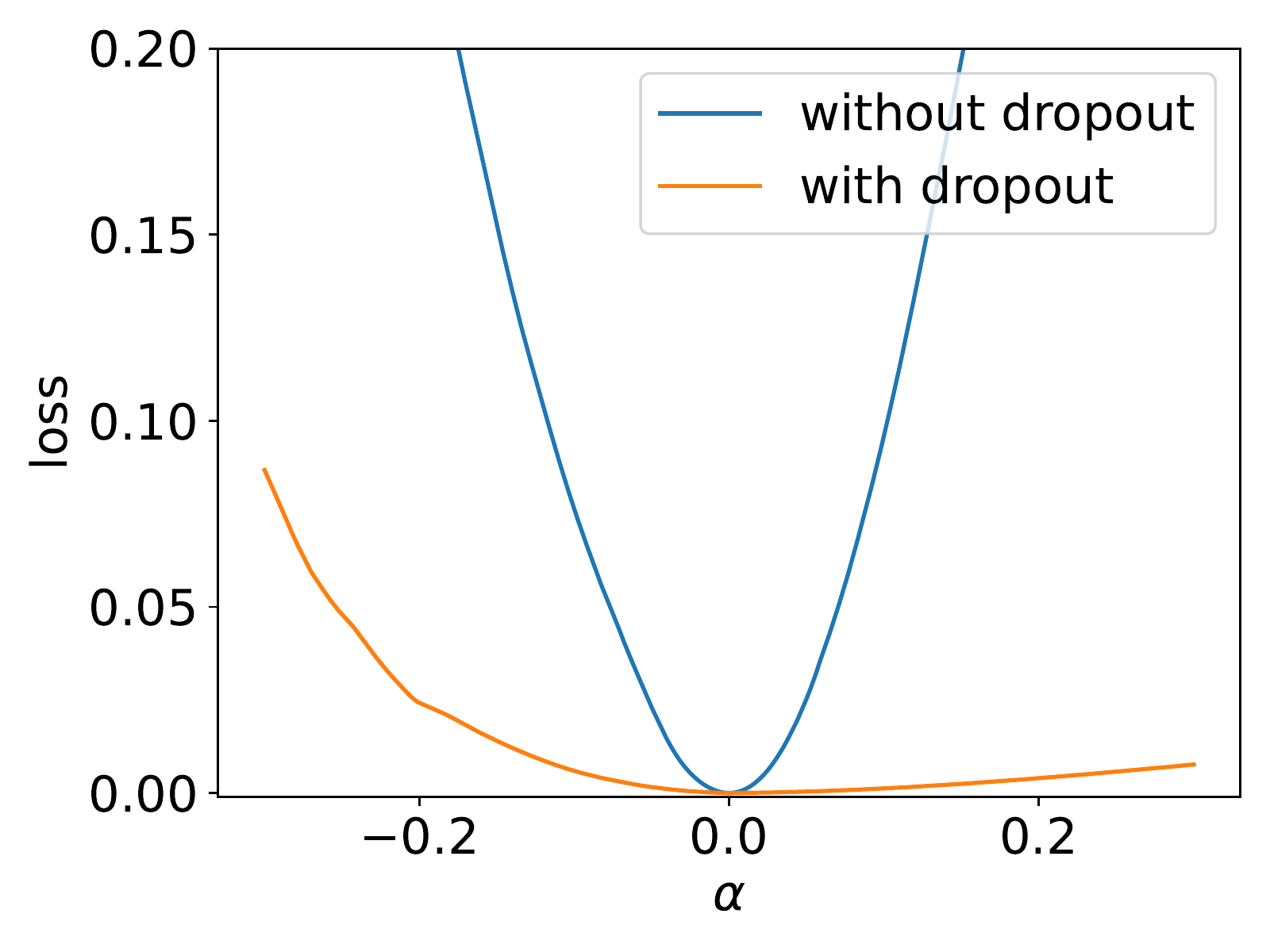}}	
	
  \caption{The experimental results of training two-layer ReLU NNs trained with and without dropout. The width of the hidden layers is $1000$, and the learning rate for all experiments is $1\times10^{-3}$. (a) The output of NNs with or without dropout. The black points represent the target points. (b) The loss value obtained by perturbing the network with or without dropout in a given random direction. $\alpha$ is the step size moving in the above direction.\label{pic:condense_toy}}
\end{figure} 

This work provides a comprehensive investigation into the implicit regularization of dropout and its associated implications. Although our theoretical analysis mainly focuses on the dropout used in the last hidden layer, our experimental results extend to the general use of dropout in training NNs. Our results show that dropout has a distinct implicit regularization for facilitating weight condensation and finding flat minima, which may jointly improve the generalization performance of NNs. 

\section{Related Works}
Dropout is proposed as a simple approach to prevent overfitting in the training of NNs, thus improving the generalization of the network \cite{hinton2012improving,srivastava2014dropout}. Many works aim to find an explicit form of dropout regularization. 
A previous work \cite{mcallester2013pac} presents PAC-Bayesian bounds, and others \cite{wan2013regularization, mou2018dropout} derive Rademacher generalization bounds. These results show that the reduction of complexity brought by dropout is $O(p)$, where $p$ is the probability of keeping an element in dropout. 
All of the above works need specific settings, such as norm assumptions and logistic loss, and they only give a rough estimate of the generalization error bound, which usually consider the worst case. \cite{wager2013dropout, mianjy2018implicit} study the implicit bias of dropout for linear models. However, it is not clear what is the characteristic of the dropout training process and how to bridge the training with the generalization in non-linear neural networks. In this work, we show the implicit regularization of dropout, which may be a key factor in enabling dropout to find solutions with better generalization.


The modified gradient flow is defined as the gradient flow which is close to discrete iterates of the original training path up to some high-order learning rate term~\cite{hairer2003geometric}. \cite{barrett2020implicit} derive the modified gradient flow of discrete full-batch gradient descent training as $\hat{R}_{S,GD}(\vtheta)=R_S({\vtheta})+(\varepsilon/4)\norm{\nabla R_S({\vtheta})}^2+O(\varepsilon^2)$, where $R_S({\vtheta})$ is the training loss on dataset $S$, $\varepsilon$ is the learning rate and $\norm{\cdot}$ denotes the $l_2$-norm. In a similar vein, \cite{smith2020origin} derive the modified gradient flow of stochastic gradient descent training as $\hat{R}_{S,SGD}(\vtheta)=R_S({\vtheta})+(\varepsilon/4)\norm{\nabla R_S({\vtheta})}^2+(\varepsilon/4m)\sum_{i=0}^{m-1} \norm{\nabla R_{S,i}({\vtheta})-\nabla R_S({\vtheta})}^2+O(\varepsilon^2)$, where $R_{S,i}({\vtheta})$ is the $i$th batch loss and the last term is also called ``non-uniform'' term \cite{wu2018sgd}. Our work shows that there exist several distinct features between dropout and SGD. Specifically, in the limit of the vanishing learning rate, the modified gradient flow of dropout still has an additional implicit regularization term, whereas that of SGD converges to the full-batch gradient flow \cite{yaida2018fluctuation}.

The parameter initialization of the network determines the final fitting result of the network. \cite{luo2021phase,zhou2022empirical} mainly identify the linear regime and the condensed regime for two-layer and three-layer wide ReLU NNs. In the linear regime, the training dynamics of NNs are approximately linear and similar to a random feature model with an exponential loss decay. In the condensed regime, active neurons are condensed at several discrete orientations, which may be an underlying reason why NNs outperform traditional algorithms.

\cite{zhang2021embedding,zhang2022embedding} show that NNs of different widths often exhibit similar condensation behavior, e.g., stagnating at a similar loss with almost the same output function. Based on this observation, they propose the embedding principle that the loss landscape of an NN contains all critical points of all narrower NNs. The embedding principle provides a basis for understanding why condensation occurs from the perspective of loss landscape. 

Several works study the mechanism of condensation at the initial training stage, such as for ReLU network \cite{maennel2018gradient,pellegrini2020analytic} and network with continuously differentiable activation functions \cite{zhou2021towards}.  However, studying condensation throughout the whole training process is generally challenging, with dropout training being an exception. The regularization terms we derive in this work show that the dropout training tends to condense in the whole training process.

\section{Preliminary}

\subsection{Deep Neural Networks}
Consider a $L$-layer ($L\geq 2$) fully-connected neural network (FNN). We regard the input as the $0$th layer and the output as the $L$th layer. Let $m_l$ represent the number of neurons in the $l$th layer. In particular, $m_0=d$ and $m_L=d'$. For any $i,k\in \sN$ and $i<k$, we denote $[i:k]=\{i,i+1,\ldots,k\}$. In particular, we denote $[k]:=\{1,2,\ldots,k\}$.

Given weights $W^{[l]}\in \sR^{m_l\times m_{l-1}}$ and biases $b^{[l]}\in\sR^{m_{l}}$ for $l\in[L]$, we define the collection of parameters $\vtheta$ as a $2L$-tuple (an ordered list of $2L$ elements) whose elements are matrices or vectors
\begin{equation*}
    \vtheta=\Big(\vtheta|_1,\cdots,\vtheta|_L\Big)=\Big(\mW^{[1]},\vb^{[1]},\ldots,\mW^{[L]},\vb^{[L]}\Big),
\end{equation*}
where the $l$th layer parameters of $\vtheta$ is the ordered pair $\vtheta|_{l}=\Big(\mW^{[l]},\vb^{[l]}\Big),\quad l\in[L]$.
We may misuse notation and identify $\vtheta$ with its vectorization $\mathrm{vec}(\vtheta)\in \sR^M$, where $M=\sum_{l=0}^{L-1}(m_l+1) m_{l+1}$. 

Given $\vtheta\in \sR^M$, the FNN function $\vf_{\vtheta}(\cdot)$ is defined recursively. First, we denote $\vf^{[0]}_{\vtheta}(\vx)=\vx$ for all $\vx\in\sR^d$. Then, for $l\in[L-1]$, $\vf^{[l]}_{\vtheta}$ is defined recursively as 
$\vf^{[l]}_{\vtheta}(\vx)=\sigma (\mW^{[l]} \vf^{[l-1]}_{\vtheta}(\vx)+\vb^{[l]})$, where $\sigma$ is a non-linear activation function.
Finally, we denote
\begin{equation*}
    \vf_{\vtheta}(\vx)=\vf(\vx,\vtheta)=\vf^{[L]}_{\vtheta}(\vx)=\mW^{[L]} \vf^{[L-1]}_{\vtheta}(\vx)+\vb^{[L]}.
\end{equation*}
For notational simplicity, we denote

\begin{equation*}
    \vf_{\vtheta}^{j}(\vx_i)=\mW^{[L]}_{j} f^{[L-1]}_{\vtheta, j}(\vx_i), 
\end{equation*}
where $\vf_{\vtheta}^{j}(\vx_i), \mW^{[L]}_{j} \in  \sR^{m_L}$ is the $j$th column of $\mW^{[L]}$, and $f^{[L-1]}_{\vtheta, j}(\vx_i)$ is the $j$th element of vector $\vf^{[L-1]}_{\vtheta}(\vx_i)$. In this work, we denote the $l_2$-norm as $\norm{\cdot}$ for convenience.

\subsection{Loss Function}
The training data set is denoted as  $S=\{(\vx_i,\vy_i)\}_{i=1}^n$, where $\vx_i\in\sR^d$ and $\vy_i\in \sR^{d'}$. For simplicity, we assume an unknown function $\vy$ satisfying $\vy(\vx_i)=\vy_i$ for $i\in[n]$. The empirical risk reads as
\begin{equation}
    \RS(\vtheta)=\frac{1}{n}\sum_{i=1}^n\ell(\vf(\vx_i,\vtheta),\vy(\vx_i)), \label{eq:RS}
\end{equation}
where the loss function $\ell(\cdot,\cdot)$ is differentiable and the derivative of $\ell$ with respect to its first argument is denoted by $\nabla\ell(\vy,\vy^*)$. The error with respect to data sample $(\vx_{i},\vy_{i})$ is defined as 
\begin{equation*}
    \ve(\vf_{\vtheta}(\vx_{i}),\vy_{i})=\vf_{\vtheta}(\vx_{i})- \vy_{i}.
\end{equation*}
For notation simplicity, we denote $\ve(\vf_{\vtheta}(\vx_{i}),\vy_{i})=\ve_{\vtheta,i}$. 

\subsection{Dropout}
For $\vf_{\vtheta}^{[l]}(\vx) \in \mathbb{R}^{m_{l}}$, we randomly sample a scaling vector $\veta \in \mathbb{R}^{m_{l}}$ with coordinates of $\veta$ are sampled i.i.d that, 
\begin{equation*}
    (\veta)_{k}= \begin{cases}\frac{1-p}{p} & \text { with probability } p \\ -1 & \text { with probability } 1-p, \end{cases}
\end{equation*}
where $ p \in (0,1] $, $k \in [m_l]$ indices the coordinate of $\vf_{\vtheta}^{[l]}(\vx)$. It is important to note that $\veta$ is a zero-mean random variable. We then apply dropout by computing
\begin{equation*}
\vf_{\vtheta, \veta}^{[l]}(\vx)=(\vone+\veta) \odot \vf_{\vtheta}^{[l]}(\vx),
\end{equation*}
and use $\vf_{\vtheta, \veta}^{[l]}(\vx)$ instead of $\vf_{\vtheta}^{[l]}(\vx)$. Here we use $\odot$ for the Hadamard product of two matrices of the same dimension. To simplify notation, we let $\veta$ denote the collection of such vectors over all layers. We denote the output of model $\vf_{\vtheta} (\vx)$ on input $\vx$ using dropout noise $\veta$ as $\vf_{\vtheta, \veta}^\mathrm{drop} (\vx)$. The empirical risk associated with the network with dropout layer $\vf_{\vtheta, \veta}^\mathrm{drop}$ is denoted by $\RS ^\mathrm{drop}\left(\vtheta, \veta\right)$, given by
\begin{equation}
    \RS ^\mathrm{drop}\left(\vtheta, \veta\right)= \frac{1}{n}\sum_{i=1}^n\ell(\vf_{\vtheta, \veta}^\mathrm{drop}(\vx_i),\vy(\vx_i)). \label{eq:RSdrop}
\end{equation}

\section{Modified Gradient Flow}\label{sec:main}
In this section, we theoretically analyze the implicit regularization effect of dropout. We derive the modified gradient flow of dropout in the sense of expectation. We first summarize the settings and provide the necessary definitions used for our theoretical results below. {\emph{Note that, the settings of our experiments are much more general.}}

\begin{setting}[\textbf{dropout structure}]\label{set:1}
    Consider an $L$-layer ($L\geq 2$) FNN with only one dropout layer after the $(L-1)$th layer of the network,  
    \begin{equation*}
        \vf_{\vtheta, \veta}^\mathrm{drop} (\vx)=\mW^{[L]}(\vone+\veta) \odot \vf^{[L-1]}_{\vtheta}(\vx)+\vb^{[L]}.
    \end{equation*}
\end{setting}

\begin{setting}[\textbf{loss function}]\label{set:2}
    Take the mean squared error (MSE) as our loss function, 
    \begin{equation*}
        \RS(\vtheta)=\frac{1}{2n}\sum_{i=1}^n(\vf(\vx_i,\vtheta)-\vy_{i})^2.
    \end{equation*}
\end{setting}

\begin{setting}[\textbf{network structure}]\label{set:3}
    For convenience, we set the model output dimension to one, i.e. $m_{L}=1$.
\end{setting}


In the following, we introduce two key terms that play an important role in our theoretical results:

\begin{equation} 
	R_1(\vtheta):=\frac{1-p}{2np}\sum_{i=1}^{n} \sum_{j=1}^{m_{L-1}}\Vert \mW^{[L]}_{j}f^{[L-1]}_{\vtheta, j}(\vx_i) \Vert^2,\label{R1} 
\end{equation}

\begin{equation}  
	R_2(\vtheta):=\frac{\varepsilon}{4}\Exp_{\veta} \Vert \nabla_{\vtheta} \RS ^\mathrm{drop}\left(\vtheta, \veta\right) \Vert ^2, \label{R2} 
\end{equation}
where $\mW^{[L]}_{j} \in  \sR^{m_L}$ is the $j$-th column of $\mW^{[L]}$, $f^{[L-1]}_{\vtheta, j}(\vx_i)$ is the $j$-th element of $\vf^{[L-1]}_{\vtheta}(\vx_i)$, $\Exp_{\veta}$ is the expectation with respect to $\veta$, and $\varepsilon$ is the learning rate. 

Based on the above settings, we obtain a modified equation based on dropout gradient flow. 

\begin{lemma}[\textbf{the expectation of dropout loss}] \label{lem:1}
    Given an $L$-layer FNN with dropout $\vf_{\vtheta, \veta}^\mathrm{drop} (\vx)$, under Setting 1--3, we have the expectation of dropout MSE: 
    \begin{align*}
     \Exp_{\veta}&(R_S^\mathrm{drop}\left(\vtheta, \veta\right))= R_S\left(\vtheta\right)+ R_1(\vtheta).
    \end{align*}

\end{lemma}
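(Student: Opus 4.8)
The plan is to evaluate $\Exp_{\veta}(R_S^{\mathrm{drop}}(\vtheta,\veta))$ by a direct, \emph{exact} moment expansion, exploiting the fact that under Setting 1 the dropout noise enters the network output only through a term that is linear in the coordinates of $\veta$. Taking $m_L=1$ (Setting 3), I would first record the decomposition
\begin{equation*}
\vf_{\vtheta,\veta}^{\mathrm{drop}}(\vx_i)=\vf_{\vtheta}(\vx_i)+\sum_{j=1}^{m_{L-1}}(\veta)_j\,\mW^{[L]}_j f^{[L-1]}_{\vtheta,j}(\vx_i),
\end{equation*}
which follows from expanding $\mW^{[L]}(\vone+\veta)\odot\vf^{[L-1]}_{\vtheta}(\vx_i)+\vb^{[L]}$ and recognizing the $\vone$-part as the clean output $\vf_{\vtheta}(\vx_i)$. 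Thus the dropout residual exceeds the clean residual $\ve_{\vtheta,i}$ by exactly the linear noise term above. I would abbreviate $a_{ij}:=\mW^{[L]}_j f^{[L-1]}_{\vtheta,j}(\vx_i)$ to keep the bookkeeping clean.

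The next step is to pin down the two moments of the noise. Since the coordinates of $\veta$ are i.i.d.\ and mean-zero (as already noted in the Dropout subsection), I would verify $\Exp_{\veta}[(\veta)_j]=0$ and compute the second moment
\begin{equation*}
\Exp_{\veta}\big[(\veta)_j(\veta)_k\big]=\frac{1-p}{p}\,\delta_{jk},
\end{equation*}
where the diagonal value comes from $p\big(\tfrac{1-p}{p}\big)^2+(1-p)(-1)^2=\tfrac{1-p}{p}$, and the off-diagonal vanishing follows immediately from independence combined with the zero-mean property.

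With these in hand, I would expand the squared dropout residual of each sample as $\ve_{\vtheta,i}^2+2\,\ve_{\vtheta,i}\sum_j a_{ij}(\veta)_j+\big(\sum_j a_{ij}(\veta)_j\big)^2$ and take the expectation termwise. The first term is deterministic and reproduces $R_S(\vtheta)$ after the $\tfrac{1}{2n}$ averaging; the cross term vanishes by mean-zero; and the quadratic term collapses via the second-moment identity to $\tfrac{1-p}{p}\sum_j a_{ij}^2$, since only the diagonal $j=k$ contributions survive. Reassembling the $\tfrac{1-p}{2np}$ prefactor and using $\Vert a_{ij}\Vert^2=a_{ij}^2$ (because $m_L=1$) yields precisely $R_1(\vtheta)$, giving the claimed identity.

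I do not anticipate a serious obstacle: the argument is an exact computation rather than an approximation, so no learning-rate truncation is involved. The only points requiring care are (i) getting the second moment $\tfrac{1-p}{p}$ exactly right, since any factor error there would corrupt the coefficient of $R_1$, and (ii) confirming that the off-diagonal covariances genuinely vanish, so that no cross-neuron products $a_{ij}a_{ik}$ (with $j\neq k$) contribute — this is exactly where the i.i.d.\ structure of $\veta$ is essential. It is worth emphasizing that, in contrast to the SGD modified-flow derivations cited earlier, this identity is exact in $\veta$ and carries no $O(\varepsilon)$ remainder, which is why the regularization term $R_1$ persists even in the vanishing-learning-rate limit.
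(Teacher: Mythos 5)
Your proposal is correct and follows essentially the same route as the paper's proof: decompose the dropout output into the clean output plus a term linear in $\veta$, kill the cross term using $\Exp_{\veta}[\veta]=\mzero$, and collapse the quadratic term using $\Exp[(\veta)_j(\veta)_k]=\frac{1-p}{p}\delta_{jk}$ to recover $R_1(\vtheta)$. The only difference is cosmetic — you explicitly verify the second-moment value $p\bigl(\tfrac{1-p}{p}\bigr)^2+(1-p)=\tfrac{1-p}{p}$, which the paper states without computation.
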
 

Based on the above lemma, we proceed to study the discrete iterate training of gradient descent with dropout, resulting in the derivation of the modified gradient flow of dropout training. 

\textbf{Modified gradient flow of dropout.}\label{thm:1} Under Setting 1--3, the mean iterate of $\vtheta$, with a learning rate $\varepsilon \ll 1$, stays close to the path of gradient flow on a modified loss $\dot{\vtheta}=-\nabla_{\vtheta}\tilde{R}_S^\mathrm{drop}(\vtheta,\veta)$, where the modified loss $\tilde{R}_S^{\mathrm{drop}}(\vtheta,\veta)$ satisfies:
\begin{equation}
    \Exp_{\veta}\tilde{R}_S^{\mathrm{drop}}(\vtheta,\veta)\approx\RS \left(\vtheta\right)+ R_1(\vtheta)+R_2(\vtheta).\label{equ:equal}
\end{equation}

Contrary to SGD \cite{smith2020origin}, the $R_1(\vtheta)$ term is independent of the learning rate $\varepsilon$, thus the implicit regularization of dropout still affects the gradient flow even as the learning rate $\varepsilon$ approaches zero. In Section~\ref{sec:conde}, we show the $R_1(\vtheta)$ term makes the network tend to find solutions with lower complexity, that is, solutions with weight condensation, which is also illustrated and supported numerically. In Section~\ref{sec:flat}, we show the $R_1(\vtheta)$ term plays a more important role in improving the generalization and flatness of the model than the $R_2(\vtheta)$ term, which explicitly aims to find a flatter solution.




\section{Numerical Verification of Implicit Regularization Terms}
In this section, we numerically verify the validity of two implicit regularization terms, i.e., $R_1(\vtheta)$ defined in Equation~(\ref{R1}) and $R_2(\vtheta)$ defined in Equation~(\ref{R2}), under more general settings than out theoretical results. The detailed experimental settings can be found in Appendix \ref{app:1}.

\subsection{Validation of the Effect of $\texorpdfstring{R_1(\vtheta)}{E=energy, m=mass, c=speed\ of\ light}$}

\begin{figure}[h]
	\centering
	\subfloat[]{\includegraphics[width=0.4\textwidth]{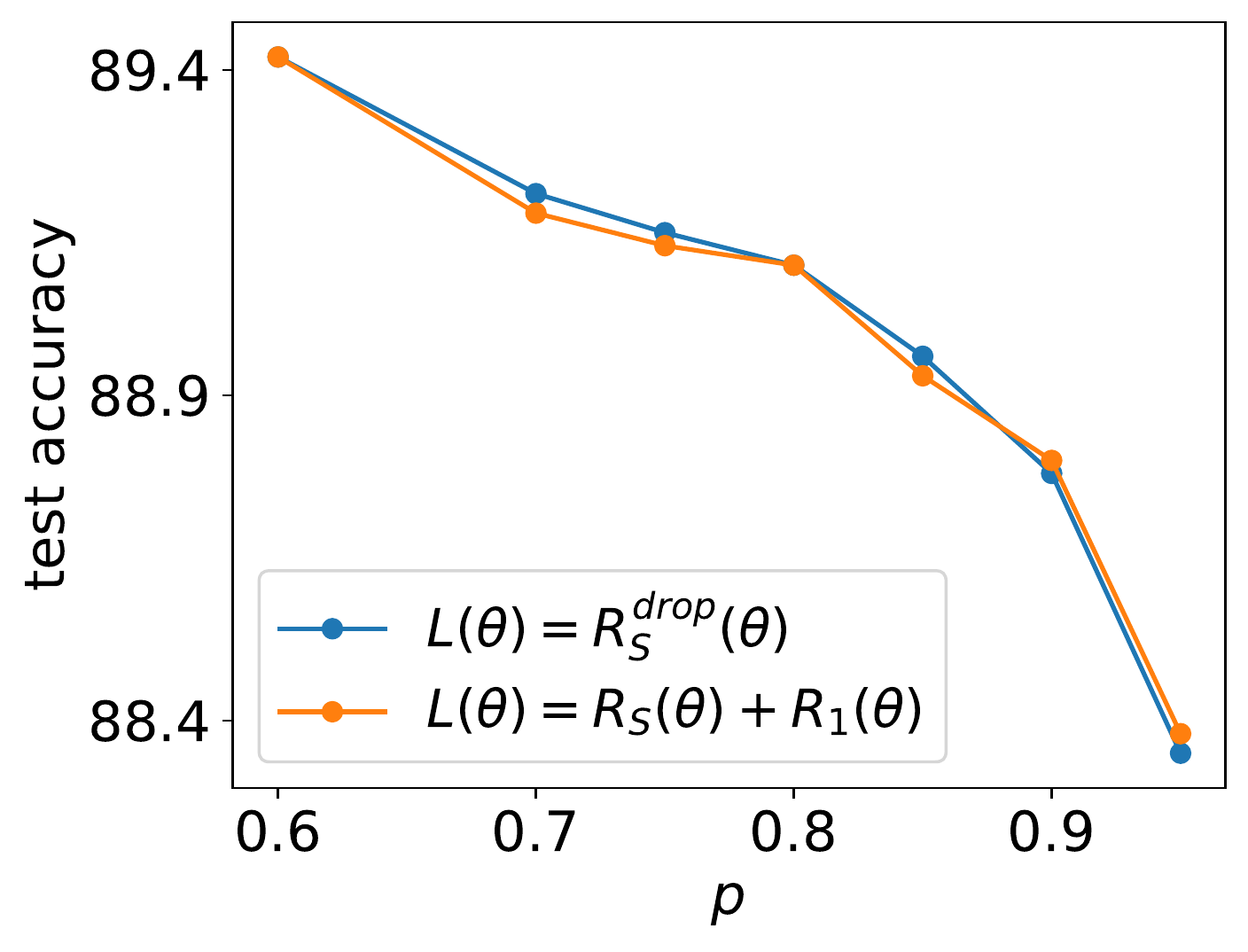}}
	\subfloat[]{\includegraphics[width=0.4\textwidth]{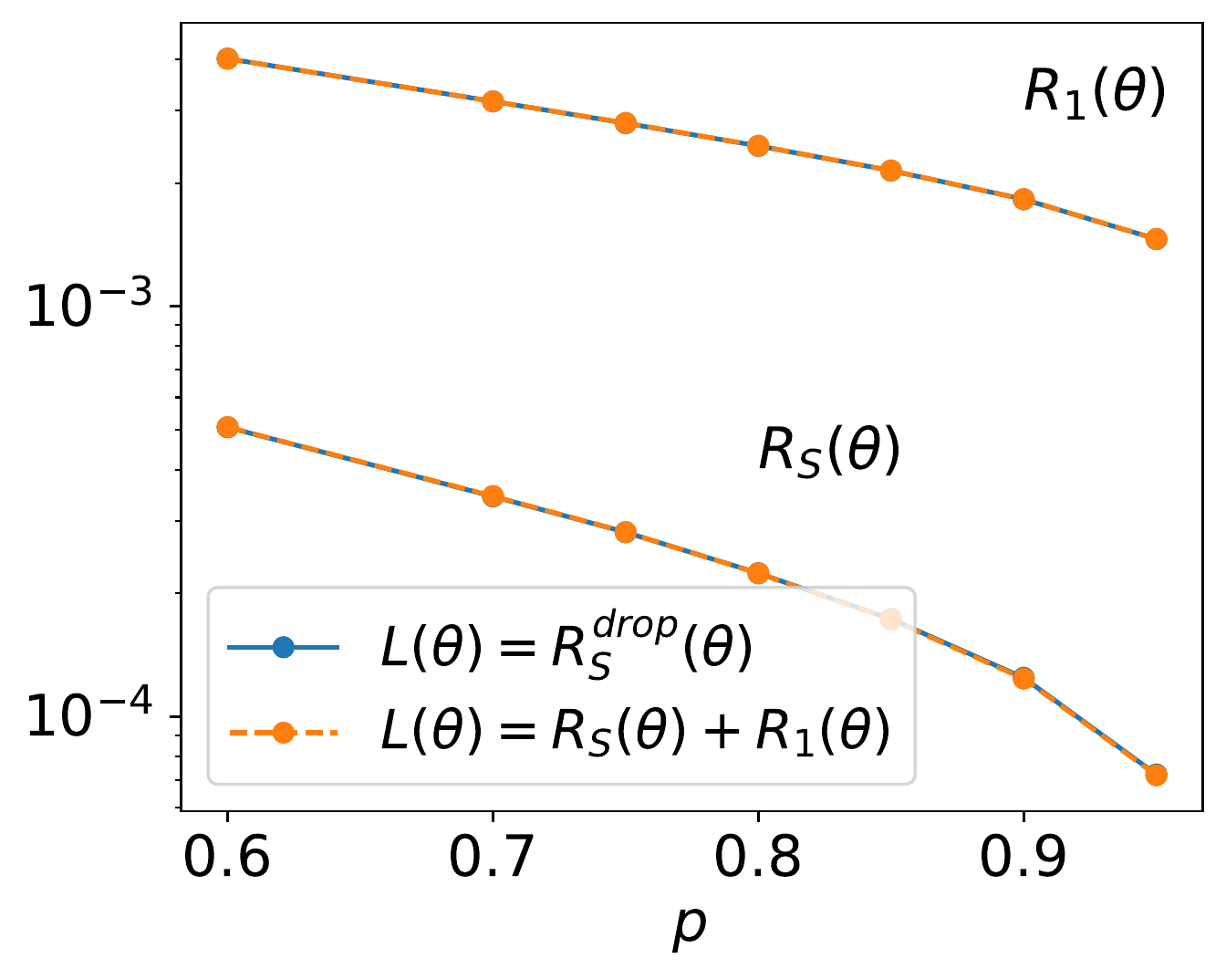}}
    \caption{Two-layer NNs of width 1000 for the classification of the first 1000 images of MNIST dataset, utilizing two distinct loss functions: $\RS^\mathrm{drop}(\vtheta,\veta)$ and $\RS \left(\vtheta\right)+R_1\left(\vtheta\right)$. To study the impact of different dropout rates on the performance of the networks, we conduct experiments with varying dropout rates while maintaining a constant learning rate of $\varepsilon=5\times 10^{-3}$. (a) The test accuracy. (b) The value of $\RS^\mathrm{drop}(\vtheta,\veta)$ and $R_1\left(\vtheta\right)$.  \label{fig:R1}}
\end{figure} 

As $R_1(\vtheta)$ is independent of the learning rate and $R_2(\vtheta)$ vanishes in the limit of zero learning rate, we select a small learning rate to verify the validity of $R_1(\vtheta)$. According to Equation (\ref{equ:equal}), the modified equation of dropout training dynamics can be approximated by $\RS \left(\vtheta\right)+R_1\left(\vtheta\right)$ when the learning rate $\varepsilon$ is sufficiently small. Therefore, we verify the validity of $R_1(\vtheta)$ through the similarity of the NN trained by the two loss functions, i.e., $\RS^\mathrm{drop}(\vtheta,\veta)$ and $\RS \left(\vtheta\right)+R_1\left(\vtheta\right)$ under a small learning rate. 

Fig. \ref{fig:R1}(a) presents the test accuracy of two losses trained under different dropout rates. For the network trained with $\RS \left(\vtheta\right)+R_1\left(\vtheta\right)$, there is no dropout layer, and the dropout rate affects the weight of $R_1\left(\vtheta\right)$ in the loss function. For different dropout rates, the networks obtained by the two losses above exhibit similar test accuracy. It is worth mentioning that for the network trained with $\RS \left(\vtheta\right)$, the obtained accuracy is only $79\%$, which is significantly lower than the accuracy of the network trained through the two loss functions above (over $88\%$ in Fig. \ref{fig:R1}(a)). In Fig. \ref{fig:R1}(b), we show the values of $\RS \left(\vtheta\right)$ and $R_1\left(\vtheta\right)$ for the two networks at different dropout rates. Note that for the network obtained by $\RS^\mathrm{drop}(\vtheta,\veta)$ training, we can calculate the two terms through the network's parameters. It can be seen that for different dropout rates, the values of $\RS \left(\vtheta\right)$ and $R_1\left(\vtheta\right)$ of the two networks are almost indistinguishable.

\begin{figure}[h]
	\centering
	\subfloat[]{\includegraphics[width=0.4\textwidth]{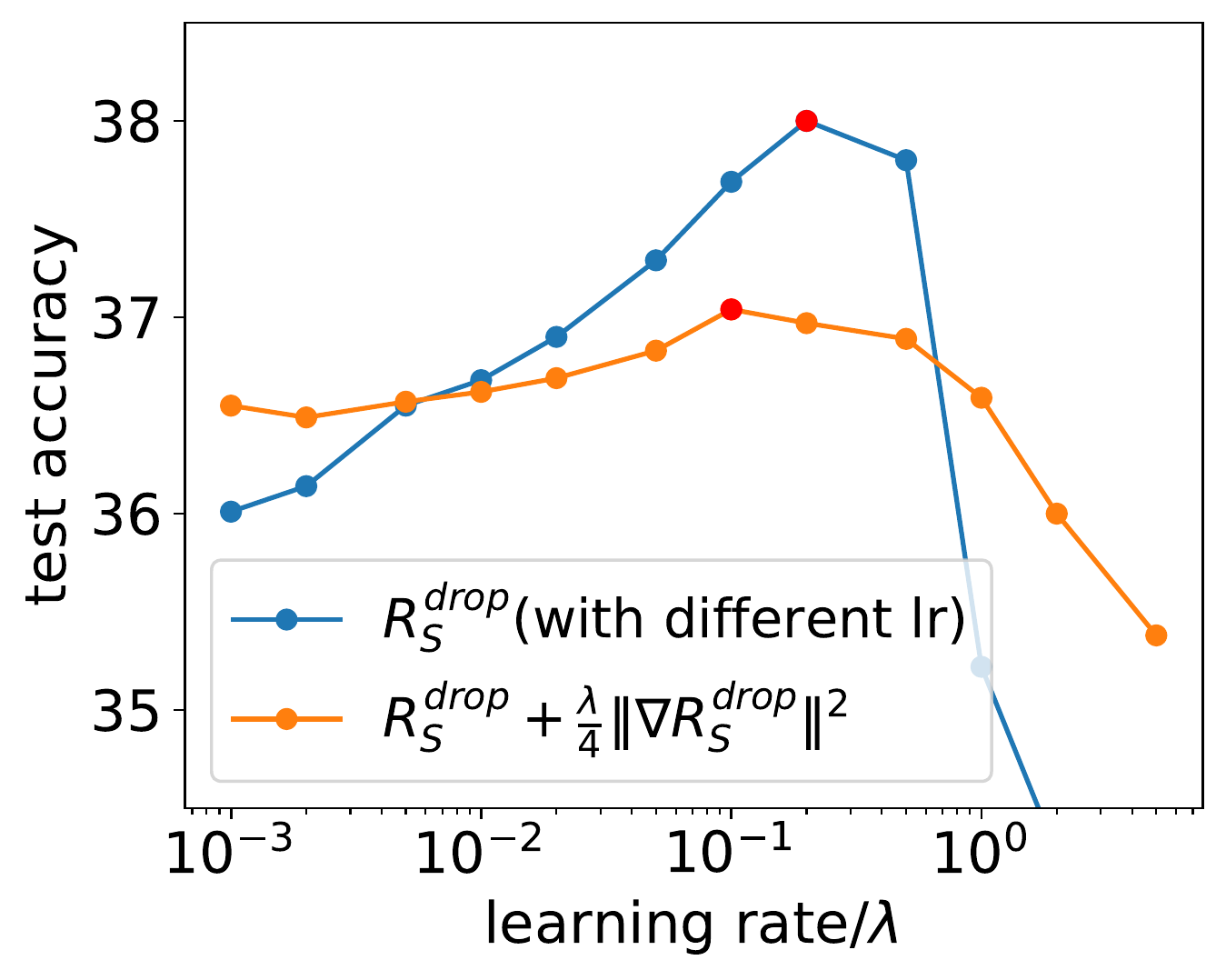}}
	\subfloat[]{\includegraphics[width=0.4\textwidth]{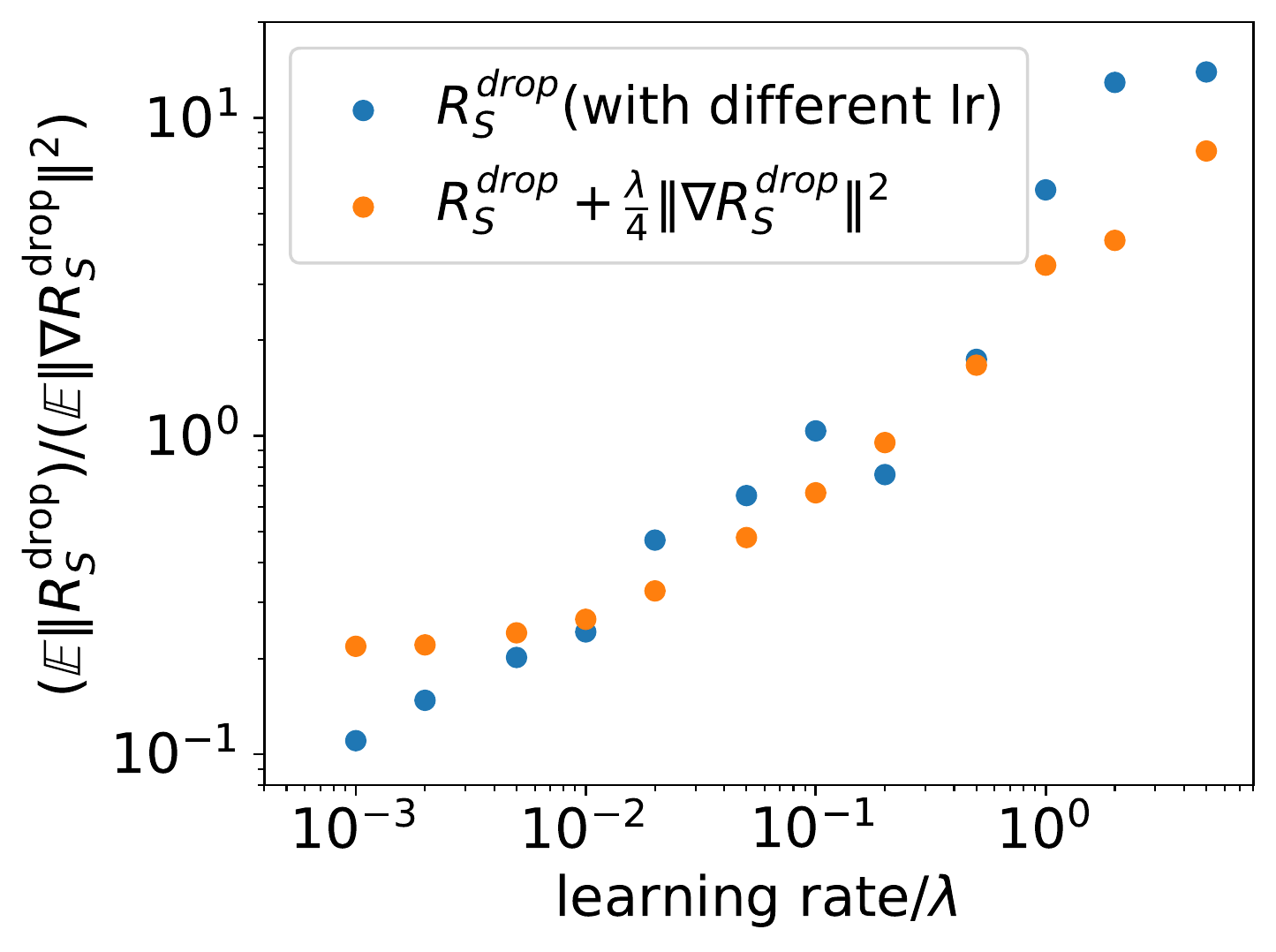}}
    \caption{Classify the first 1000 images of CIFAR-10 by training VGG-9 under a specific loss function by GD. For loss function $\RS ^\mathrm{drop}\left(\vtheta, \veta\right)$, we train the NNs with various learning rates $\varepsilon$. For loss function $\RS ^\mathrm{drop}\left(\vtheta, \veta\right)+(\lambda/4)\Vert \nabla_{\vtheta} \RS ^\mathrm{drop}\left(\vtheta, \veta\right) \Vert ^2$, we train the NNs with various regularization coefficient $\lambda$,  while keeping the learning rate fixed at a small value of $\varepsilon=5\times 10^{-3}$.
    (a) The test accuracy of the network under different learning rates and regularization coefficients. The red dots indicate the location of the maximum test accuracy of the NNs obtained by training with both two loss functions. (b) The $(\Exp_{\veta}\left\|R_{S}^\mathrm{drop}\left(\vtheta, \veta\right)\right\|)/(\Exp_{\veta}\left\|\nabla_{\vtheta} R_{S}^\mathrm{drop}\left(\vtheta, \veta\right)\right\|^{2})$ value of the resulting model in (a) under different learning rates $\varepsilon$ or regularization coefficients $\lambda$.	\label{fig:R2}}
\end{figure} 

\subsection{Validation of the Effect of $\texorpdfstring{R_2(\vtheta)}{E=energy, m=mass, c=speed\ of\ light}$}
 
As shown in Theorem \ref{thm:1}, the modified loss $\tilde{R}_S^{\mathrm{drop}}(\vtheta,\veta)$ satisfies the equation:
\begin{equation*}
    \Exp_{\veta}\tilde{R}_S^{\mathrm{drop}}(\vtheta,\veta)=\Exp_{\veta}\left( \RS ^\mathrm{drop}\left(\vtheta, \veta\right)+\frac{\varepsilon}{4}\Vert \nabla_{\vtheta} \RS ^\mathrm{drop}\left(\vtheta, \veta\right) \Vert ^2\right).
\end{equation*}
In order to validate the effect of $R_2(\vtheta)$ in the training process, we verify the equivalence of the following two training methods: (i) training networks with a dropout layer by MSE $\RS ^\mathrm{drop}\left(\vtheta, \veta\right)$ with different learning rates $\varepsilon$; (ii) training networks with a dropout layer by MSE with an explicit regularization: 
$$\RS ^\mathrm{regu}\left(\vtheta, \veta\right):=\RS ^\mathrm{drop}\left(\vtheta, \veta\right)+(\lambda/4)\Vert \nabla_{\vtheta} \RS ^\mathrm{drop}\left(\vtheta, \veta\right) \Vert ^2 $$ 
with different values of $\lambda$ and a fixed learning rate much smaller than $\varepsilon$. The exact form of $R_2(\vtheta)$ has an expectation with respect to $\veta$, but in this subsection, we ignore this expectation in experiments for convenience. 

As shown in Fig. \ref{fig:R2}, we train the NNs by the MSE $\RS ^\mathrm{drop}\left(\vtheta, \veta\right)$ with different learning rates (blue), and the regularized MSE $\RS ^\mathrm{regu}\left(\vtheta, \veta\right)$ with a fixed small learning rate and different values of $\lambda$ (orange). In Fig. \ref{fig:R2}(a), the learning rate $\varepsilon$ and the regularization coefficient $\lambda$ are close when they reach their corresponding maximum test accuracy (red point). In addition, as shown in Fig. \ref{fig:R2}(b), we study the value of $\Exp_{\veta}\left\|R_{S}^\mathrm{drop}\left(\vtheta, \veta\right)\right\|/\Exp_{\veta}\left\|\nabla_{\vtheta} R_{S}^\mathrm{drop}\left(\vtheta, \veta\right)\right\|^{2}$ under different learning rates (blue) and regularization coefficients (orange). In practical experiments, we take 3000 different dropout noises $\veta$ to approximate the expectation after the training process. The results indicate that the same learning rate $\varepsilon$ and regularization coefficient $\lambda$ result in similar ratios.

Due to the computational cost of full-batch GD, we only use a few training samples in the above experiments. We conduct similar experiments with dropout under different learning rates and regularization coefficients using SGD as detailed in Appendix \ref{app:R2_sgd}.

\section{Dropout Facilitates Condensation} \label{sec:conde}
A condensed network, which refers to a network with neurons having aligned input weights, is equivalent to another network with a reduced width \cite{zhou2021towards,luo2021phase}. Therefore, the effective complexity of the network is smaller than its superficial appearance. Such low effective complexity may be an underlying reason for good generalization. In addition, the embedding principle  \cite{zhang2021embedding,zhang2022embedding,bai2022embedding} shows that although the condensed network is equivalent to a smaller one in the sense of approximation, it has more degeneracy and more descent directions that may lead to a simpler training process. 

In this section, we experimentally and theoretically study the effect of dropout on the condensation phenomenon.

\subsection{Experimental Results}

To empirically validate the effect of dropout on condensation, we examine ReLU and tanh activations in one-dimensional and high-dimensional fitting problems, as well as image classification problems. Due to space limitations, some experimental results and detailed experimental settings are left in Appendices \ref{app:1}, \ref{app:exp}.

\subsubsection{Network with One-dimensional Input}
We train a tanh NN with 1000 hidden neurons for the one-dimensional fitting problem to fit the data shown in Fig. \ref{pic:condense_tanh} with MSE. Additional experimental verifications on ReLU NNs are provided in Appendix \ref{app:exp}. The experiments performed with and without dropout under the same initialization can both well fit the training data. In order to clearly study the effect of dropout on condensation, we take the parameter initialization distribution in the linear regime \cite{luo2021phase}, where condensation does not occur without additional constraints. The dropout layer is used after the hidden layer of the two-layer network (top row) and used between the hidden layers and after the last hidden layer of the three-layer network (bottom row). Upon close inspection of the fitting process, we find that the output of NNs trained without dropout in Fig. \ref{pic:condense_tanh}(a, e) has much more oscillation than the output of NNs trained with dropout in Fig. \ref{pic:condense_tanh}(b, f). To better understand the underlying effect of dropout, we study the feature of parameters.

\begin{figure*}[h]
	\centering
	\subfloat[$p=1$, output]{\includegraphics[width=0.24\textwidth]{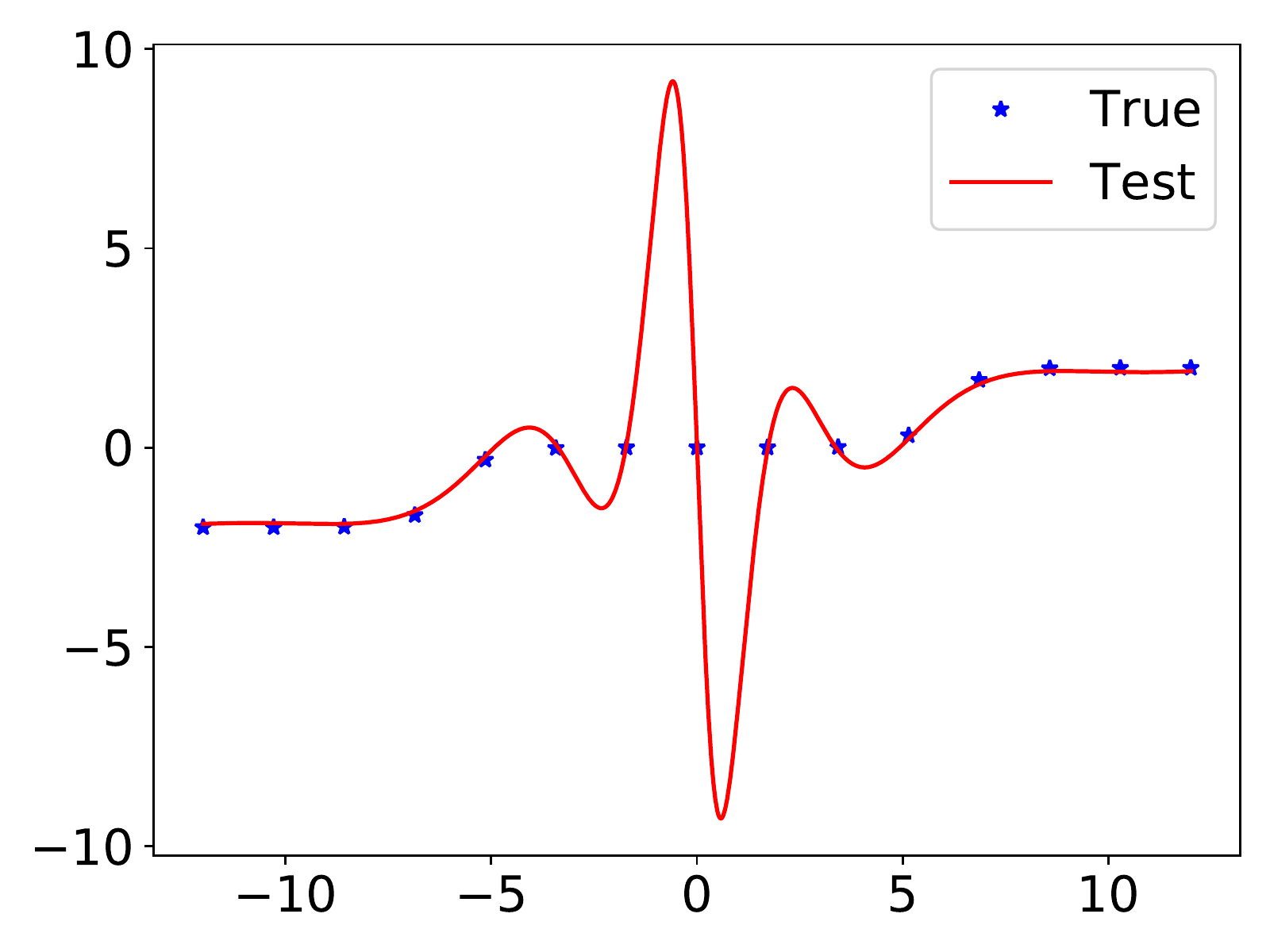}}
	\subfloat[$p=0.9$, output]{\includegraphics[width=0.24\textwidth]{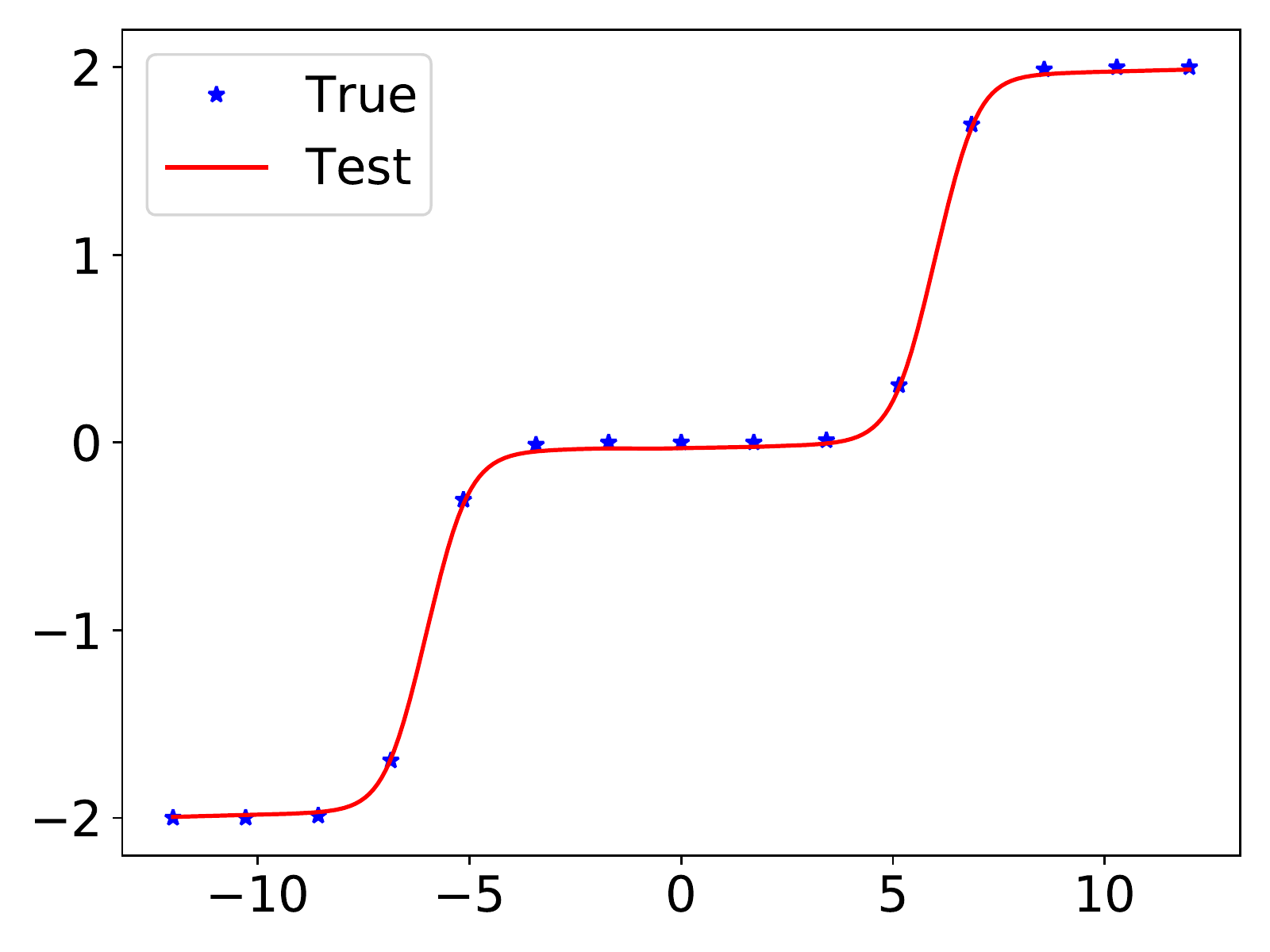}}
	\subfloat[$p=1$, feature]{\includegraphics[width=0.24\textwidth]{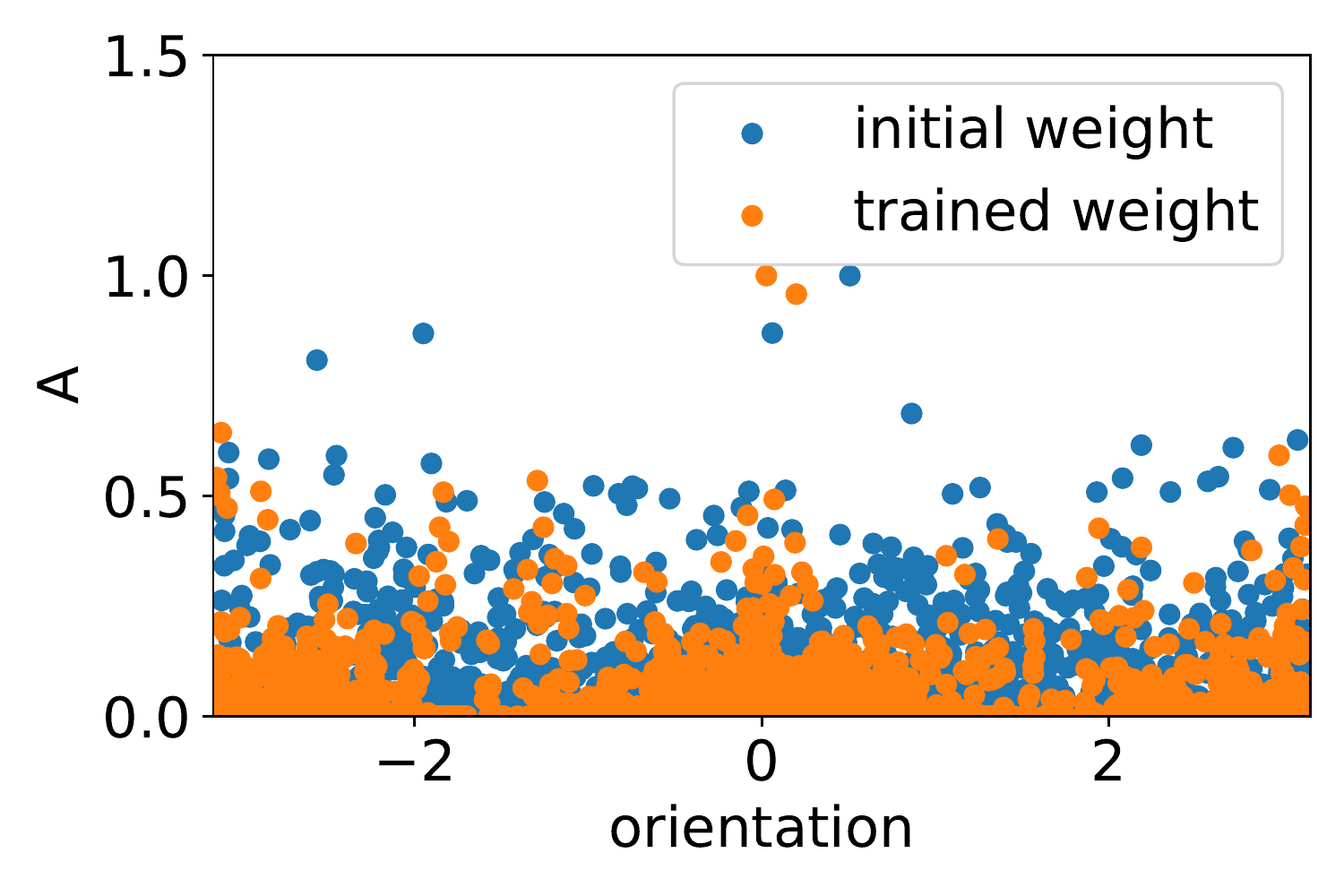}}
	\subfloat[$p=0.9$, feature]{\includegraphics[width=0.24\textwidth]{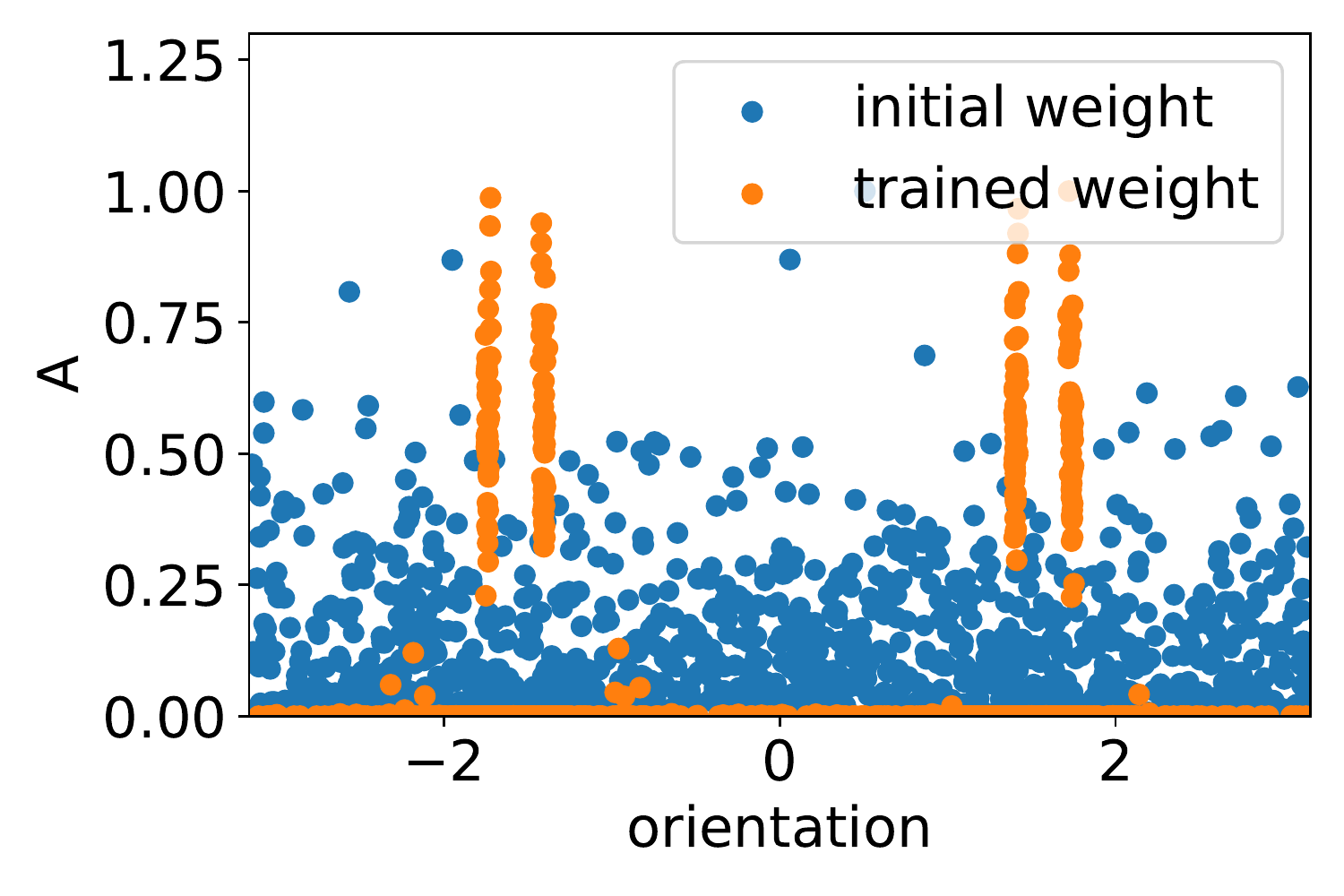}}\\
	\subfloat[$p=1$, output]{\includegraphics[width=0.24\textwidth]{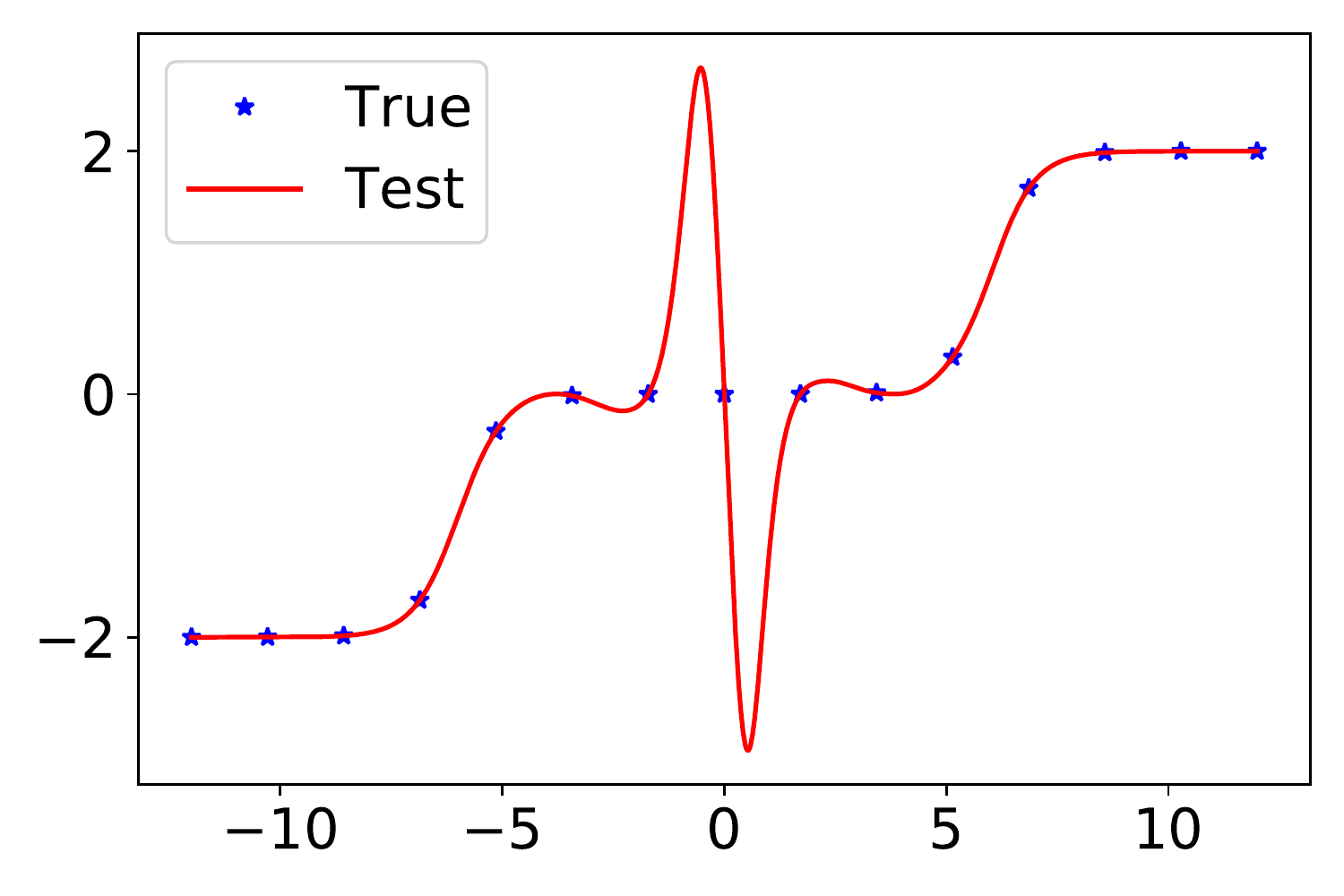}}
	\subfloat[$p=0.9$, output]{\includegraphics[width=0.24\textwidth]{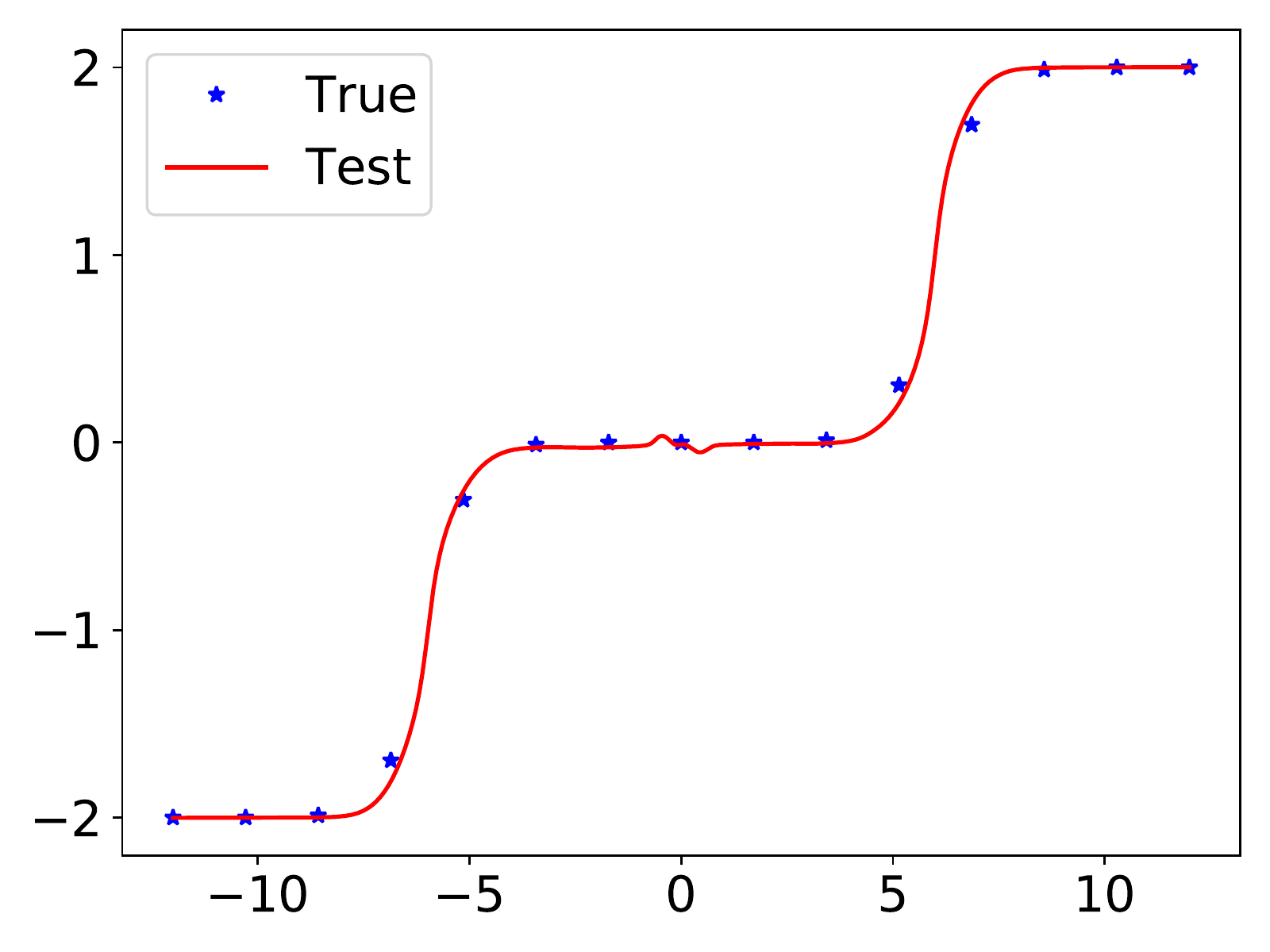}}
	\subfloat[$p=1$, feature]{\includegraphics[width=0.24\textwidth]{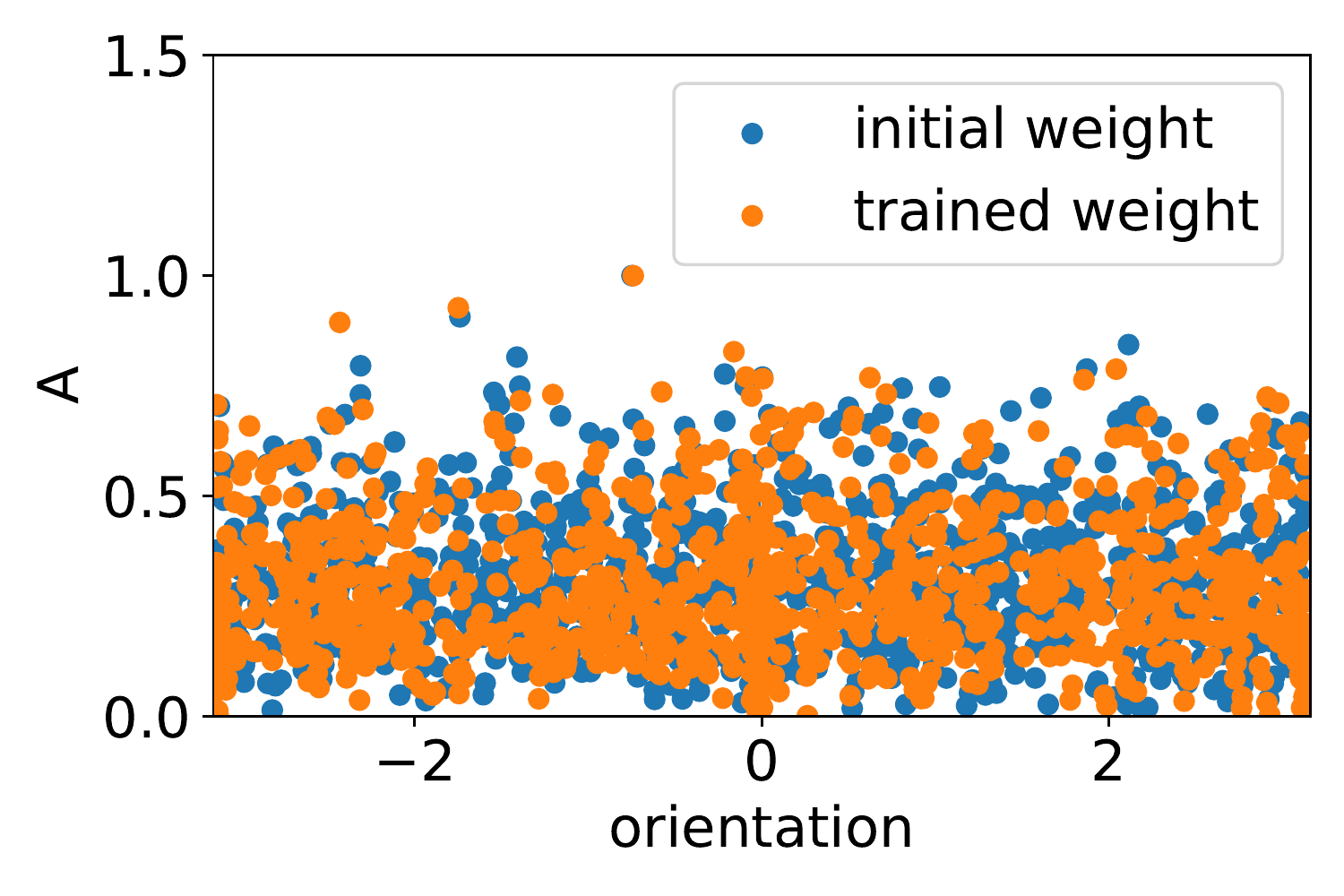}}
	\subfloat[$p=0.9$, feature]{\includegraphics[width=0.24\textwidth]{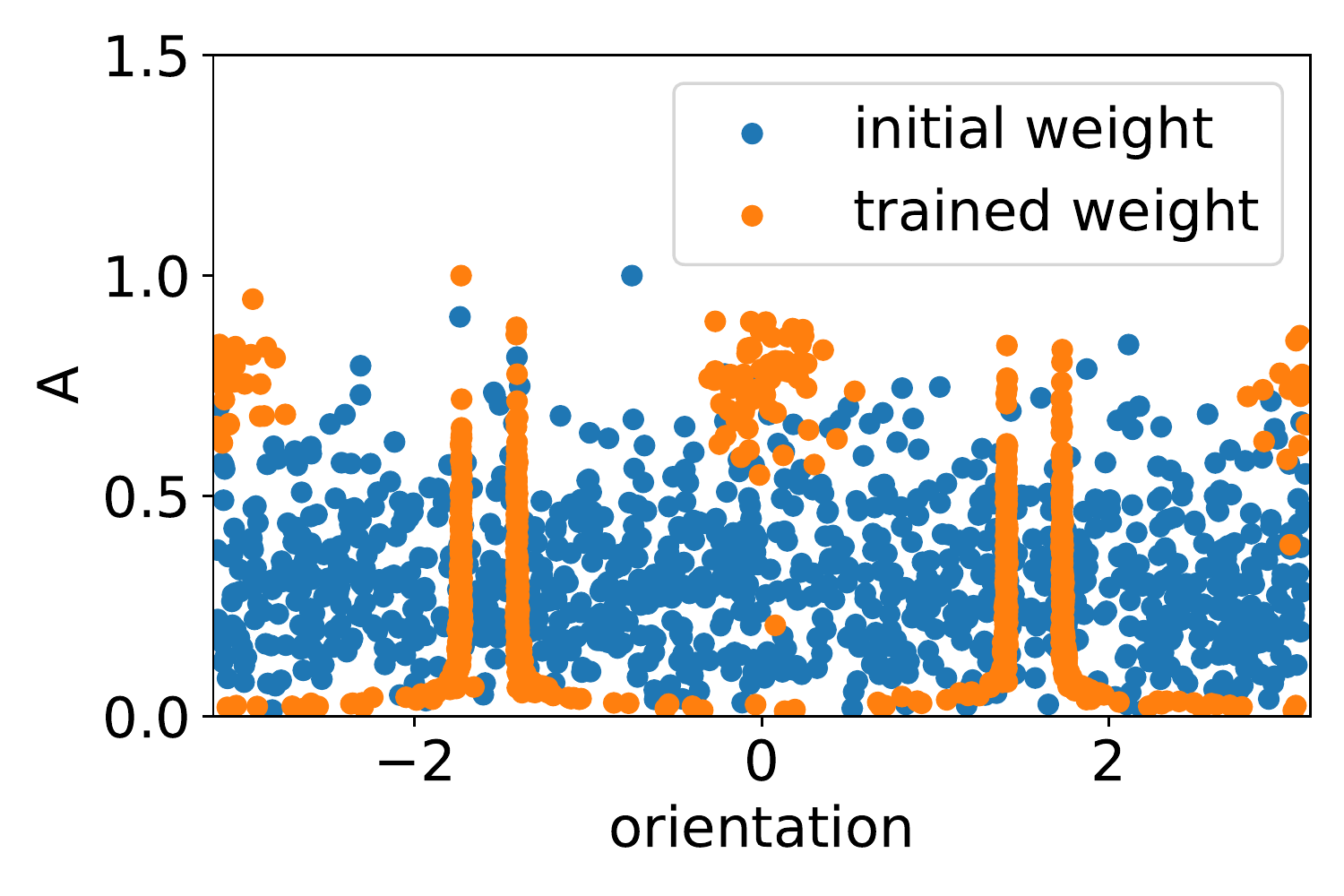}}

  \caption{ tanh NNs outputs and features under different dropout rates. The width of the hidden layers is $1000$, and the learning rate for different experiments is $1\times10^{-3}$. In (c,d,g,h), blue dots and orange dots are for the weight feature distribution at the initial and final training stages, respectively. The top row is the result of two-layer networks, with the dropout layer after the hidden layer. The bottom row is the result of three-layer networks, with the dropout layer between the two hidden layers and after the last hidden layer. Refer to Appendix \ref{app:relu_further} for further experiments on ReLU NNs. \label{pic:condense_tanh}}
\end{figure*} 


The parameter pair $(a_j,\vw_j)$ of each neuron can be separated into a unit orientation feature $\hat{\vw}_j=\vw_j/\norm{\vw_j}_{2}$ and an amplitude $A_j=|a_j|\norm{\vw_j}_{2}$ indicating its contribution\footnote{Due to the homogeneity of ReLU neurons, this amplitude can accurately describe the contribution of ReLU neurons. For tanh neurons, the amplitude has a certain positive correlation with the contribution of each neuron.} to the output, i.e., $(\hat{\vw}_j,A_j)$. For a one-dimensional input, $\vw_j$ is two-dimensional due to the incorporation of bias. Therefore, we use the angle to the $x$-axis in $[-\pi,\pi)$ to indicate the orientation of each $\hat{\vw}_j$. For simplicity, for the three-layer network with one-dimensional input, we only consider the input weight of the first hidden layer. The scatter plots of $\{(\hat{\vw}_j,|a_j|)\}_{j=1}^{m}$ and $\{(\hat{\vw}_j,\norm{\vw_j}_2)\}_{j=1}^{m}$ of tanh activation are presented in Appendix \ref{app:tanh_further} to eliminate the impact of the non-homogeneity of tanh activation.

The scatter plots of $\{(\hat{\vw}_j,A_j)\}_{j=1}^{m}$ of the NNs are shown in Fig.~\ref{pic:condense_tanh}(c,d,g,h). For convenience, we normalize the feature distribution of each model parameter such that the maximum amplitude of neurons in each model is $1$. Compared with the initial weight distribution (blue), the weight trained without dropout (orange) is close to its initial value. However, for the NNs trained with dropout, the parameters after training are significantly different from the initialization, and the non-zero parameters tend to condense on several discrete orientations, showing a condensation tendency.

In addition, we study the stability of the model trained with the loss function $R_S(\vtheta)$ under the two loss functions $R_S^{\mathrm{drop}}(\vtheta)$ and $R_S(\vtheta)+R_1(\vtheta)$. As shown in the left panel of Fig. \ref{pic:add_dropout_tanh}, we use $R_S(\vtheta)$ as the loss function to train the model before the dashed line where  when $R_S(\vtheta)$ is small, and we then replace the loss function by $R_S^{\mathrm{drop}}(\vtheta)$ or $R_S(\vtheta)+R_1(\vtheta)$. The outputs and features of the models trained with these three loss functions are shown in the middle and right panels of Fig. \ref{pic:add_dropout_tanh}, respectively. The results reveal that dropout ($R_1(\vtheta)$ term) aids the training process in escaping from the minima obtained by $R_S(\vtheta)$ training and finding a condensed solution.

\begin{figure}[h]
	\centering
	\includegraphics[width=0.9\textwidth]{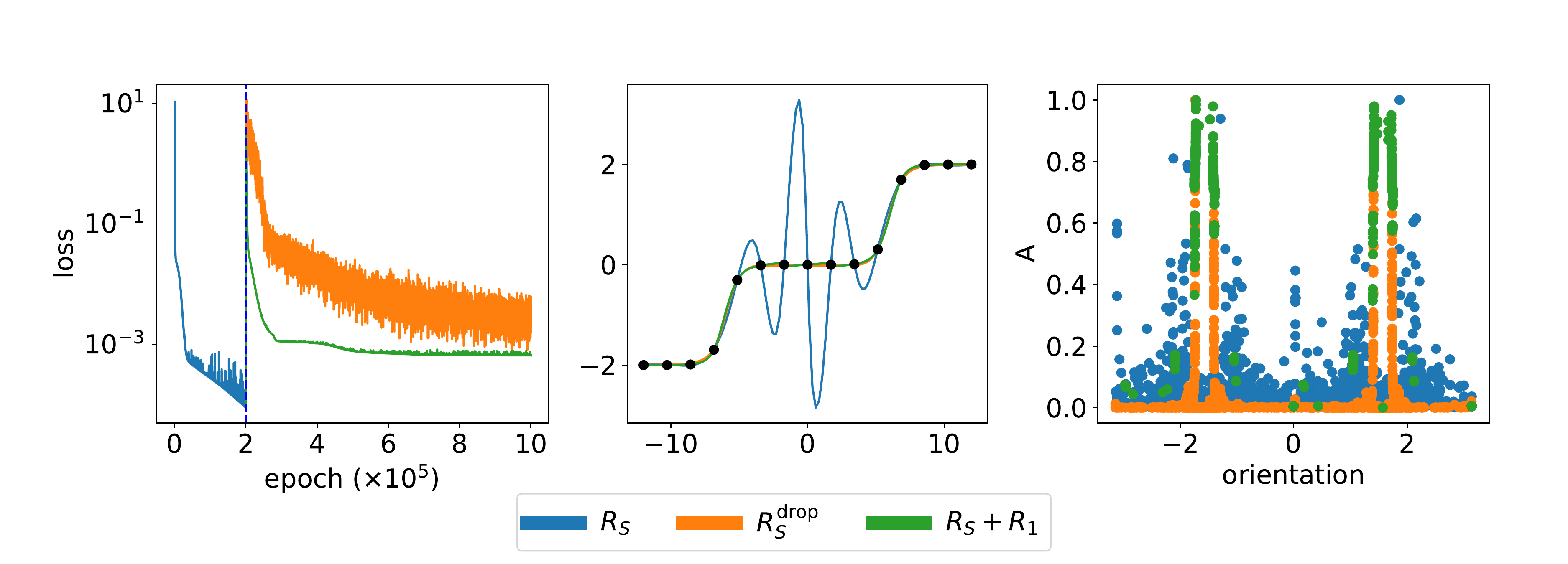}

  \caption{Compared dynamics are initialized at model found by $R_S(\vtheta)$, marked by the vertical dashed line in iteration 200000 with two-layer tanh NN. Left: The loss trajectory under different loss functions. MIddle: The output of the model trained by $R_S(\vtheta)$ (blue) and the model trained by $R_S^{\mathrm{drop}}(\vtheta)$ (orange) and $R_S(\vtheta)+R_1(\vtheta)$ (green) initialized at model found by $R_S(\vtheta)$. The black points are the target points. Right: The feature of the model trained by $R_S(\vtheta)$ (blue) and the model trained by $R_S^{\mathrm{drop}}(\vtheta)$ (orange) and $R_S(\vtheta)+R_1(\vtheta)$ (green) initialized at model found by $R_S(\vtheta)$.\label{pic:add_dropout_tanh}}
\end{figure}

One may wonder if any noise injected into the training process could lead to condensation. We also perform similar experiments for SGD. As shown in Fig. \ref{pic:sgd_condense_tanh}, no significant condensation occurs even in the presence of noise during training. Therefore, the experiments in this section reveal the special characteristic of dropout that facilitate condensation.

\begin{figure}[h]
	\centering
	\subfloat[batch size $=2$, output]{\includegraphics[width=0.3\textwidth]{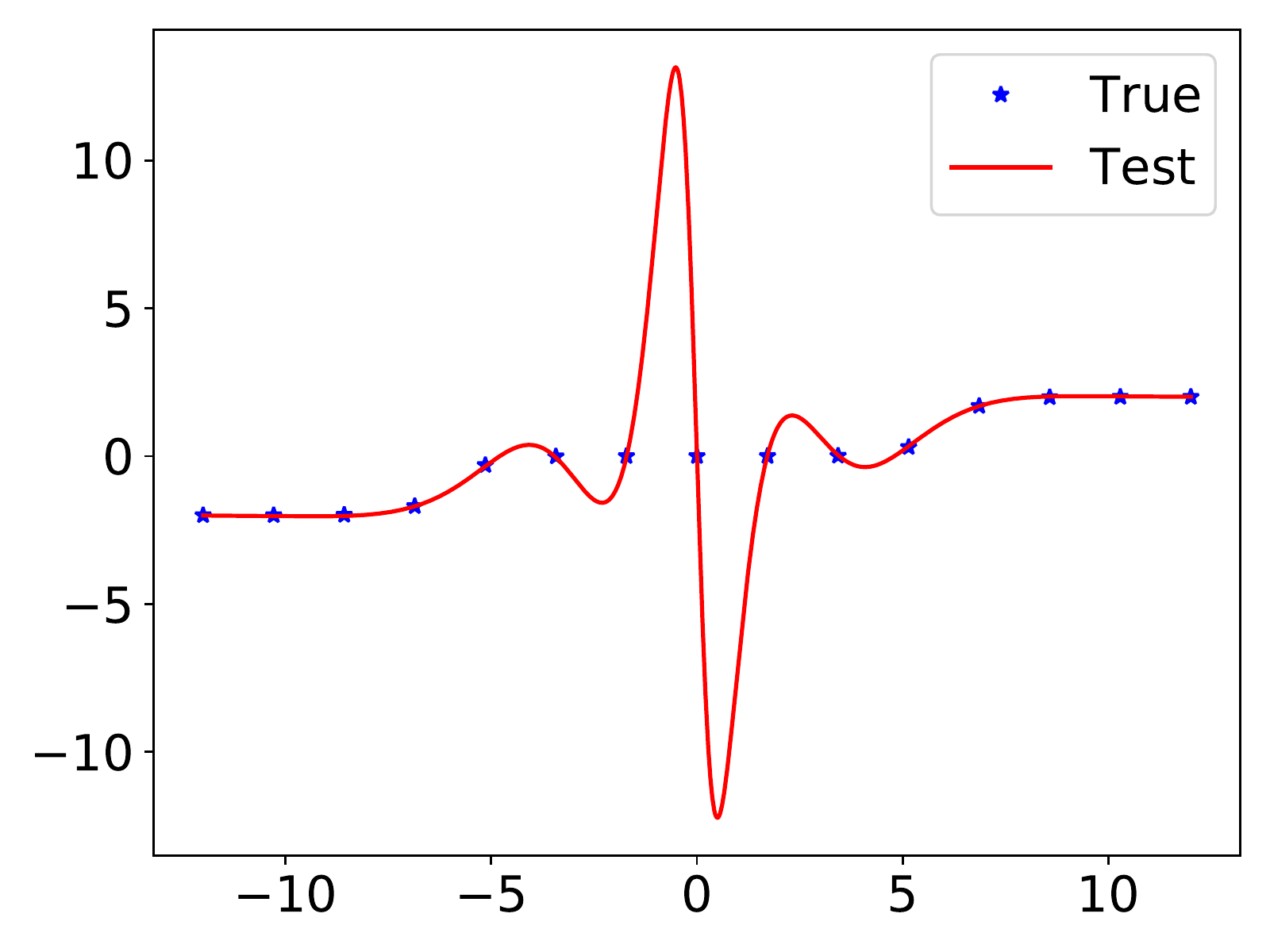}}
	\subfloat[batch size $=2$, feature]{\includegraphics[width=0.3\textwidth]{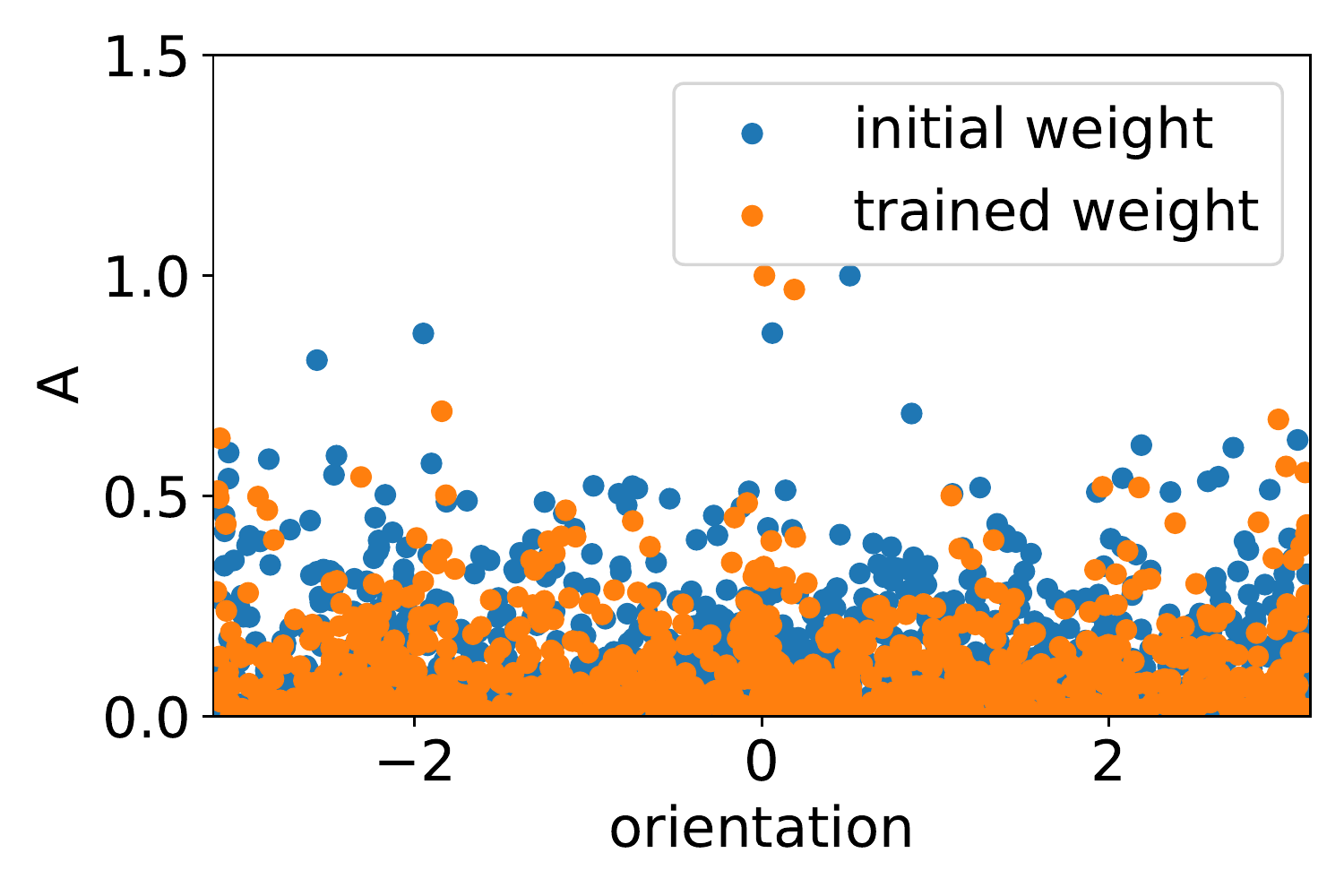}}

  \caption{Two-layer tanh NN output and feature with a batch size of 2. The width of the hidden layer is $1000$, and the learning rate is $1\times10^{-3}$. In (b), blue dots and orange dots are for the weight feature distribution at the initial and final training stages, respectively. \label{pic:sgd_condense_tanh}}
\end{figure} 
\subsubsection{Network with High-dimensional Input}
We conducted further investigation into the effect of dropout on high-dimensional two-layer tanh NNs under the teacher-student setting. Specifically, we utilize a two-layer tanh NN with only one hidden neuron and 10-dimensional input as the target function. The orientation similarity of two neurons is calculated by taking the inner product of their normalized weights. As shown in Fig. \ref{fig:condense_nd}(a,b), for the NN with dropout, the neurons of the network have only two orientations, indicating the occurrence of condensation, while the NN without dropout does not exhibit such a phenomenon.

\begin{figure*}[h]
	\centering
	\subfloat[$p=0.5$, feature]{\includegraphics[width=0.24\textwidth]{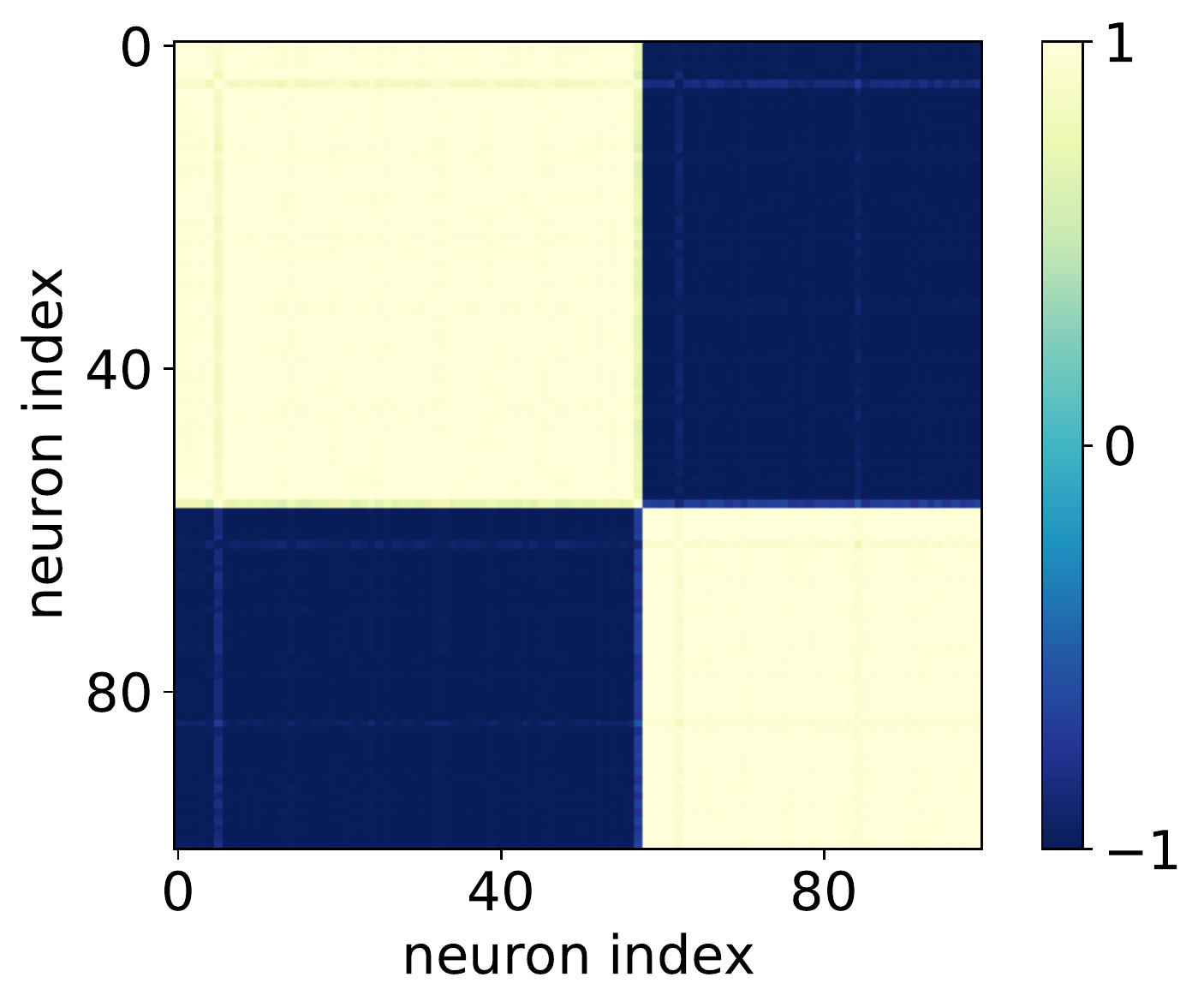}}
	\subfloat[$p=1$, feature]{\includegraphics[width=0.24\textwidth]{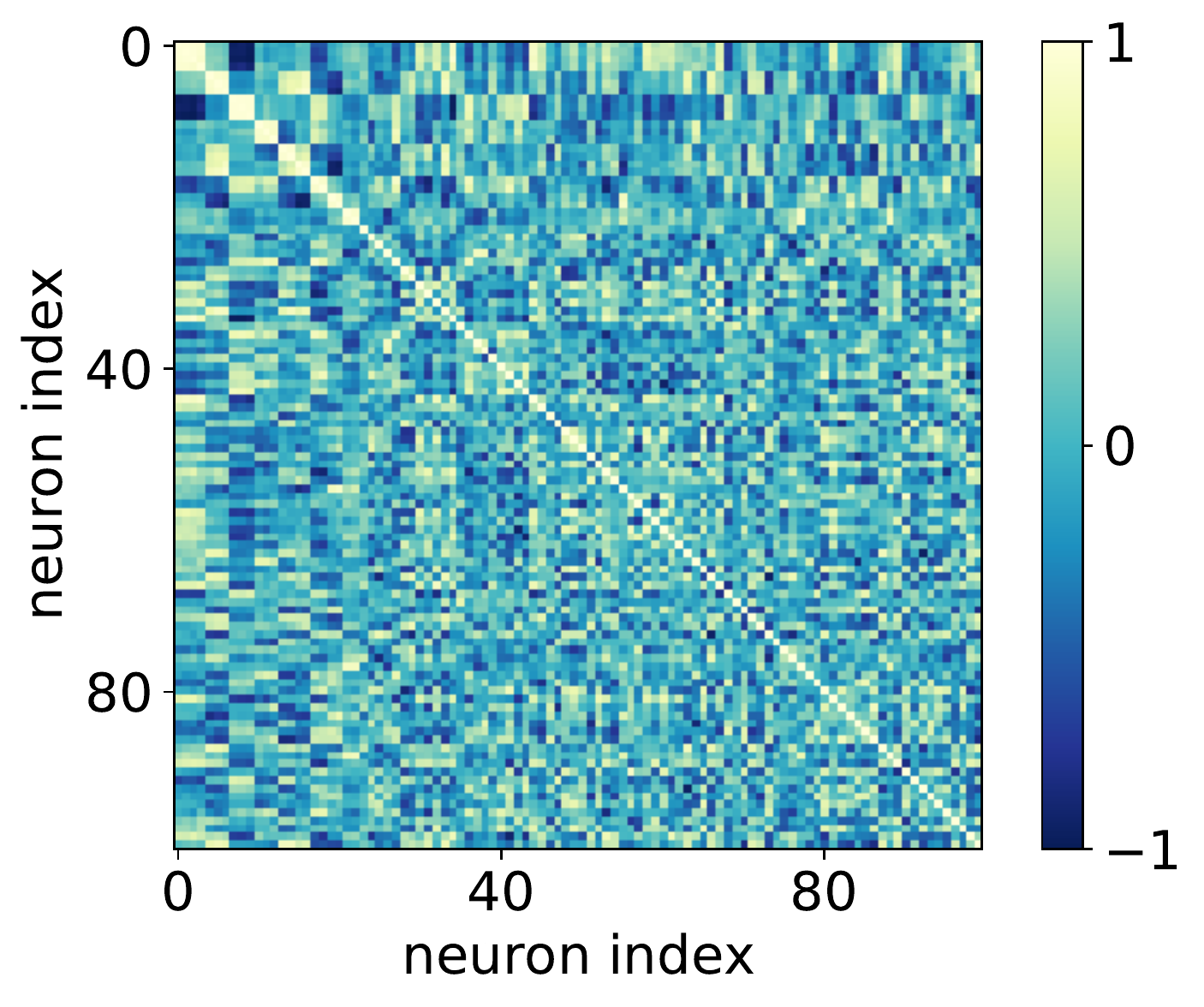}}
	\subfloat[effective ratio]{\includegraphics[width=0.27\textwidth]{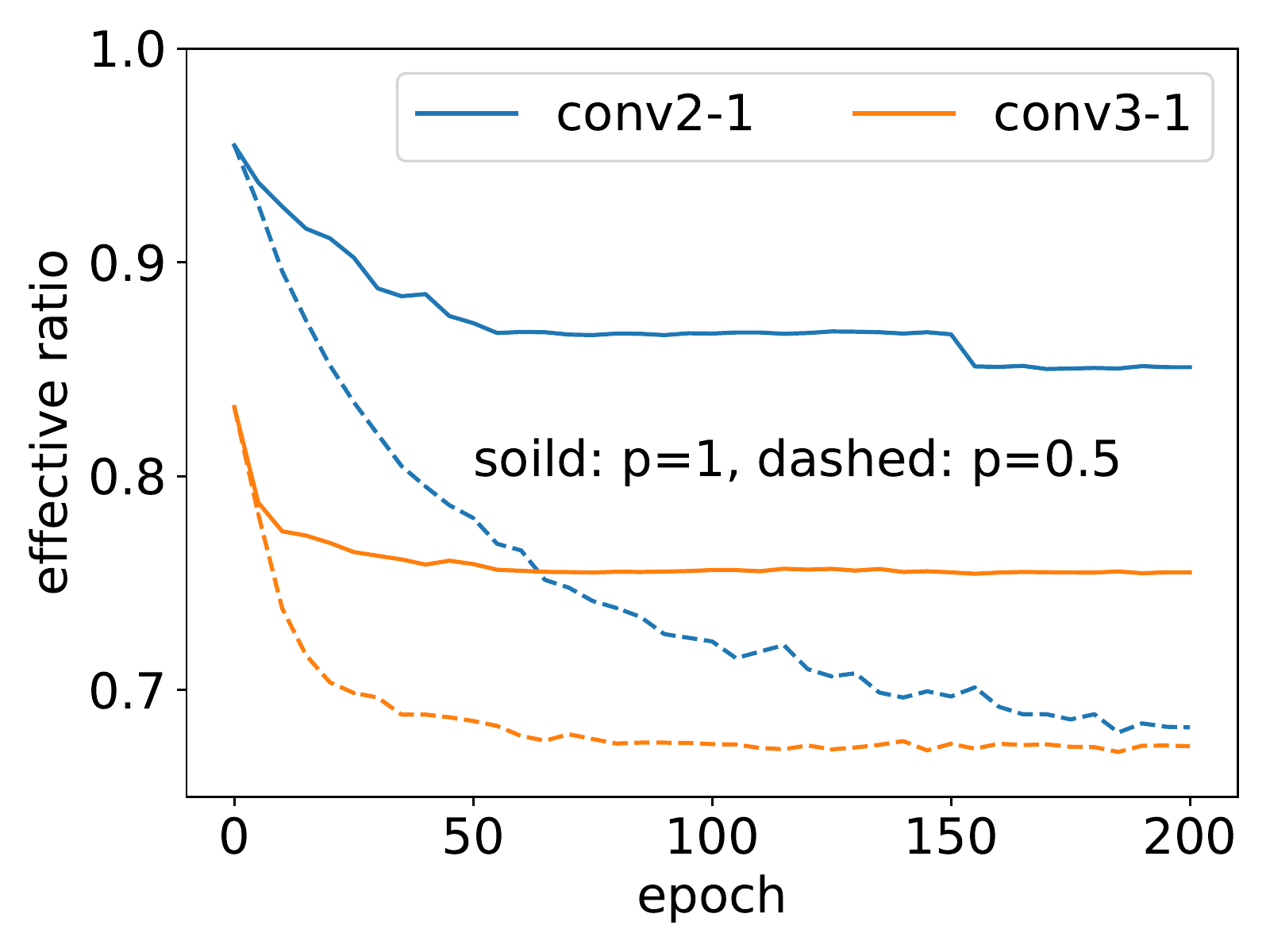}}
  \caption{Sparsity in High-Dimensional NNs with different dropout rates. (a, b) Parameter features of the two-layer tanh NNs with and without dropout. (c) Effective ratio with and without dropout under the task of CIFAR-10 classification with ResNet-18. Conv2-1 and conv3-1 represent the parameters of the first convolutional layer of the second block and the third block of the ResNet, respectively.}
  \label{fig:condense_nd}
\end{figure*} 
To visualize the condensation during the training process, we define the ratio of effective neurons as follows.


\begin{definition}[\textbf{effective ratio}]
For a given NN, the input weight of neuron $j$ in the $l$-th layer is vectorized as $\vtheta_j^{[l]}\in\sR^{m_l}$. Let $U^{[l]}=\{\vu_k^{[l]}\}_{k=1}^{m_l}$ be the set of vectors such that for any $\vtheta_{j}^{[l]}$, there exists an element $\vu\in U^{[l]}$ satisfying $\vu\cdot\vtheta_{j}^{[l]}>0.95$. The effective neuron number $m^{\rm eff}_{l}$ of the $l$-th layer is defined as the minimal size of all possible $U^{[l]}$. The effective ratio is defined as $m^{\rm eff}_{l}/m_{l}$.
\end{definition}

We study the training process of using ResNet-18 to learn CIFAR-10. As shown in Fig. \ref{fig:condense_nd}(c), NNs with dropout tend to have lower effective ratios, and thus tend to exhibit condensation.

\subsubsection{Dropout Improves Generalization}
As the effective neuron number of a condensed network is much smaller than its actual neuron number, it is expected to generalize better. To verify this, we use a two-layer tanh network with $1000$ neurons to learn a teacher two-layer tanh network with two neurons. The number of free parameters in the teacher network is $6$. As shown in Fig. \ref{pic:condense_heat_tanh}, the model with dropout generalizes well when the number of samplings is larger than 6, while the model without dropout generalizes badly. This result is consistent with the rank analysis of non-linear models \cite{zhang2022linear}.


\begin{figure}[!t]
	\centering
	\includegraphics[width=0.6\textwidth]{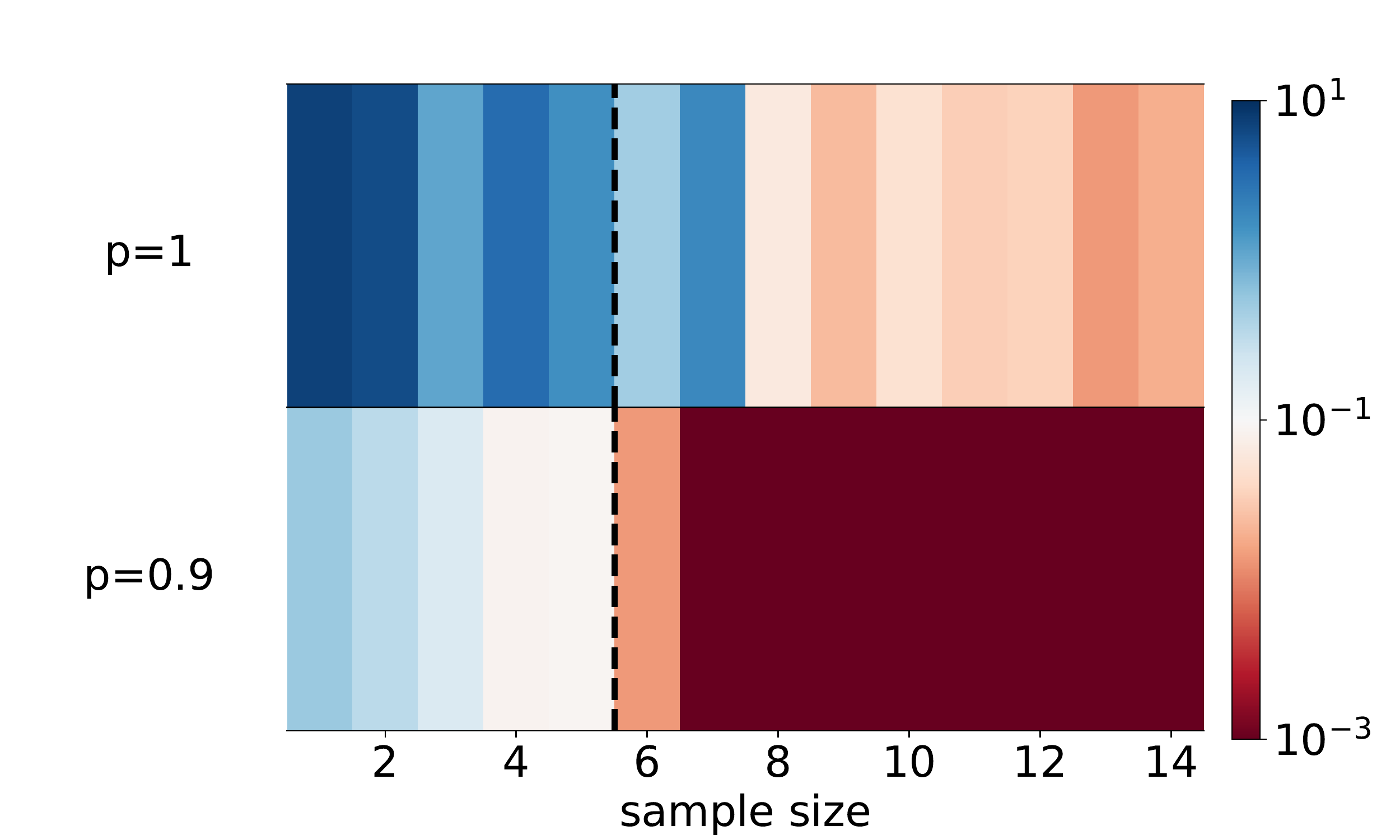}

  \caption{Average test error of the two-layer tanh NNs (color) vs. the number of samples (abscissa) for different dropout rates (ordinate). For all experiments, the width of the hidden layer is $1000$, and the learning rate is $1\times10^{-4}$ with the Adam optimizer. Each test error is averaged over $10$ trials with random initialization. Refer to Appendix \ref{app:relu_further} for further experiments on
ReLU NNs. \label{pic:condense_heat_tanh}}
\end{figure}

\subsection{The Effect of $\texorpdfstring{R_1(\vtheta)}{E=energy, m=mass, c=speed\ of\ light}$ on Condensation}
As can be seen from the implicit regularization term $R_1(\vtheta)$, dropout regularization imposes an additional $l_2$-norm constraint on the output of each neuron. The constraint has an effect on condensation. We illustrate the effect of $R_1(\vtheta)$ by a toy example of a two-layer ReLU network.

We use the following two-layer ReLU network to fit a one-dimensional function:

\begin{equation*}
	f_{\vtheta}(x)=\sum_{j=1}^{m} a_j \sigma(\vw_j \cdot \vx)=\sum_{j=1}^{m} a_j \sigma(w_j x + b_j),
\end{equation*}
where $\vx:=(x,1)^{\T}\in\sR^{2}$, $\vw_j:=(w_j ,b_j )\in\sR^{2}$, $\sigma(x)=\ReLU(x)$. For simplicity, we set $m=2$, and suppose the network can perfectly fit a training data set of two data points generated by a target function of $\sigma(\vw^{*}\cdot\vx)$, denoted as $\vo^{*}:=(\sigma(\vw^{*}\cdot\vx_1),\sigma(\vw^{*}\cdot\vx_2))$. We further assume $\vw^{*}\cdot\vx_{i}>0, i=1,2$. Denote the output of the $j$-th neuron over samples as 
\begin{equation*}
	\vo_{j} = (a_j  \sigma(\vw_j \cdot \vx_1),a_j  \sigma(\vw_j  \cdot\vx_2)).
\end{equation*}
The network output should equal to the target on the training data points after long enough training, i.e., 
\begin{equation*}
	\vo^{*}=\vo_1+\vo_2.
\end{equation*}
There are infinitely many pairs of $\vo_1$ and $\vo_2$ that can fit $\vo^{*}$ well. However, the $R_1(\vtheta)$ term leads the training to a specific pair. $R_1(\vtheta)$ can be written as 
\begin{equation*}
	R_1(\vtheta) = \norm{\vo_1}^2+\norm{\vo_2}^2,
\end{equation*}
and the components of $\vo_{j}$ perpendicular to  $\vo^{*}$ need to cancel each other at the well-trained stage to minimize $R_1(\vtheta)$. As a result, $\vo_{1}$ and $\vo_{2}$ need to be parallel with $\vo^{*}$, i.e., $\vw_1  // \vw_2  // \vw^{*} $, which is the condensation phenomenon.

In the following, we show that minimizing $R_1(\vtheta)$ term can lead to condensation under several settings. We first give some definitions that capture the characteristic of ReLU neurons (also shown in Fig. \ref{pic:relu}).

\begin{figure}[h]
	\centering
	\includegraphics[width=0.8\textwidth]{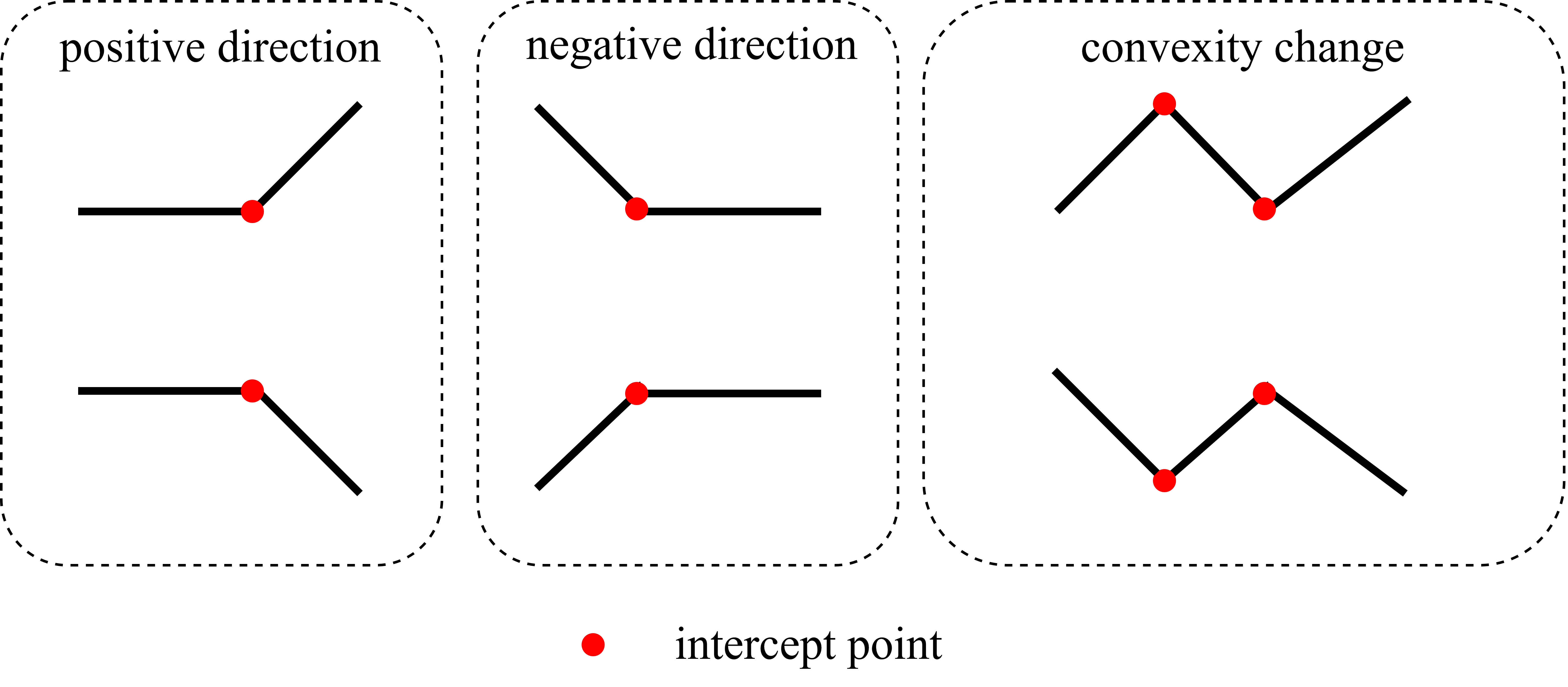}

  \caption{Schematic diagram of some definitions associated with ReLU neurons. \label{pic:relu}}
\end{figure} 

\begin{definition}[\textbf{convexity change of ReLU NNs}]
Consider piecewise linear function $f(t)$, $t\in \sR$, and its linear interval sets $\{[t_{i}, t_{i+1}]\}_{i=1}^{T}$. For any two intervals $[t_{i}, t_{i+2}], [t_{i+1}, t_{i+3}], i \in [T-3]$, if on one of the intervals, $f$ is convex and on the other $f$ is concave, then we call there exists a convexity change.
\end{definition}

\begin{definition}[\textbf{direction and intercept point of ReLU neurons}]
    For a one-dimensional ReLU neuron $a_{j}\sigma(w_j x+b_j)$, its direction is defined as $\mathrm{sign}(w_j)$, and its intercept point is defined as $x=-\frac{b_j}{w_j}$.
\end{definition}


Drawing inspiration from the methodology employed to establish the regularization effect of label noise SGD  \cite{blanc2020implicit}, we show that under the setting of two-layer ReLU NN and one-dimensional input data, the implicit bias of $R_1(\vtheta)$ term corresponds to ``simple'' functions that satisfy two conditions: (I) they have the minimum number of convexity changes required to fit the training points, and (ii) if the intercept points of neurons are in the same inner interval, and the neurons have the same direction, then their intercept points are identical.

\begin{theorem}[\textbf{the effect of $R_1(\vtheta)$ on facilitating condensation}]\label{thm:perb} Consider the following two-layer ReLU NN, 
\begin{equation*}
	f_{\vtheta}(x)=\sum_{j=1}^{m} a_j \sigma(w_j x + b_j)+a x+b, 
\end{equation*}
trained with a one-dimensional dataset $S=\{(x_{i}, y_{i})\}_{i=1}^n$, where $x_1<x_2< \cdots <x_n$. When the MSE of training data $R_S(\vtheta)=0$, if any of the following two conditions holds:

(i) the number of convexity changes of NN in $(x_ {1}, x_ {n})$ can be reduced while $R_S(\vtheta)=0$;

(ii) there exist two neurons with indexes $k_1 \neq k_2$, such that they have the same sign, i.e.,  $\mathrm{sign}(w_{k_1})=\mathrm{sign}(w_{k_2})$, and different intercept points in the same interval, i.e., $-{b_{k_1}}/{w_{k_1}}, -{b_{k_2}}/{w_{k_2}}\in [x_ {i}, x_ {i+1}]$, and $-{b_{k_1}}/{w_{k_1}}\neq -{b_{k_2}}/{w_{k_2}}$ for some $i \in [2: n-1]$;

then there exists parameters $\vtheta^{\prime}$, an infinitesimal perturbation of $\vtheta$, s.t.,

(i) $R_S(\vtheta^{\prime})=0$;

(ii) $R_1(\vtheta^{\prime})<R_1(\vtheta)$.

\end{theorem}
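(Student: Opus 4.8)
The plan is to first rewrite the regularizer so that each hidden neuron's contribution on the data is isolated. Under Setting 1--3 with a two-layer network, $R_1(\vtheta)=\frac{1-p}{2np}\sum_{j=1}^{m}\norm{\vo_j}^2$, where $\vo_j=(a_j\sigma(w_jx_1+b_j),\dots,a_j\sigma(w_jx_n+b_j))$ is neuron $j$'s output vector over the samples and the skip term $ax+b$ does not enter $R_1$. Since the claim only requires an \emph{infinitesimal} perturbation, I would restrict to perturbations small enough to freeze every neuron's activation pattern on the data: each intercept $-b_j/w_j$ stays in its current interval and each $\mathrm{sign}(w_j)$ is preserved. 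Within a fixed activation pattern, neuron $j$'s output at an active sample $x_l$ is the affine map $s_jx_l+t_j$ with $(s_j,t_j)=(a_jw_j,a_jb_j)$, and both $\norm{\vo_j}^2$ and the constraint $R_S(\vtheta)=0$ depend on the parameters only through these effective coefficients (they are invariant under the positive rescaling $a_j\mapsto\lambda a_j$, $(w_j,b_j)\mapsto(w_j,b_j)/\lambda$). The task therefore reduces to showing the current effective coefficients are not a local minimizer of $\sum_j\norm{\vo_j}^2$ on the affine slice obtained by fixing the total output at every sample.

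For condition (ii) I would argue directly. The two neurons $k_1,k_2$ share the same sign and have intercepts in the same interval $[x_i,x_{i+1}]$, so they have an identical activation pattern: both vanish on one side and are affine on a common active set $A$. I would perturb only these two neurons along the one-parameter family keeping $s_{k_1}+s_{k_2}$ and $t_{k_1}+t_{k_2}$ fixed; this leaves $\vo_{k_1}+\vo_{k_2}$, hence $f$, unchanged at every sample, preserving $R_S=0$. On this slice $\norm{\vo_{k_1}}^2+\norm{\vo_{k_2}}^2=p^{\T}Mp+q^{\T}Mq$ with $p=(s_{k_1},t_{k_1})$, $q=(s_{k_2},t_{k_2})$, $p+q$ fixed, and $M=\sum_{l\in A}(x_l,1)^{\T}(x_l,1)$. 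When $M\succ0$ this quadratic is strictly convex with unique minimizer $p=q$, and the hypothesis $-b_{k_1}/w_{k_1}\neq-b_{k_2}/w_{k_2}$ forces $p\neq q$ at the start, so an infinitesimal step toward $p=q$ strictly lowers $R_1$. The restriction $i\in[2:n-1]$ is what secures $|A|\geq2$, hence $M\succ0$, for the relevant orientation; the lone boundary configuration with a single active sample, where one falls back on $o_{k_1}^2+o_{k_2}^2\geq\tfrac12(o_{k_1}+o_{k_2})^2$, must be treated by hand.

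For condition (i) the same principle applies but the certificate is global. Recording the slope jump $a_j|w_j|$ at each breakpoint $-b_j/w_j$ in increasing order, a removable convexity change means this signed-jump sequence contains a sign alternation that the discrete second differences of the data do not demand. I would translate this into a concrete pair (or small group) of opposite-curvature neurons whose outputs carry a nonzero mutually cancelling component over a sub-region unconstrained by the samples, then move their effective coefficients to shrink that component while holding the total output fixed at every sample. Exactly as in the toy example $R_1=\norm{\vo_1}^2+\norm{\vo_2}^2$, reducing the part of the $\vo_j$ orthogonal to the required interpolation lowers $\sum_j\norm{\vo_j}^2$, and a strictly decreasing infinitesimal direction again exists by strict convexity of the induced quadratic together with the current configuration being off its minimizer.

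I expect condition (i) to be the main obstacle. The difficulty is that ReLU neurons have one-sided, non-local support, so ``straightening'' a redundant convexity change between two breakpoints generically perturbs the network output at samples far from that region; the construction must pin down precisely which neurons to move and arrange the compensating adjustments (using other neurons sharing the same active set, or the skip term $ax+b$ for a genuinely global-affine residual) so that \emph{all} $n$ interpolation constraints stay satisfied, not merely those near the convexity change. Converting the combinatorial hypothesis ``the number of convexity changes can be reduced'' into such an explicit, activation-pattern-preserving perturbation, and checking the decrease is strict rather than merely non-increasing, is the crux; both conditions ultimately rest on the strictly-convex-quadratic mechanism set up in the first step.
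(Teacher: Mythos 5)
Your treatment of condition (ii) is correct in the generic case and takes a genuinely different, cleaner route than the paper. The paper proves this part (its ``limitation of the number of intercept points'') by writing down explicit coordinate perturbations and splitting into cases according to the signs of $a_1,a_2$ and the relative size of $a_1w_1$ versus $a_2w_2$; your reduction to the strictly convex quadratic $p^{\T}Mp+q^{\T}Mq$ on the slice $p+q=\mathrm{const}$, in the effective coordinates $(s_j,t_j)=(a_jw_j,a_jb_j)$, subsumes all of those cases at once (each of the paper's perturbations is precisely an infinitesimal step toward equalizing the two neurons' outputs on the common active set), at the mild price of assuming $a_{k_1},a_{k_2}\neq 0$, which the paper also assumes silently. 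One warning, though: the single-active-sample case $i=n-1$, $w_{k_1},w_{k_2}>0$ that you propose to ``treat by hand'' via $o_{k_1}^2+o_{k_2}^2\geq\tfrac12(o_{k_1}+o_{k_2})^2$ cannot be treated that way, because that inequality is tight exactly in the problematic configuration: if the two neuron lines cross at $(x_n,v)$, then $Mp=Mq$ already holds. In fact, taking $m=2$, both intercepts strictly inside $(x_{n-1},x_n)$, $n=3$, and the skip line pinned by the data at $x_1,x_2$, one checks that \emph{no} infinitesimal perturbation preserves $R_S=0$ and decreases $R_1$; so the statement fails there as written. The paper's proof does not close this case either: it quietly works with inner intervals $[x_i,x_{i+1}]\cap(x_2,x_{n-1})$, which forces at least two active samples (hence $M\succ 0$), a restriction that does not match the theorem's $i\in[2:n-1]$, and its Case (2)/(3) perturbation (rotation about the intersection point of the two lines) produces zero change in $R_1$ in the configuration above.

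The genuine gap is condition (i), which you leave as a plan rather than a proof, and it is not a routine extension of your quadratic mechanism, for exactly the reason you identify. A convexity change forces a pair of neurons with opposite-signed output weights, and when their input weights also have opposite signs ($w_1>0$, $w_2<0$) their active sets on the data are complementary rather than nested, so no perturbation supported on the pair alone can hold $f$ fixed at every sample. The paper's resolution is concrete: in all four sign cases it rotates one neuron's line about the middle datapoint $x_{i+1}$ (e.g.\ $\tilde w_1=w_1(1-\varepsilon)$, $\tilde b_1=b_1+x_{i+1}w_1\varepsilon$, which fixes that neuron's value at $x_{i+1}$); in the same-direction cases the partner neuron absorbs the change ($\tilde w_2=w_2-\frac{a_1}{a_2}(\tilde w_1-w_1)$, $\tilde b_2=b_2-\frac{a_1}{a_2}(\tilde b_1-b_1)$), while in the mixed-direction cases the skip coefficients are shifted by $-a_1(\tilde w_1-w_1)$ and $-a_1(\tilde b_1-b_1)$ so that the affine term cancels the rotation on one neuron's active side and the partner neuron cancels the affine term on the other side; one then checks that each perturbed neuron's squared output strictly shrinks at the active samples $x_\tau$, $\tau\geq i+2$, giving the $\Theta(\varepsilon)$ decrease of $R_1$ with the network output unchanged on all of $S$. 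This explicit use of the skip term as a global affine compensator is exactly the device you gesture at (``the skip term $ax+b$ for a genuinely global-affine residual'') but never instantiate, and without it your argument for condition (i) does not get off the ground. As it stands, the proposal proves only half of the theorem.
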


It should be noted that not all functions trained with dropout exhibit obvious condensation. For example, a function with dropout shows no condensation while training with a dataset consisting of only one data point. However, for general datasets, such as the example shown in Fig. \ref{pic:condense_toy} and Fig. \ref{pic:condense_tanh}, NNs reach the condensed solution due to the limit of the convexity changes and the intercept points (also illustrated in Fig. \ref{pic:thm_ill}). 

\begin{figure}[h]
	\centering
	\includegraphics[width=0.8\textwidth]{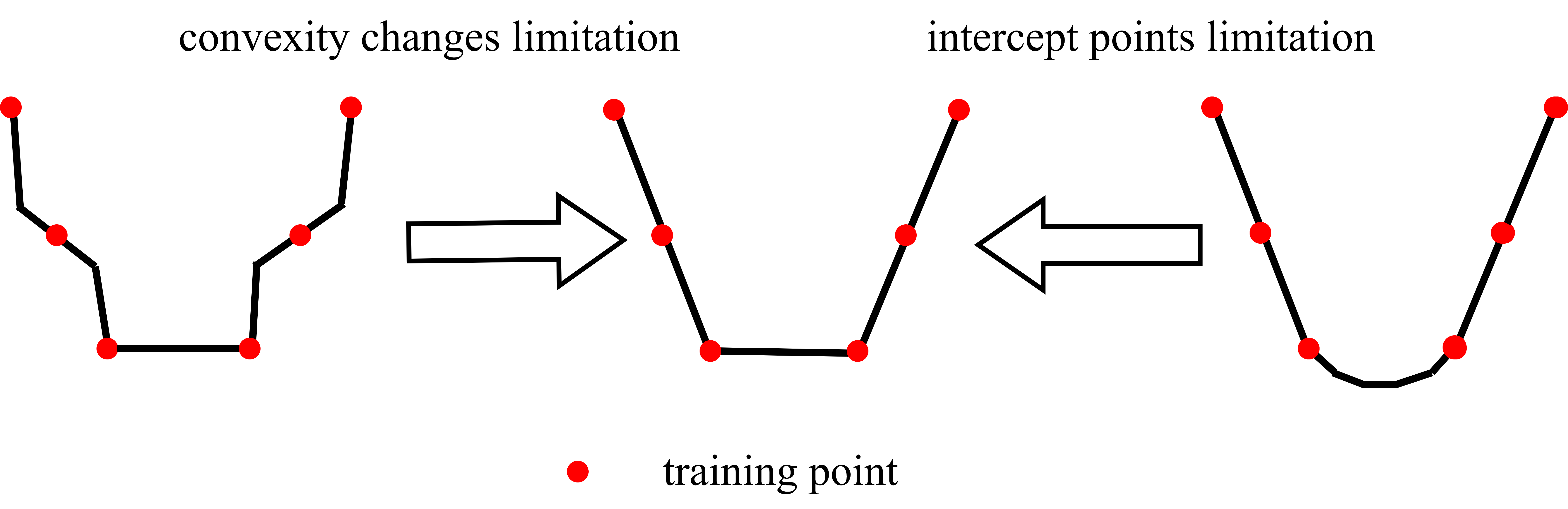}

  \caption{Schematic diagram of the effect of $R_1(\vtheta)$ on the limit of the convexity changes and the intercept points. \label{pic:thm_ill}}
\end{figure}

Although the current study only demonstrates the result for ReLU NNs, it is expected that for general activation functions, such as tanh, the $R_1(\vtheta)$ term also has the effect on facilitating condensation, which is left for future work. This is also confirmed by the experimental results conducted on the tanh NNs above. Furthermore, it is believed that the linear term $ax+b$ utilized to ensure $R_S(\vtheta)=0$ in certain cases is not a fundamental requirement. Our experiments have confirmed that neural networks without the linear term also exhibit the condensation phenomenon. 


\section{Implicit Regularization of Dropout on the Flatness of Solution} \label{sec:flat}
Understanding the mechanism by which dropout improves the generalization of NNs is of great interest and significance. In this section, we study the flatness of the minima found by dropout as inspired by the study of SGD on generalization \cite{keskar2016large}. Our primary focus is to study the effect of $R_1(\vtheta)$ and $R_2(\vtheta)$ on the flatness of loss landscape and network generalization.

\subsection{Dropout Finds Flatter Minima}

We first study the effect of dropout on model flatness and generalization. For a fair comparison of the flatness between different models, we employ the approach used in \cite{li2017visualizing} as follows. To obtain a direction for a network with parameters $\vtheta$, we begin by producing a random Gaussian direction vector $\vd$ with dimensions compatible with $\vtheta$. Then, we normalize each filter in $\vd$ to have the same norm as the corresponding filter in $\vtheta$. For FNNs, each layer can be regarded as a filter, and the normalization process is equivalent to normalizing the layer, while for convolutional neural networks (CNNs), each convolution kernel may have multiple filters, and each filter is normalized individually. Thus, we obtain a normalized direction vector $\vd$ by replacing $\vd_{i, j}$ with $\frac{\vd_{i, j}}{\norm{\vd_{i, j}}}\norm{\vtheta_{i, j}}$, where $\vd_{i,j}$ and $\vtheta_{i, j}$ represent the $j$th filter of the $i$th layer of the random direction $\vd$ and the network parameters $\vtheta$, respectively. Here, $\norm{\cdot}$ denotes the Frobenius norm. 
It is crucial to note that $j$ refers to the filter index. We use the function $L(\alpha)=R_S\left(\vtheta+\alpha \vd\right)$ to characterize the loss landscape around the minima obtained with and without dropout layers. 

For all network structures shown in Fig. \ref{fig:flatness_cnn}, dropout improves the generalization of the network and finds flatter minima. In Fig. \ref{fig:flatness_cnn}(a, b), for both networks trained with and without dropout layers, the training loss values are all close to zero, but their flatness and generalization are still different. In Fig. \ref{fig:flatness_cnn}(c, d), due to the complexity of the dataset, i.e., CIFAR-100 and Multi30k, and network structures, i.e., ResNet-20 and transformer, networks with dropout do not achieve zero training error but the ones with dropout find flatter minima with much better generalization. The accuracy of different network structures is shown in Table \ref{tab:acc}.


\begin{table}[!t]
\centering
\caption{The effect of dropout on model accuracy. }
\label{tab:acc}
\begin{tabular}{|c|c|c|c|}
\hline
Structure & Dataset & With Dropout & Without Dropout \\
\hline
FNN & MNIST & 98.7\% & 98.1\% \\
\hline
VGG-9 & CIFAR-10 & 60.6\% & 59.2\% \\
\hline
ResNet-20 & CIFAR-100 & 54.7\% & 34.1\% \\
\hline
Transformer & Multi30k & 49.3\% & 34.7\% \\
\hline
\end{tabular}
\end{table}

\begin{figure}[h]
	\centering
	\subfloat[flatness of FNN]{\includegraphics[width=0.24\textwidth]{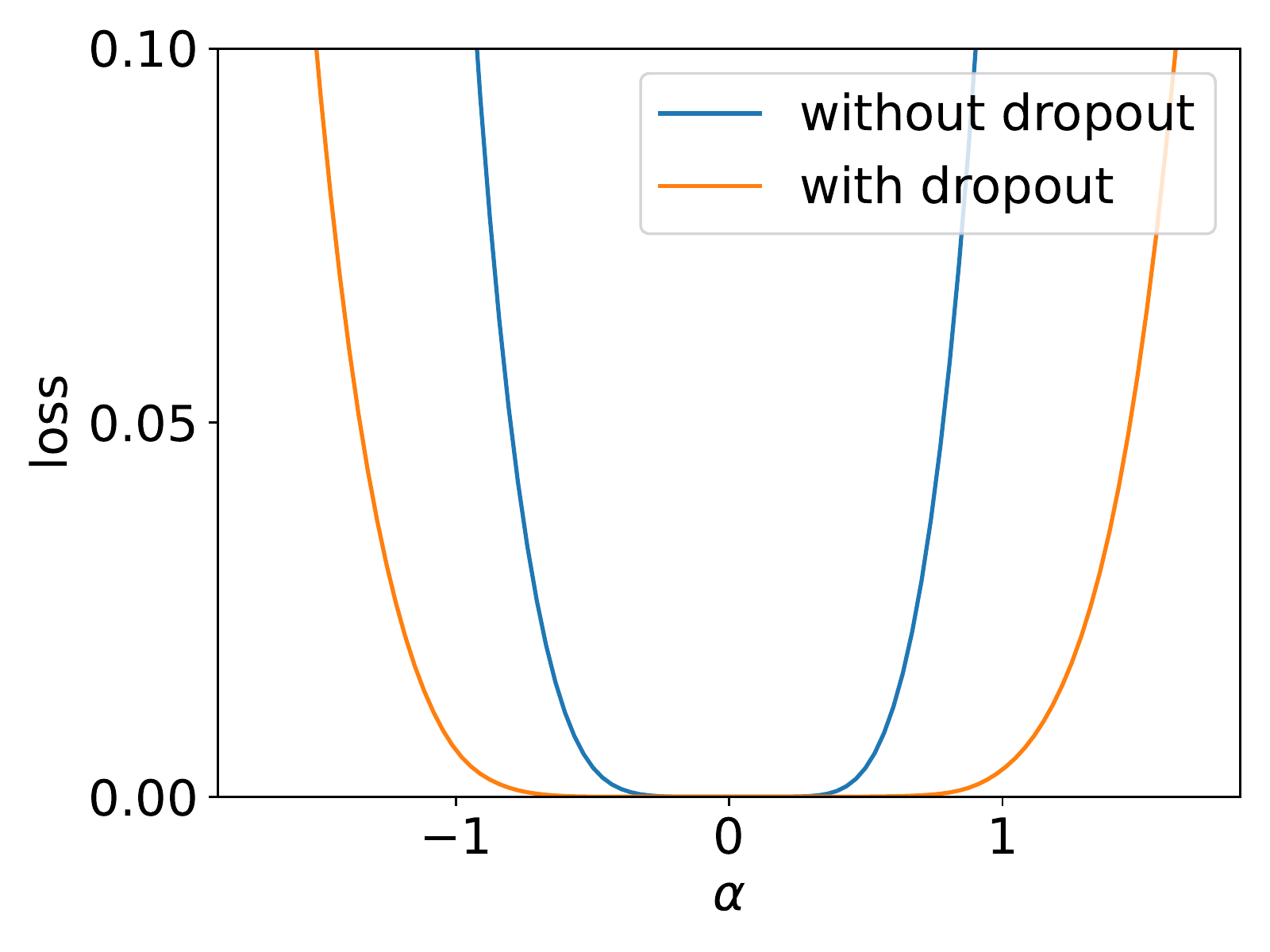}}
	\subfloat[flatness of VGG-9]{\includegraphics[width=0.24\textwidth]{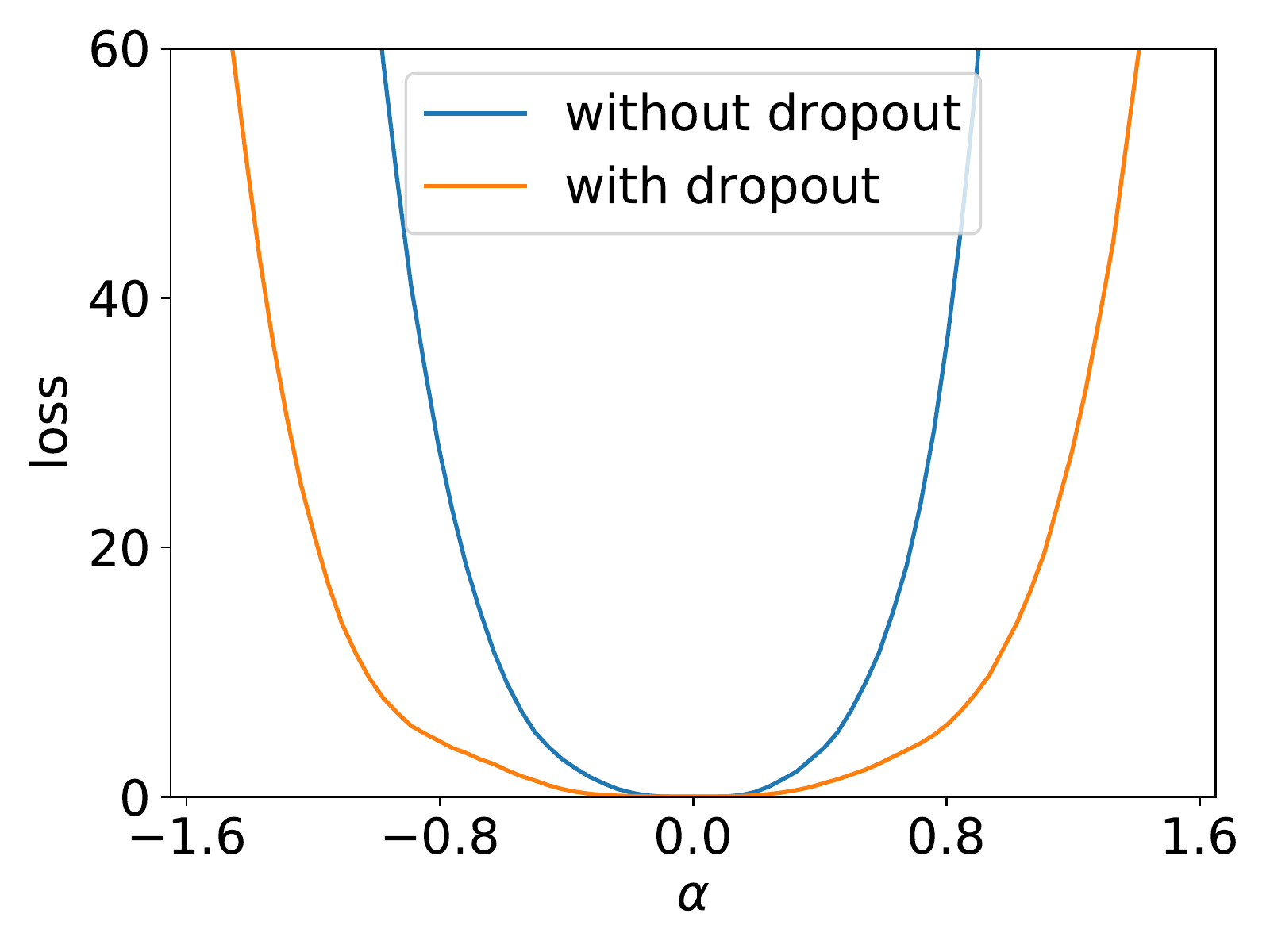}}
	\subfloat[flatness of ResNet-20]{\includegraphics[width=0.24\textwidth]{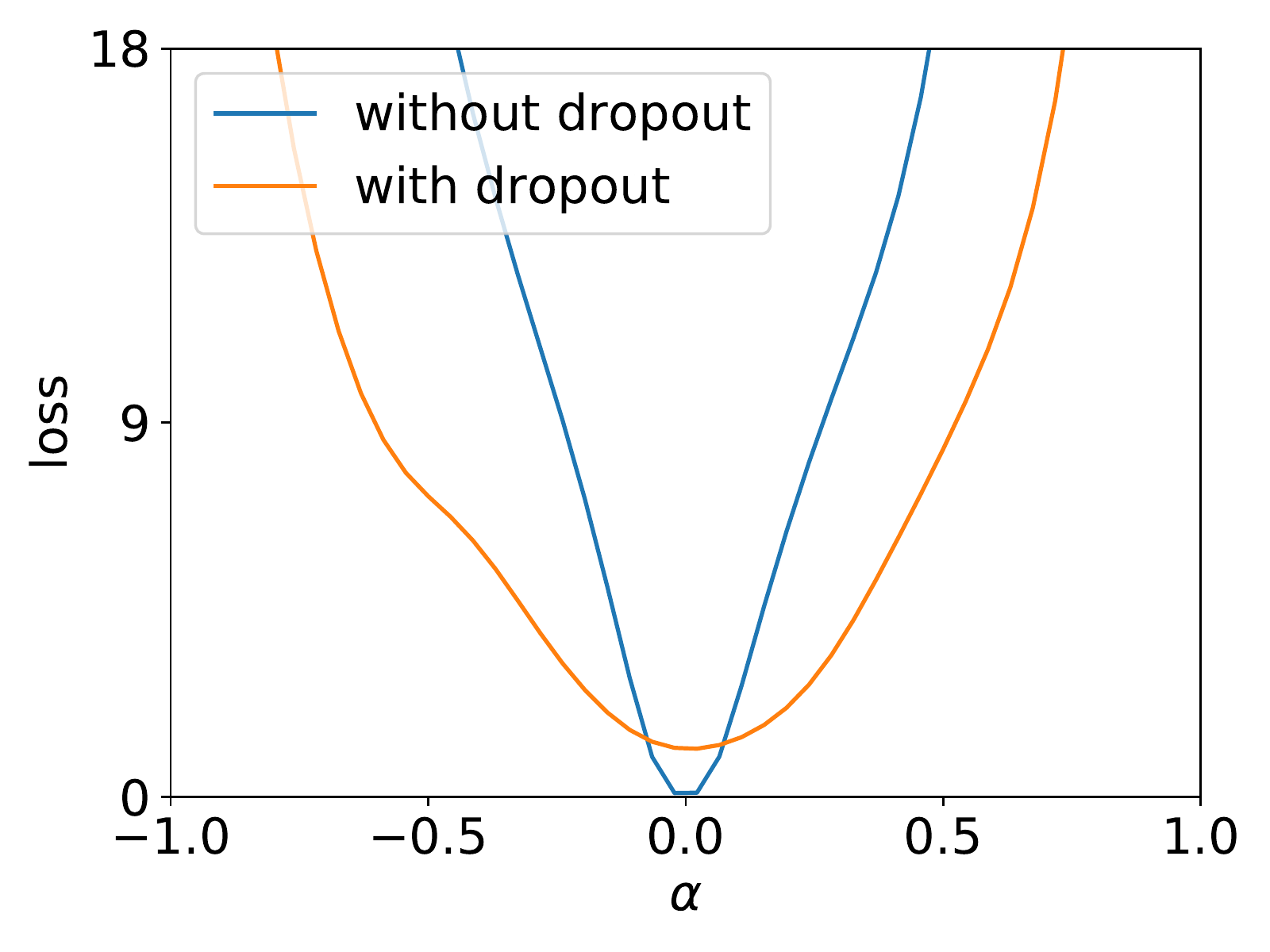}}
	\subfloat[flatness of transformer]{\includegraphics[width=0.24\textwidth]{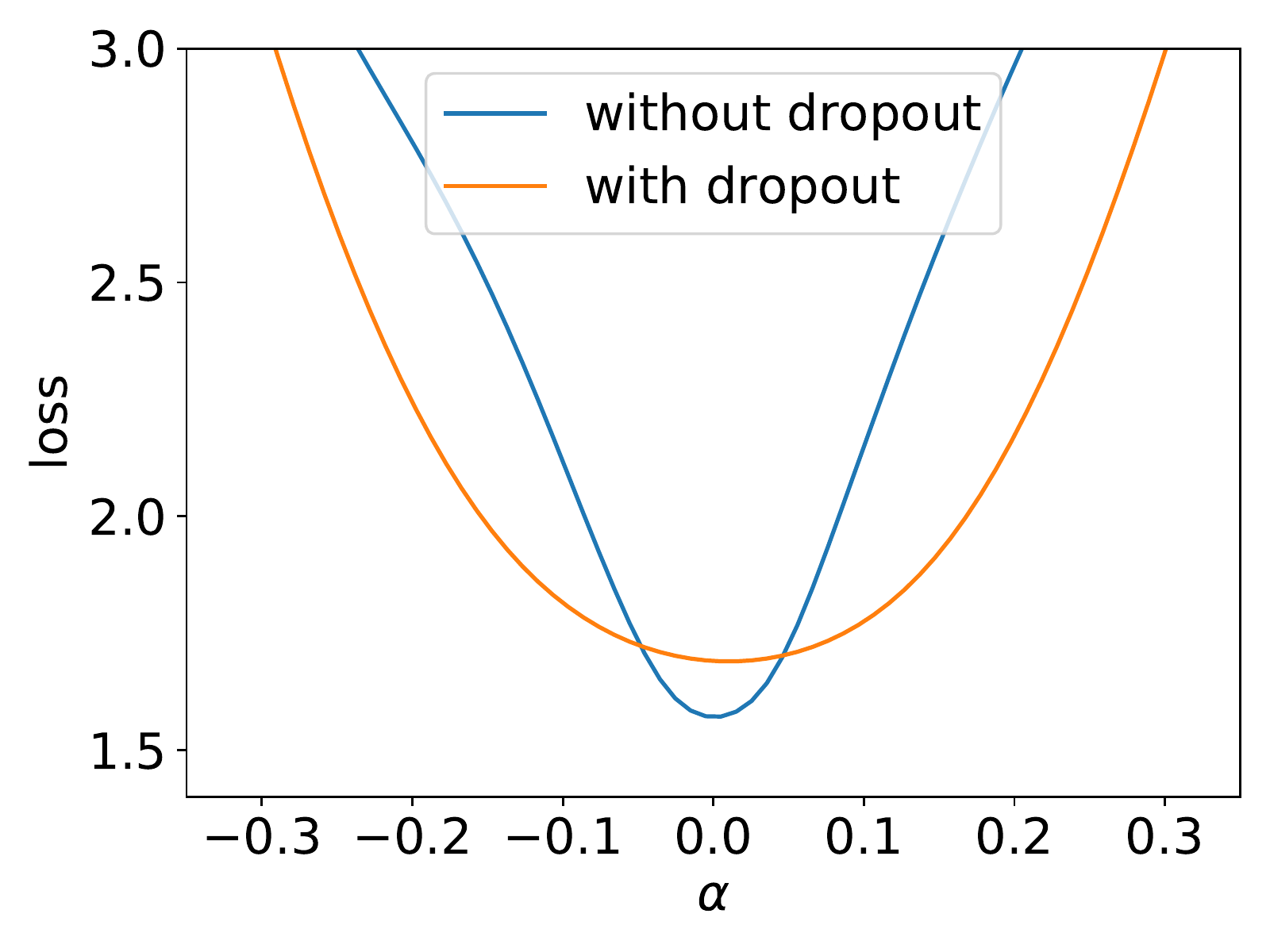}}
  \caption{The 1D visualization of solutions of different network structures obtained with or without dropout layers. (a) The FNN is trained on MNIST dataset. The test accuracy for the model with dropout layers is $98.7\%$ while $98.1\%$ for the model without dropout layers. (b) The VGG-9 network is trained on CIFAR-10 dataset using the first 2048 examples as the training dataset. The test accuracy for the model with dropout layers is $60.6\%$ while $59.2\%$ for the model without dropout layers. (c) The ResNet-20 network is trained on CIFAR-100 dataset using all examples as the training dataset. The test accuracy for the model with dropout layers is $54.7\%$ while $34.1\%$ for the model without dropout layers. (d) The transformer is trained on Multi30k dataset using the first 2048 examples as the training dataset. The test accuracy for the model with dropout layers is $49.3\%$ while $34.7\%$ for the model without dropout layers. \label{fig:flatness_cnn}}
\end{figure}

\subsection{The Effect of $\texorpdfstring{R_1(\vtheta)}{E=energy, m=mass, c=speed\ of\ light}$ on Flatness}

In this subsection, we study the effect of $R_1(\vtheta)$ on flatness under the two-layer ReLU NN setting. Different from the flatness described above by loss interpolation, we define the flatness of the minimum as the sum of the eigenvalues of the Hessian matrix $H$ in this section, i.e., $\mathrm{Tr}(H)$. Note that when $R_S(\vtheta)=0$, we have, 

\begin{equation*}
\begin{aligned}
        \mathrm{Tr}(H)&=\mathrm{Tr}\left(\frac{1}{n}\sum_{i=1}^n\nabla_{\vtheta}f_{\vtheta}(\vx_i)\nabla_{\vtheta}^{\T}f_{\vtheta}(\vx_i)\right)=\frac{1}{n}\sum_{i=1}^n\norm{\nabla_{\vtheta}f_{\vtheta}(\vx_i)}_2^2, 
\end{aligned}
\end{equation*}
thus the definition of flatness above is equivalent to $\frac{1}{n}\sum_{i=1}^n\norm{\nabla_{\vtheta}f_{\vtheta}(\vx_i)}_2^2$.

\begin{theorem}[\textbf{the effect of $R_1(\vtheta)$ on facilitating flatness}] Under the Setting \ref{set:1}--\ref{set:3}, consider a two-layer ReLU NN, 
\begin{equation*}
	f_{\vtheta}(x)=\sum_{j=1}^{m} a_j \sigma(\vw_j \vx), 
\end{equation*}
trained with dataset $S=\{(\vx_{i}, y_{i})\}_{i=1}^n$. Under the gradient flow training with the loss function $R_S(\vtheta)+R_1(\vtheta)$, if $\vtheta_0$ satisfying $R_S(\vtheta_0)=0$ and $\nabla_{\vtheta} R_1(\vtheta_0)\neq 0$, we have 
\begin{equation*}
    \frac{{\rm d}\left(\frac{1}{n}\sum_{i=1}^n\norm{\nabla_{\vtheta}f_{\vtheta_0}(\vx_i)}_2^2\right)}{\rm dt}<0.
\end{equation*}

\end{theorem}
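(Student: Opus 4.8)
The plan is to reduce the claim to a single positivity statement about an inner product of two gradients. Write $\Phi(\vtheta):=\frac1n\sum_{i=1}^n\norm{\nabla_{\vtheta}f_{\vtheta}(\vx_i)}^2$ for the flatness functional appearing in the statement. Since the mean squared error is nonnegative and $R_S(\vtheta_0)=0$, the point $\vtheta_0$ is a global minimizer of $R_S$, so $\nabla_{\vtheta}R_S(\vtheta_0)=0$. Hence the gradient-flow velocity at $\vtheta_0$ collapses to $\dot{\vtheta}=-\nabla_{\vtheta}(R_S+R_1)(\vtheta_0)=-\nabla_{\vtheta}R_1(\vtheta_0)$, and the chain rule gives
\begin{equation*}
\frac{\mathrm{d}\Phi}{\mathrm{d}t}\Big|_{\vtheta_0}=\big\langle\nabla_{\vtheta}\Phi(\vtheta_0),\dot{\vtheta}\big\rangle=-\big\langle\nabla_{\vtheta}\Phi(\vtheta_0),\,\nabla_{\vtheta}R_1(\vtheta_0)\big\rangle.
\end{equation*}
Thus it suffices to show the right-hand inner product is strictly positive whenever $\nabla_{\vtheta}R_1(\vtheta_0)\neq0$.

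The second step is to compute both gradients blockwise in the parameters $(a_j,\vw_j)$. Using $\partial_{a_j}f_{\vtheta}(\vx_i)=\sigma(\vw_j\vx_i)$ and $\partial_{\vw_j}f_{\vtheta}(\vx_i)=a_j\sigma'(\vw_j\vx_i)\vx_i$, and the ReLU identities $\sigma'^2=\sigma'$ and $\sigma\sigma'=\sigma$, I would work at a generic $\vtheta_0$ (no preactivation $\vw_j\vx_i$ vanishing), where the activation patterns are locally constant and both functionals are genuinely differentiable, obtaining
\begin{align*}
\partial_{a_j}\Phi &= \tfrac{2a_j}{n}\,t_j, & \partial_{\vw_j}\Phi &= \tfrac{2}{n}\textstyle\sum_i\sigma(\vw_j\vx_i)\vx_i,\\
\partial_{a_j}R_1 &= \tfrac{1-p}{np}\,a_j s_j, & \partial_{\vw_j}R_1 &= \tfrac{1-p}{np}\,a_j^2\textstyle\sum_i\sigma(\vw_j\vx_i)\vx_i,
\end{align*}
where $s_j:=\sum_i\sigma(\vw_j\vx_i)^2\geq0$ and $t_j:=\sum_i\sigma'(\vw_j\vx_i)\norm{\vx_i}^2\geq0$. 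The crucial structural observation is that both the output-weight and input-weight blocks of $\nabla\Phi$ and $\nabla R_1$ are nonnegative multiples of the same neuron-wise quantities, a consequence of ReLU positive-homogeneity and the fact that $R_1$ penalizes exactly the squared neuron outputs.

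Summing over $j$ then collapses the inner product into a manifestly nonnegative form,
\begin{equation*}
\big\langle\nabla_{\vtheta}\Phi,\nabla_{\vtheta}R_1\big\rangle=\frac{2(1-p)}{n^2p}\sum_{j=1}^m\Big(a_j^2 s_j t_j+a_j^2\Big\|\textstyle\sum_i\sigma(\vw_j\vx_i)\vx_i\Big\|^2\Big)\ \geq\ 0,
\end{equation*}
which already gives that the flatness functional is nonincreasing. For the strict inequality I would use the hypothesis $\nabla_{\vtheta}R_1(\vtheta_0)\neq0$: it forces, for some $j$, either $a_js_j\neq0$ or $a_j^2\sum_i\sigma(\vw_j\vx_i)\vx_i\neq0$. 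In the first case $s_j>0$ means some sample is active, so $t_j\geq\norm{\vx_i}^2>0$ and hence $a_j^2s_jt_j>0$; in the second case the Euclidean-norm term is positive. Either way the sum is strictly positive, yielding $\frac{\mathrm{d}\Phi}{\mathrm{d}t}|_{\vtheta_0}<0$.

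The main obstacle is not any heavy calculation but the careful handling of ReLU nonsmoothness together with the strict-positivity bookkeeping. I would make the genericity assumption explicit (no $\vw_j\vx_i=0$ at $\vtheta_0$, a full-measure condition) so that the indicator patterns $\sigma'(\vw_j\vx_i)$ are locally constant and the subdifferential issues disappear, and I would record that $s_j>0\Rightarrow t_j>0$ as the one implication that makes the first summand strict. The conceptual heart—establishing the ``alignment'' of $\nabla\Phi$ and $\nabla R_1$ cleanly—is exactly what turns the nonnegative inner product into strict descent.
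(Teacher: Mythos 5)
Your proposal is correct and follows essentially the same route as the paper's own proof: both reduce the time derivative of the flatness to $-\langle\nabla_{\vtheta}\Phi(\vtheta_0),\nabla_{\vtheta}R_1(\vtheta_0)\rangle$ (using that $R_S(\vtheta_0)=0$ kills the $R_S$ part of the flow), compute both gradients blockwise in $(a_j,\vw_j)$ with the ReLU identities $\sigma\sigma'=\sigma$ and $(\sigma')^2=\sigma'$, and arrive at exactly the same nonnegative decomposition --- your terms $a_j^2 s_j t_j$ and $a_j^2\bigl\Vert\sum_i\sigma(\vw_j\vx_i)\vx_i\bigr\Vert^2$ are precisely the paper's per-neuron ``data-norm'' term and perfect-square term. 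If anything, you are more careful than the paper on the two points it glosses over: making the genericity assumption (no $\vw_j\vx_i=0$) explicit so the activation patterns are locally constant, and tracing the strict inequality back to the hypothesis $\nabla_{\vtheta}R_1(\vtheta_0)\neq 0$ via a case analysis rather than asserting it.
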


The regularization effect of $R_2(\vtheta)$ also has a positive effect on flatness by constraining the norm of the gradient. In the next subsection, we compare the effect of these two regularization terms on generalization and flatness.

\subsection{Effect of Two Implicit Regularization Terms on Generalization and Flatness}
Although the modified gradient flow is noise-free during training, the model trained with the modified gradient flow can also find a flat minimum that generalizes well, due to the effect of $R_1(\vtheta)$ and $R_2(\vtheta)$. However, the magnitude of their impact on flatness is not yet fully understood. In this subsection, we study the effect of each regularization term through training networks by the following four loss functions:

\begin{equation}
    \begin{aligned}
        L_1(\boldsymbol{\vtheta}) &:=\RS(\boldsymbol{\vtheta})+R_{1}(\vtheta)\\
        &:=\RS(\boldsymbol{\vtheta})+\frac{1-p}{2np} \sum_{i=1}^{n}\sum_{j=1}^{m_{L-1}}\Vert\mW^{[L]}_{j}f^{[L-1]}_{\vtheta, j}(\vx_i)\Vert^2, \\
        L_2(\boldsymbol{\vtheta}, \boldsymbol{\veta})&:=\RS(\boldsymbol{\vtheta})+\tilde{R}_2(\vtheta,\veta)\\
        &:=\RS(\boldsymbol{\vtheta})+\frac{\varepsilon}{4} \left\|\nabla_{\boldsymbol{\vtheta}} \RS^\mathrm{drop}\left(\boldsymbol{\vtheta}, \boldsymbol{\veta}\right)\right\|^{2},\\
        L_3(\boldsymbol{\vtheta}, \boldsymbol{\veta})&:=\RS^\mathrm{drop}(\boldsymbol{\vtheta}, \boldsymbol{\veta})-\tilde{R}_2(\vtheta,\veta)\\
        &:=\RS^\mathrm{drop}(\boldsymbol{\vtheta}, \boldsymbol{\veta})-\frac{\varepsilon}{4} \left\|\nabla_{\boldsymbol{\vtheta}} R_{S}^\mathrm{drop}\left(\boldsymbol{\vtheta}, \boldsymbol{\veta}\right)\right\|^{2},\\
        L_4(\boldsymbol{\vtheta}, \boldsymbol{\veta})&:=\RS^\mathrm{drop}(\boldsymbol{\vtheta}, \boldsymbol{\veta})-R_{1}(\vtheta)\\
        &:=\RS^\mathrm{drop}(\boldsymbol{\vtheta}, \boldsymbol{\veta})-\frac{1-p}{2np} \sum_{i=1}^{n}\sum_{j=1}^{m_{L-1}}\Vert\mW^{[L]}_{j}f^{[L-1]}_{\vtheta, j}(\vx_i)\Vert^2, 
    \end{aligned}    
\end{equation}
where $\tilde{R}_2(\vtheta,\veta)$ is defined as $(\varepsilon/4) \left\|\nabla_{\boldsymbol{\vtheta}} \RS^\mathrm{drop}\left(\boldsymbol{\vtheta}, \boldsymbol{\veta}\right)\right\|^{2}$ for convenience, and we have  $\Exp_{\veta}\tilde{R}_2(\vtheta,\veta)=R_2(\vtheta)$.  For each $L_i, i \in [4]$, we explicitly add or subtract the penalty term of either $R_1(\vtheta)$ or $\tilde{R}_2(\vtheta,\veta)$ to study their effect on dropout regularization. Therefore, $L_1(\vtheta)$ and $L_{3}(\vtheta,\veta)$ are used to study the effect of $R_1(\vtheta)$, while $L_2(\vtheta,\veta)$ and $L_{4}(\vtheta,\veta)$ are for $R_2(\vtheta)$.

We first study the effect of two regularization terms on the generalization of NNs. As shown in Fig. \ref{fig:compar1}, we compare the test accuracy obtained by training with the above four distinct loss functions under different dropout rates and utilize the results of $\RS(\boldsymbol{\vtheta})$ and $\RS^\mathrm{drop}(\boldsymbol{\vtheta}, \boldsymbol{\veta})$ as reference benchmarks. Two different learning rates are considered, with the solid and dashed lines corresponding to $\varepsilon=0.05$ and $\varepsilon=0.005$, respectively. As shown in Fig. \ref{fig:compar1}(a), both approaches show that the training with the $R_1(\vtheta)$ regularization term finds a solution that almost has the same test accuracy as the training with dropout. For $\tilde{R}_2(\vtheta,\veta)$, as shown in Fig. \ref{fig:compar1}(b), the effect of $\tilde{R}_2(\vtheta,\veta)$ only marginally improves the generalization ability of full-batch gradient descent training in comparison to the utilization of $R_1(\vtheta)$.


\begin{figure}[h]
	\centering
	\subfloat[]{\includegraphics[width=0.4\textwidth]{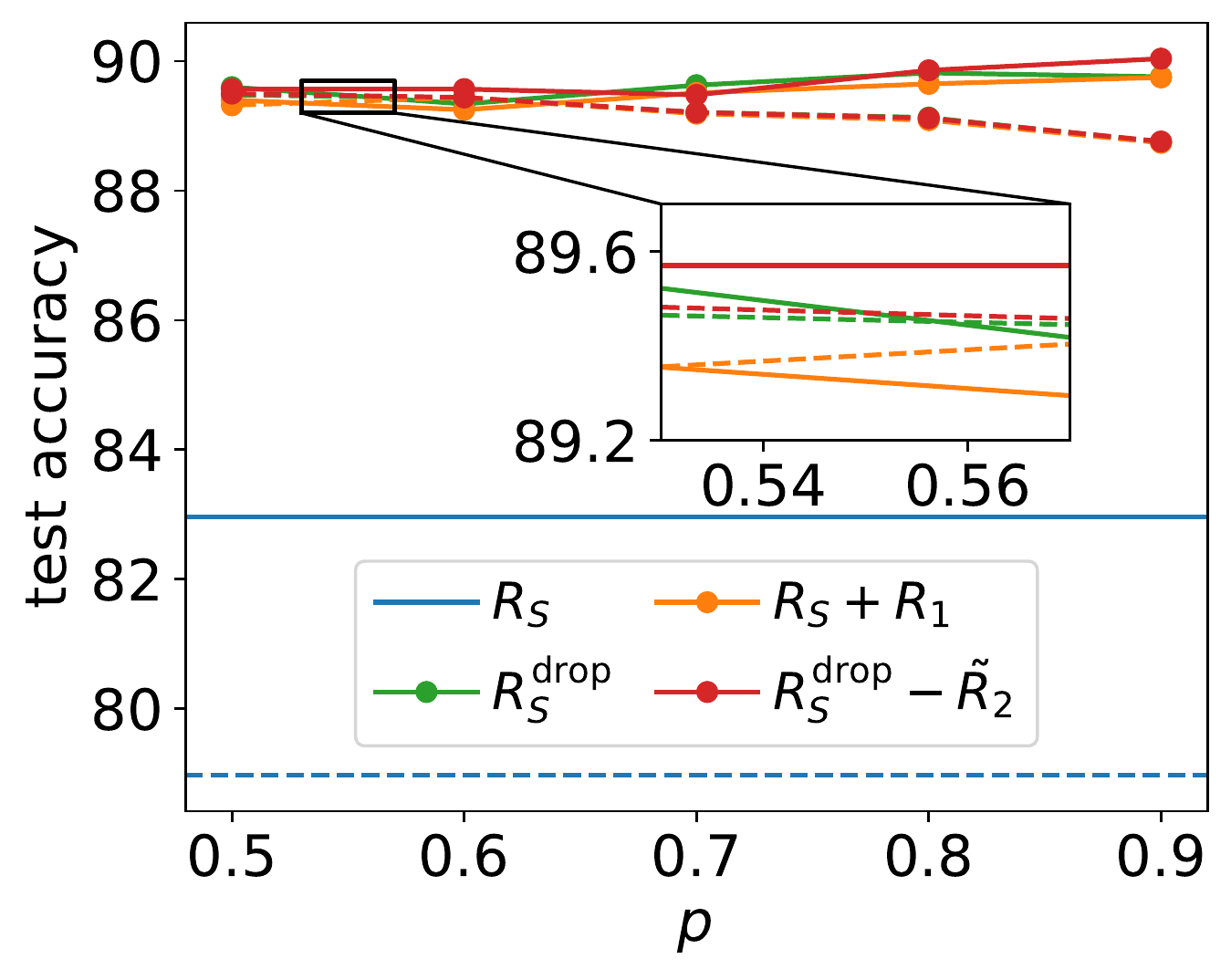}}
    \subfloat[]{\includegraphics[width=0.4\textwidth]{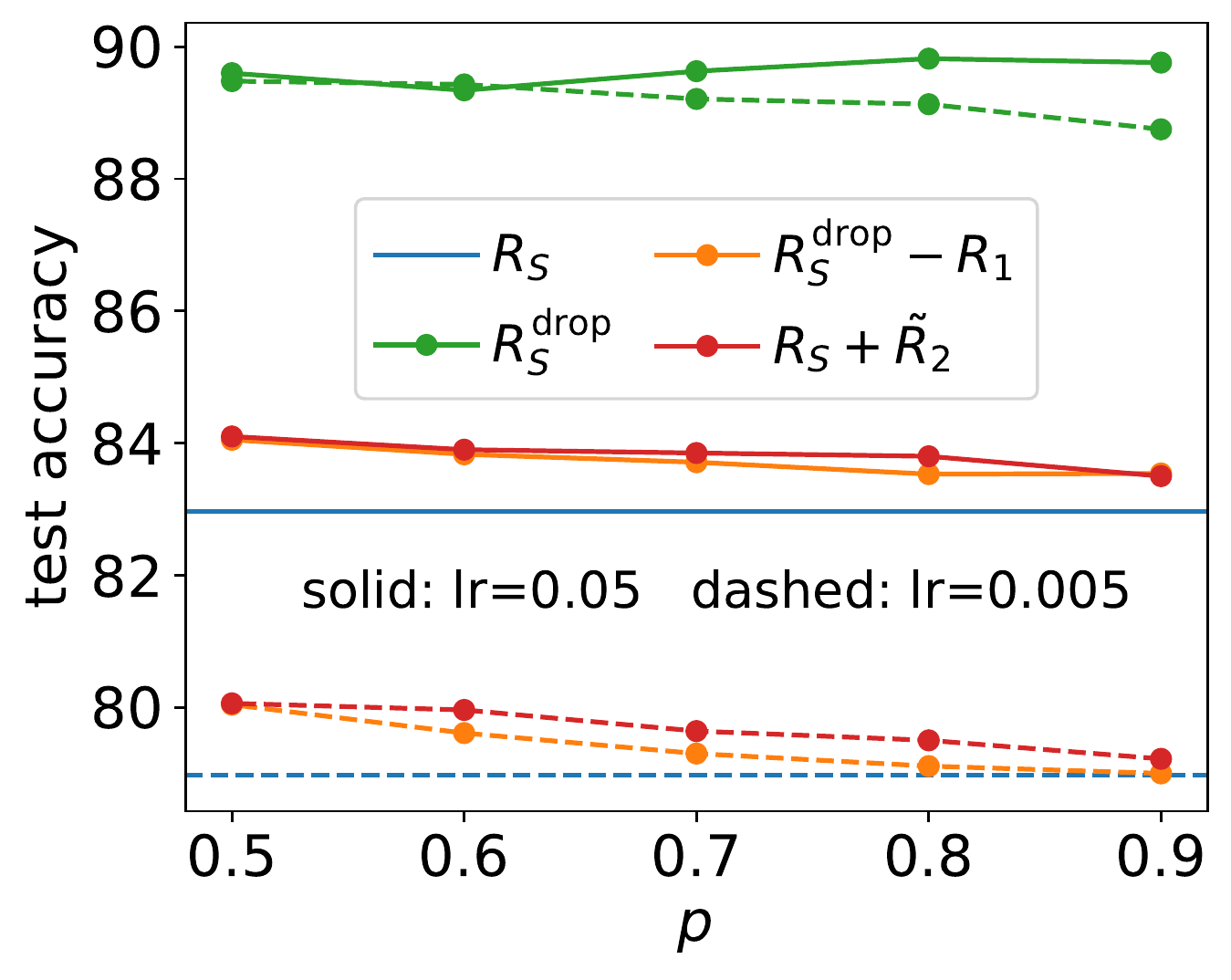}}
    \caption{The classification task on the MNIST dataset (the first 1000 images) using the FNN with size $784$-$1000$-$10$. The test accuracy obtained by training with $L_1(\vtheta), \cdots, L_4(\vtheta)$ and $\RS(\boldsymbol{\vtheta})$, $\RS^\mathrm{drop}(\boldsymbol{\vtheta}, \boldsymbol{\veta})$ under different dropout rates and learning rates. The solid line represents the test accuracy of the network under a large learning rate ($\varepsilon=0.05$), and the dashed line represents the test accuracy of the network under a small learning rate ($\varepsilon=0.005$). \label{fig:compar1}}
\end{figure} 

Then we study the effect of two regularization terms on flatness. To this end, we show a one-dimensional cross-section of the loss $R_S(\vtheta)$ by the interpolation between two minima found by the training of two different loss functions. For either $R_1(\vtheta)$ or $\tilde{R}_2(\vtheta,\veta)$, we use addition or subtraction to study its effect. As shown in Fig. \ref{fig:compare_flatness}(a), for $R_1(\vtheta)$, the loss value of the interpolation between the minima found by the addition approach ($L_1$) and the subtraction approach ($L_3$) stays near zero, which is similar for $\tilde{R}_2(\vtheta,\veta)$ in Fig. \ref{fig:compare_flatness}(b), showing that the higher-order terms of the learning rate $\varepsilon$ in the modified equation have less influence on the training process. We then compare the flatness of minima found by the training with $R_1(\vtheta)$ and $\tilde{R}_2(\vtheta,\veta)$ as illustrated in  Fig. \ref{fig:compare_flatness}(c-f). The results indicate that the minima obtained by the training with $R_1(\vtheta)$ exhibit greater flatness than those obtained by training with $\tilde{R}_2(\vtheta,\veta)$.

The experiments in this section show that, compared with SGD, the unique implicit regularization of dropout, $R_1(\vtheta)$, plays a significant role in improving the generalization and finding flat minima. 



\begin{figure}[h]
	\centering
	\subfloat[$L_1 \& L_3$]{\includegraphics[width=0.3\textwidth]{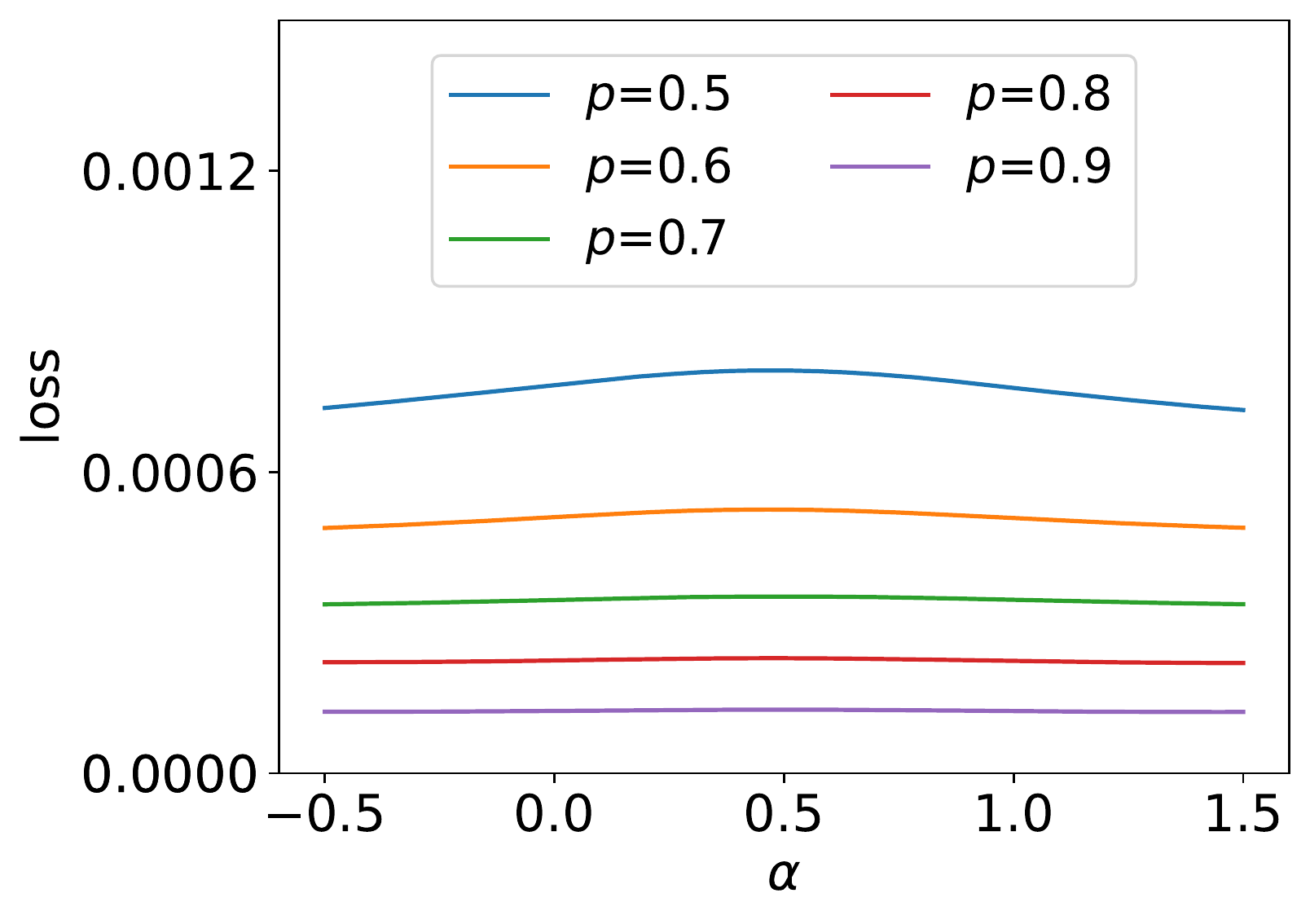}}
    \subfloat[$L_2 \& L_4$]{\includegraphics[width=0.3\textwidth]{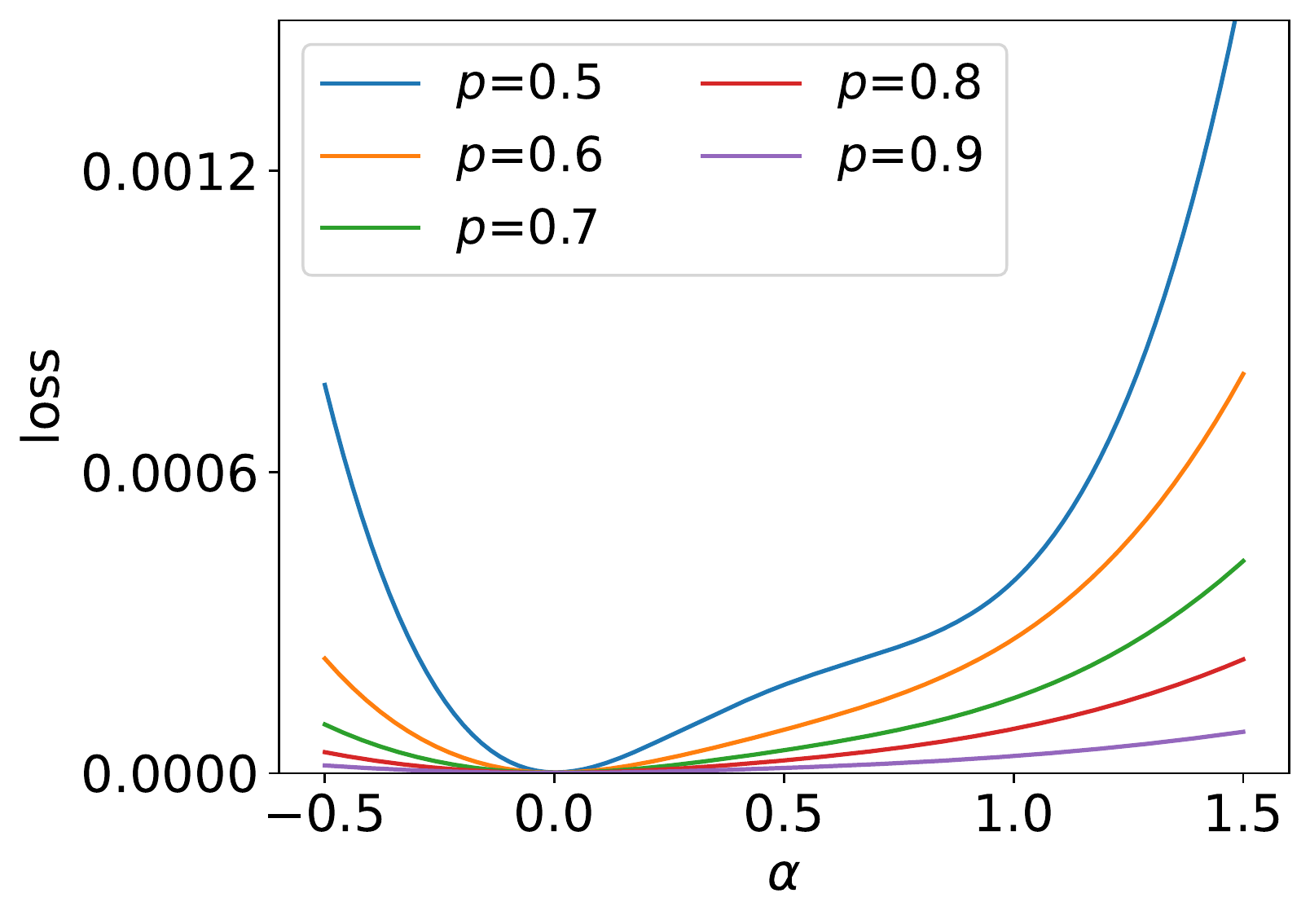}}
	\subfloat[$L_1 \& L_2$]{\includegraphics[width=0.3\textwidth]{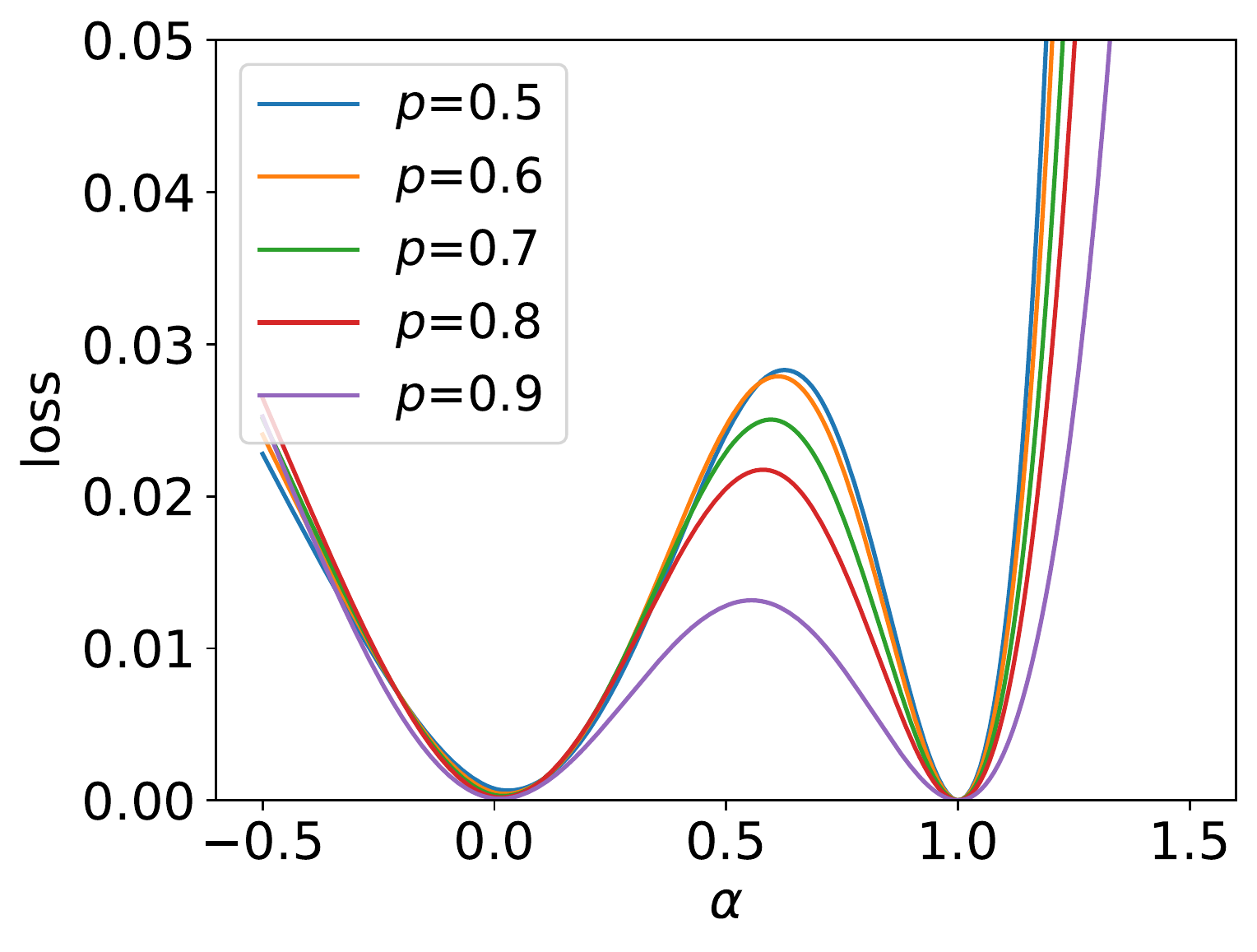}}\\
    \subfloat[$L_1 \& L_4$]{\includegraphics[width=0.3\textwidth]{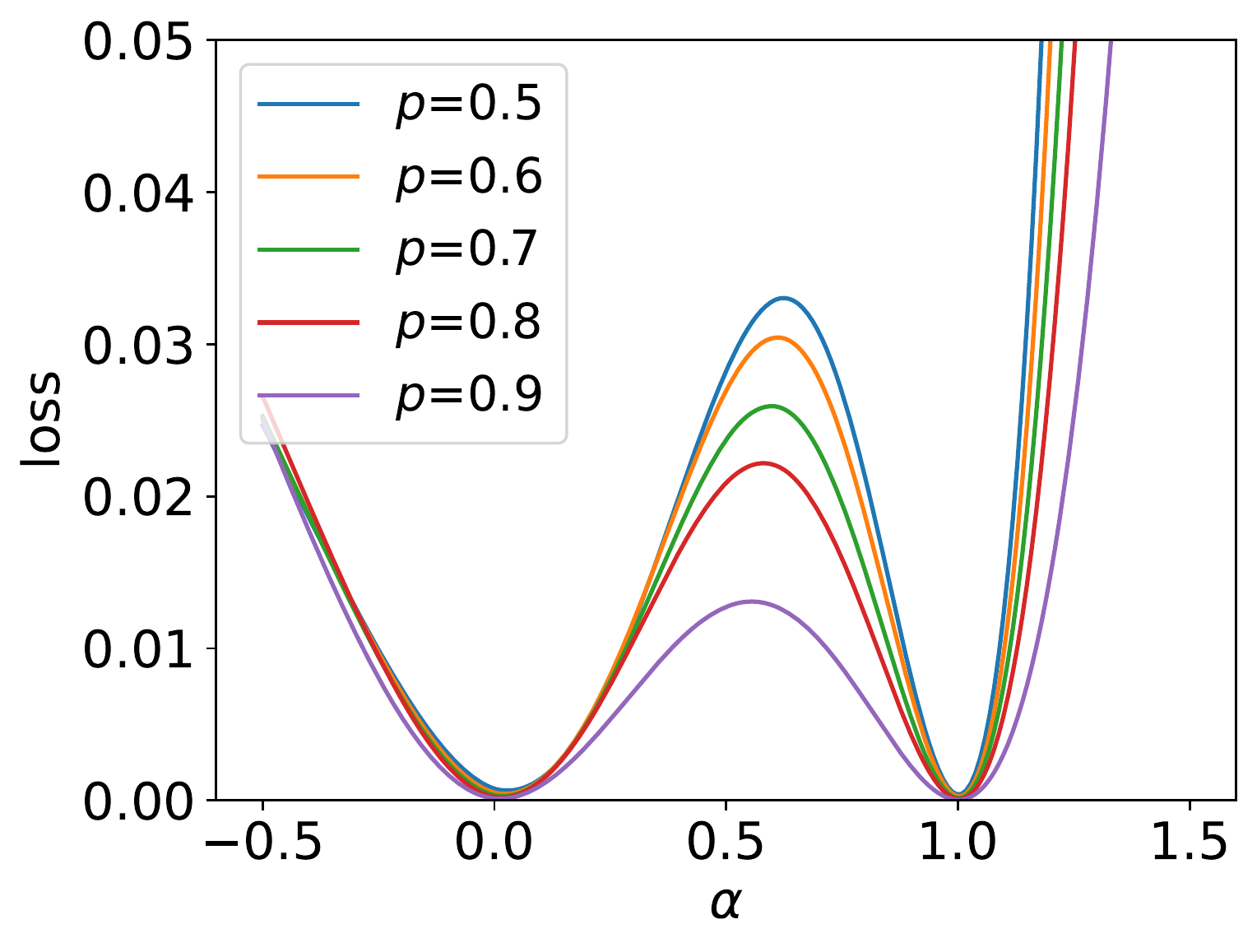}}
	\subfloat[$L_2 \& L_3$]{\includegraphics[width=0.3\textwidth]{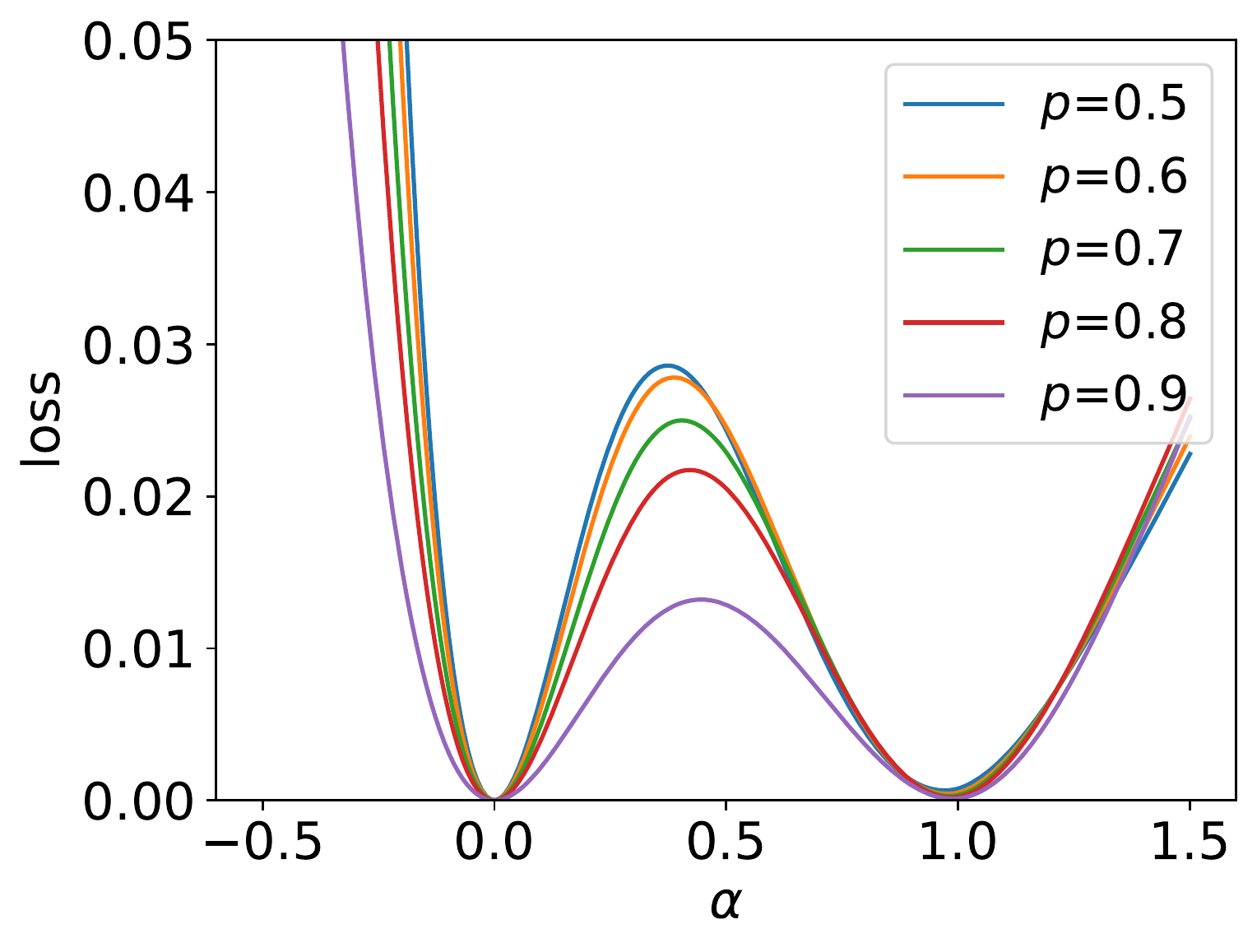}}
    \subfloat[$L_3 \& L_4$]{\includegraphics[width=0.3\textwidth]{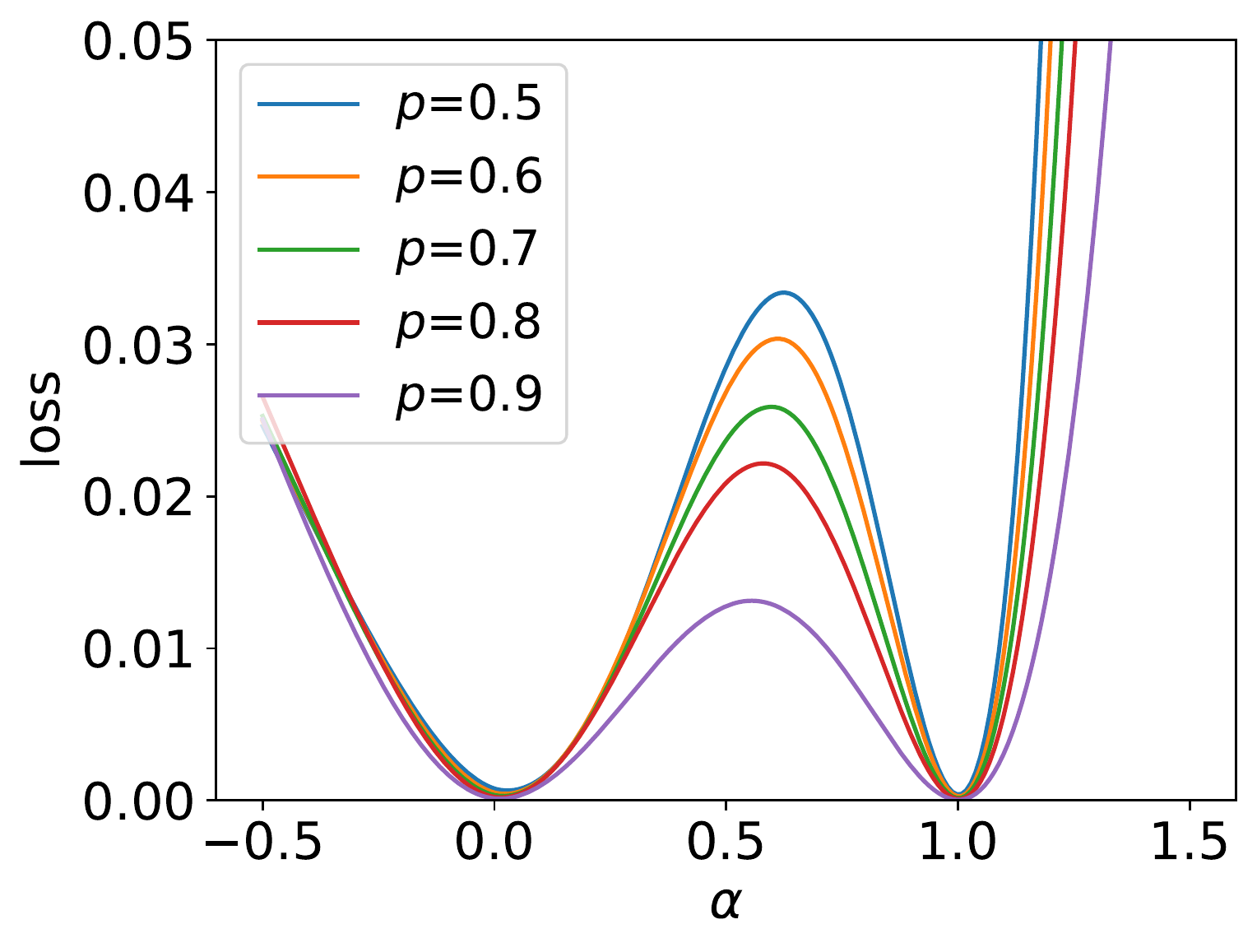}}
    \caption{The classification task on the MNIST dataset (the first 1000 images) using the FNN with size $784$-$1000$-$10$. The $R_S(\vtheta)$ value for interpolation between models with $\alpha$ interpolation factor. For $L_i \& L_j$, there is one trained model at $\alpha=0$ (trained by loss function $L_i$), and the other is at $\alpha=1$  (trained by loss function $L_j$). Different curves represent different dropout rates used for training. \label{fig:compare_flatness}}
    
\end{figure}

\section{Conclusion and Discussion}
In this work, we theoretically study the implicit regularization of dropout and its role in improving the generalization performance of neural networks. Specifically, we derive two implicit regularization terms, $R_1(\vtheta)$ and $R_2(\vtheta)$, and validate their efficacy through numerical experiments.
One important finding of this work is that the unique implicit regularization term $R_1(\vtheta)$ in dropout, unlike SGD, is a key factor in improving the generalization and flatness of the dropout solution. We also found that $R_1(\vtheta)$ can facilitate the weight condensation during training, which may establish a link among weight condensation, flatness, and generalization for further study. This work reveals rich and unique properties of dropout, which are fundamental to a comprehensive understanding of dropout.

Our study also sheds light on the broader issue of simplicity bias in deep learning. We observed that dropout regularization tends to impose a bias toward simple solutions during training, as evidenced by the weight condensation and flatness effects. This is consistent with other perspectives on simplicity bias in deep learning, such as the frequency principle\cite{xu2019training,xu2019frequency,zhang2021linear,luo2019theory,xu2022overview}, which reveals that neural networks often learn data from low to high frequency. Our analysis of dropout regularization provides a detailed understanding of how simplicity bias works in practice, which is essential for understanding why over-parameterized neural networks can fit the training data well and generalize effectively to new data.

Finally, our work highlights the potential benefits of dropout regularization in training neural networks, particularly in the linear regime. As we have shown, dropout regularization can induce weight condensation and avoid the slow training speed often encountered in highly nonlinear networks due to the fact that the training trajectory is close to the stationary point \cite{zhang2021embedding,zhang2022embedding}. This may have important implications for the development of more efficient and effective deep learning algorithms.

\section*{Acknowledgments}
This work is sponsored by the National Key R\&D Program of China  Grant No. 2022YFA1008200, the Shanghai Sailing Program, the Natural Science Foundation of Shanghai Grant No. 20ZR1429000, the National Natural Science Foundation of China Grant No. 62002221, Shanghai Municipal of Science and Technology Major Project No. 2021SHZDZX0102, and the HPC of School of Mathematical Sciences and the Student Innovation Center, and the Siyuan-1 cluster supported by the Center for High Performance Computing at Shanghai Jiao Tong University.

\bibliography{dl}
\bibliographystyle{unsrt}

\newpage
\appendix
\section{Experimental Setups} \label{app:1}
For Fig. \ref{pic:condense_toy}, Fig. \ref{pic:condense_relu},  Fig. \ref{pic:sgd_condense_relu} and Fig. \ref{pic:condense_heat_relu}, we use the ReLU FNN with the width of $1000$ to fit the target function as follows, 
\begin{equation*}
    f(x)=\frac{1}{2}\sigma(-x-\frac{1}{3})+\frac{1}{2}\sigma(x-\frac{1}{3}),
\end{equation*}
where $\sigma(x)=\ReLU(x)$. For Fig. \ref{pic:sgd_condense_relu}, we train the network using Adam with a learning rate of $1\times10^{-4}$ and a batch size of $2$. For Fig. \ref{pic:condense_relu}, we add dropout layers behind the hidden layer with $p=0.9$ for the two-layer experiments and add dropout layers between two hidden layers and behind the last hidden layer with $p=0.9$ for the three-layer experiments. We train the network using Adam with a learning rate of $1\times10^{-4}$. We initialize the parameters in the linear regime, $\vtheta \sim N\left(0, \frac{1}{m^{0.2}}\right)$, where $m=1000$ is the width of the hidden layer.

For Fig. \ref{fig:R1}, Fig. \ref{fig:compar1}, and Fig. \ref{fig:compare_flatness}, we use the FNN with size $784$-$1000$-$10$ to classify the MNIST dataset (the first 1000 images). We add dropout layers behind the hidden layer with different dropout rates. We train the network using GD with a learning rate of $5\times10^{-3}$.

For Fig. \ref{fig:R2}, Fig. \ref{fig:R22}(b), we use VGG-9 \cite{simonyan2014very} w/o dropout layers to classify CIFAR-10 using GD or SGD. For experiments with dropout layers, we add dropout layers after the pooling layers, the dropout rates of dropout layers are $0.5$. For the experiments shown in Fig. \ref{fig:R2}, we only use the first 1000 images for training to compromise with the computational burden. 

For Fig. \ref{pic:condense_tanh}, Fig. \ref{pic:add_dropout_tanh}  Fig. \ref{pic:sgd_condense_tanh} and Fig. \ref{pic:condense_heat_tanh}, we use the tanh FNN with the width of $1000$ to fit the target function as follows, 
\begin{equation*}
    f(x)=\sigma(x-6)+\sigma(x+6),
\end{equation*}
where $\sigma(x)=\mathrm{tanh}(x)$. We add dropout layers behind the hidden layer with $p=0.9$ for the two-layer experiments and add dropout layers between two hidden layers and behind the last hidden layer with $p=0.9$ for the three-layer experiments. We train the network using Adam with a learning rate of $1\times10^{-4}$. We initialize the parameters in the linear regime, $\vtheta \sim N\left(0, \frac{1}{m^{0.2}}\right)$, where $m=1000$ is the width of the hidden layer. For Fig. \ref{pic:sgd_condense_tanh}, we train the network using Adam with a batch size of $2$. For Fig. \ref{pic:condense_heat_tanh}, each test error is averaged over $10$ trials with random initialization. 

For Fig. \ref{fig:condense_nd}(a,b), we use the FNN with size $10$-$100$-$1$ to fit the target function, 
\begin{equation*}
    f(x)=\sigma(\sum_{i=1}^{10}x_i),
\end{equation*}
where $\sigma(x)=\mathrm{tanh}(x)$, $\vx\in \sR^{10}$, $x_i$ is the $i$th component of $\vx$ and the training size is $30$. We add dropout layers behind the hidden layer with $p=0.5$. We train the network using Adam with a learning rate of $0.001$.

For Fig. \ref{fig:condense_nd}(c), we use ResNet-18 to classify CIFAR-10 with and without dropout. We add dropout layers behind the last activation function of each block with $p=0.5$. We train the network using Adam with a learning rate of $0.001$ and a batch size 0f $128$.

For Fig. \ref{fig:flatness_cnn}(a), we use the FNN with size $784$-$1024$-$1024$-$10$. We add dropout layers behind the first and the second layers with $p=0.8$ and $p=0.5$, respectively. We train the network using default Adam optimizer \cite{kingma2015adam} with a learning rate of $1\times10^{-4}$.

For Fig. \ref{fig:flatness_cnn}(b), we use VGG-9 to compare the loss landscape flatness w/o dropout layers. For experiments with dropout layers, we add dropout layers after the pooling layers, with $p=0.8$. Models are trained using full-batch GD with Nesterov momentum, with the first $2048$ images as training set for $300$ epochs. The learning rate is initialized at $0.1$, and divided by a factor of 10 at epochs 150, 225, and 275.

For Fig. \ref{fig:flatness_cnn}(c), we use ResNet-20 \cite{he2016deep} to compare the loss landscape flatness w/o dropout layers. For experiments with dropout layers, we add dropout layers after the convolutional layers with $p=0.8$. We only consider the parameter matrix corresponding to the weight of the first convolutional layer of the first block of the ResNet-20. Models are trained using full-batch GD, with a training size of $50000$ for $1200$ epochs. The learning rate is initialized at $0.01$.

For Fig. \ref{fig:flatness_cnn}(d), we use transformer \cite{vaswani2017attention} with $d_{\mathrm{model}}=50, d_k=d_v=20, d_{\mathrm{ff}}=256, h=4, N=3$, the meaning of the parameters is consistent with the original paper. For experiments with dropout layers, we apply dropout to the output of each sub-layer, before it is added to the sub-layer input and normalized. In addition, we apply dropout to the sums of the embeddings and the positional encodings in both the encoder and decoder stacks. We set $p=0.9$ for dropout layers. For the English-German translation problem, we use the cross-entropy loss with label smoothing trained by full-batch Adam based on the Multi30k dataset. The learning rate strategy is the same as that in \cite{vaswani2017attention}. The warm-up step is $4000$ epochs, the training step is $10000$ epochs. We only use the first $2048$ examples for training to compromise with the computational burden.

For Fig. \ref{fig:R22}(a), we classify the
Fashion-MNIST dataset by training four-layer NNs of width $4096$ with a training
batch size of $64$. We add dropout layers behind the hidden layers with different dropout rates. The learning rate is $5\times10^{-3}$.

\newpage
\section{Proofs for Main Paper}

\subsection{Proof for Lemma \ref{lem:1}}
\begin{lemma*}[\textbf{the expectation of dropout loss}] 
       Given an $L$-layer FNN with dropout $\vf_{\vtheta, \veta}^\mathrm{drop} (\vx)$, under Setting 1--3, we have the expectation of dropout MSE: 
    \begin{align*}
     \Exp_{\veta}&(R_S^\mathrm{drop}\left(\vtheta, \veta\right))= R_S\left(\vtheta\right)+ R_1(\vtheta).
    \end{align*}

\end{lemma*}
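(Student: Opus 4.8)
The plan is to exploit the fact that under Setting~\ref{set:1} dropout acts only on the final hidden layer, so that the perturbed output is an \emph{affine} function of the noise vector $\veta$. Once this is in hand, the identity follows from a bias--variance style expansion of the squared error in which only the first two moments of $\veta$ are needed.

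First I would rewrite the dropout output at each sample. Since $m_L=1$ and each coordinate of $\veta$ multiplies exactly one neuron contribution, expanding the Hadamard product in Setting~\ref{set:1} gives
\begin{equation*}
    \vf_{\vtheta,\veta}^{\mathrm{drop}}(\vx_i)=\vf(\vx_i,\vtheta)+\sum_{j=1}^{m_{L-1}}(\veta)_j\,\mW^{[L]}_{j}f^{[L-1]}_{\vtheta,j}(\vx_i).
\end{equation*}
Writing the clean residual as $\ve_{\vtheta,i}=\vf(\vx_i,\vtheta)-\vy_i$ and the dropout-induced perturbation as $\delta_i:=\sum_{j}(\veta)_j\mW^{[L]}_{j}f^{[L-1]}_{\vtheta,j}(\vx_i)$, the per-sample residual of the dropout network becomes simply $\ve_{\vtheta,i}+\delta_i$.

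Next I would expand the MSE. Substituting into Equation~(\ref{eq:RSdrop}) under Setting~\ref{set:2} yields $R_S^{\mathrm{drop}}(\vtheta,\veta)=\tfrac{1}{2n}\sum_i(\ve_{\vtheta,i}^2+2\ve_{\vtheta,i}\delta_i+\delta_i^2)$, and I would take $\Exp_{\veta}$ term by term. The leading piece is deterministic and equals $\RS(\vtheta)$. For the remaining pieces I would record the first two moments of the i.i.d.\ coordinates of $\veta$: a direct calculation gives $\Exp_{\veta}(\veta)_k=p\cdot\tfrac{1-p}{p}-(1-p)=0$ and $\Exp_{\veta}(\veta)_k^2=p\cdot(\tfrac{1-p}{p})^2+(1-p)=\tfrac{1-p}{p}$. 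Zero mean kills the cross term, since $\Exp_{\veta}[\ve_{\vtheta,i}\delta_i]=\ve_{\vtheta,i}\Exp_{\veta}[\delta_i]=0$. For the quadratic term, expanding $\delta_i^2$ and using independence across coordinates (so $\Exp_{\veta}[(\veta)_j(\veta)_k]=0$ for $j\neq k$) collapses the double sum onto its diagonal, leaving $\Exp_{\veta}[\delta_i^2]=\tfrac{1-p}{p}\sum_j\norm{\mW^{[L]}_{j}f^{[L-1]}_{\vtheta,j}(\vx_i)}^2$. Summing over $i$ against the factor $\tfrac{1}{2n}$ reproduces exactly $R_1(\vtheta)$ as defined in Equation~(\ref{R1}), completing the identity.

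There is no deep obstacle here; the argument is a moment computation. The only points requiring care are (a) correctly evaluating the second moment $\Exp_{\veta}(\veta)_k^2=(1-p)/p$, which supplies the prefactor of $R_1$, and (b) justifying that the off-diagonal contributions to $\Exp_{\veta}[\delta_i^2]$ vanish, which relies on the coordinates of $\veta$ being sampled \emph{independently} so that cross-moments factor into products of zero means. The restriction $m_L=1$ merely streamlines notation; the same decomposition holds coordinatewise in the general vector-output case, with $\norm{\cdot}^2$ accounting for the output dimension.
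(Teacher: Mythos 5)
Your proposal is correct and is essentially the same argument as the paper's proof: both expand the dropout output as the clean output plus the noise-weighted sum $\sum_j(\veta)_j\mW^{[L]}_{j}f^{[L-1]}_{\vtheta,j}(\vx_i)$, kill the cross term using $\Exp_{\veta}(\veta)_j=0$, and collapse the quadratic term to its diagonal via independence and $\Exp_{\veta}(\veta)_j^2=(1-p)/p$, recovering $R_1(\vtheta)$. The moment calculations are accurate, so nothing further is needed.
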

\begin{proof}

With the definition of MSE, we have, 
\begin{align*}
     R&_S^\mathrm{drop}\left(\vtheta, \veta\right)=\frac{1}{2n}\sum_{i=1}^{n}\left(\vf_{\vtheta, \veta}^\mathrm{drop} (\vx_i)-\vy_i \right)^2 \\
    &=\frac{1}{2n}\sum_{i=1}^{n}\left(\vf_{\vtheta} (\vx_i)-\vy_i+\sum_{j=1}^{m_{L-1}}\mW^{[L]}_{j} (\veta)_{j} f^{[L-1]}_{\vtheta, j}(\vx_i)  \right)^2.
\end{align*}
With the definition of $\veta$, we have $\Exp(\veta)=\mzero$, thus,
\begin{equation*}
\begin{aligned}
    \Exp_{\veta} R_S^\mathrm{drop}\left(\vtheta, \veta\right)&= R_S\left(\vtheta\right)\\
    &+ \frac{1}{2n}\Exp_{\veta} \sum_{i=1}^{n}\left( \sum_{j=1}^{m_{L-1}}\mW^{[L]}_{j} (\veta)_{j} f^{[L-1]}_{\vtheta, j}(\vx_i)\right)^2.
\label{equ:2}
\end{aligned}
\end{equation*}
At the same time, we have $\Exp((\veta)_k(\veta)_j)=0, k \neq j$ and $\Exp((\veta)_k^2)=\frac{1-p}{p}$. Thus we have,
\begin{equation*}
\begin{aligned}
    \Exp_{\veta}  R_S^\mathrm{drop}\left(\vtheta, \veta\right)&= R_S\left(\vtheta\right)\\
    &+\frac{1-p}{2np}\sum_{i=1}^{n}\sum_{j=1}^{m_{L-1}}\Vert\mW^{[L]}_{j}f^{[L-1]}_{\vtheta, j}(\vx_i)\Vert^2.
\end{aligned}
\end{equation*}
\end{proof}

\subsection{Proof for the modified gradient flow of dropout}

\textbf{Modified gradient flow of dropout.} Under Setting 1--3, the mean iterate of $\vtheta$, with a learning rate $\varepsilon \ll 1$, stays close to the path of gradient flow on a modified loss $\dot{\vtheta}=-\nabla_{\vtheta}\tilde{R}_S^\mathrm{drop}(\vtheta,\veta)$, where the modified loss $\tilde{R}_S^{\mathrm{drop}}(\vtheta,\veta)$ satisfies:
\begin{equation*}
    \Exp_{\veta}\tilde{R}_S^{\mathrm{drop}}(\vtheta,\veta)=\RS \left(\vtheta\right)+ R_1(\vtheta)+R_2(\vtheta)+O(\varepsilon^2).
\end{equation*}


\begin{proof}

We assume the ODE of the GD with dropout has the form $\dot{\vtheta }=f(\vtheta)$, and introduce a modified gradient flow $\dot{\vtheta}=\tilde{f}(\vtheta)$, where, 
\begin{equation*}
    \tilde{f}(\vtheta)=f(\vtheta)+\varepsilon f_{1}(\vtheta)+\varepsilon^{2} f_{2}(\vtheta)+O(\varepsilon^{3}).
\end{equation*}

Thus by Taylor expansion, for small but finite $\varepsilon$, we have, 
\begin{equation*}
\begin{aligned}
\vtheta(t+\varepsilon) &=\vtheta(t)+\varepsilon \widetilde{f}(\vtheta(t))+\frac{\varepsilon^{2}}{2} \nabla_{\vtheta} \widetilde{f}(\vtheta(t)) \widetilde{f}(\vtheta(t))+O\left(\varepsilon^{3}\right) \\
&=\vtheta(t)+\varepsilon f(\vtheta(t))\\
& \quad +\varepsilon^{2}\left(f_{1}(\vtheta(t))+\frac{1}{2} \nabla f(\vtheta(t)) f(\vtheta(t))\right)+O\left(\varepsilon^{3}\right).
\end{aligned}
\end{equation*}

For GD, we have, 
\begin{equation*}
    \vtheta_{t+1}=\vtheta_{t}+\varepsilon f(\vtheta_t).
\end{equation*}

Combining $\vtheta(t+\varepsilon)=\vtheta_{t+1}$, $\vtheta(t)=\vtheta_{t}$, we have,
\begin{equation*}
\begin{aligned}
f(\vtheta) &= -\nabla_{\vtheta}\RS^\mathrm{drop}(\vtheta, \veta)\\
f_{1}(\vtheta)&=-\frac{1}{2}\nabla_{\vtheta} f(\vtheta(t)) f(\vtheta(t))\\
&=-\frac{1}{4}\nabla_{\vtheta} \Vert f(\vtheta(t)) \Vert^2\\
&=-\frac{1}{4}\nabla_{\vtheta} \Vert \nabla_{\vtheta}\RS^\mathrm{drop}(\vtheta, \veta) \Vert^2.
\end{aligned}
\end{equation*}

Thus, with the random variable $\veta$, 

\begin{equation*}
    \tilde{f}(\vtheta)=-\nabla_{\vtheta}\RS^\mathrm{drop}(\vtheta, \veta)- \frac{\varepsilon}{4}\nabla_{\vtheta} \Vert \nabla_{\vtheta}\RS^\mathrm{drop}(\vtheta, \veta) \Vert^2+O(\varepsilon^{2}).
\end{equation*}
In the average sense and combining Lemma \ref{lem:1}, we have, 
\begin{equation*}
\begin{aligned}
    \dot{\vtheta}&=\Exp_{\veta}\left(-\nabla_{\vtheta}\RS^\mathrm{drop}(\vtheta, \veta)- \frac{\varepsilon}{4}\nabla_{\vtheta} \Vert \nabla_{\vtheta}\RS^\mathrm{drop}(\vtheta, \veta) \Vert^2+O(\varepsilon^{2})\right)\\
    &=-\nabla_{\vtheta}\Exp_{\veta}\left(\RS^\mathrm{drop}(\vtheta, \veta)+ \frac{\varepsilon}{4} \Vert \nabla_{\vtheta}\RS^\mathrm{drop}(\vtheta, \veta) \Vert^2+O(\varepsilon^{2})\right)\\
    &=-\nabla_{\vtheta}\Big ( \RS \left(\vtheta\right)+ \frac{1-p}{2np}\sum_{i=1}^{n} \sum_{j=1}^{m_{L-1}}\Vert \mW^{[L]}_{j}f^{[L-1]}_{\vtheta, j}(\vx_i) \Vert^2 \\
    & \quad +\frac{\varepsilon}{4}\Exp_{\veta} \Vert \nabla_{\vtheta} \RS ^\mathrm{drop}\left(\vtheta, \veta\right) \Vert ^2 + O(\varepsilon^{2})\Big).
\end{aligned}
\end{equation*}

We have, 
\begin{equation*}
\begin{aligned}
    \tilde{R}_S^{\mathrm{drop}}(\vtheta)&=\RS \left(\vtheta\right)+ \frac{1-p}{2np}\sum_{i=1}^{n} \sum_{j=1}^{m_{L-1}}\Vert \mW^{[L]}_{j}f^{[L-1]}_{\vtheta, j}(\vx_i) \Vert^2\\
    & \quad +\frac{\varepsilon}{4}\Exp_{\veta} \Vert \nabla_{\vtheta} \RS ^\mathrm{drop}\left(\vtheta, \veta\right) \Vert ^2+ O(\varepsilon^{2}).
\end{aligned}
\end{equation*}

\end{proof}


\subsection{Proof for Theorem \ref{thm:perb}}

\begin{theorem*}[\textbf{the effect of $R_1(\vtheta)$ on facilitating condensation}] Consider the following two-layer ReLU NN, 
\begin{equation*}
	f_{\vtheta}(x)=\sum_{j=1}^{m} a_j \sigma(w_j x + b_j)+a x+b, 
\end{equation*}
trained with a one-dimensional dataset $S=\{(x_{i}, y_{i})\}_{i=1}^n$, where $x_1<x_2< \cdots <x_n$. When the MSE of training data $R_S(\vtheta)=0$, if any of the following two conditions holds:

(i) the number of convexity changes of NN in $(x_ {1}, x_ {n})$ can be reduced while $R_S(\vtheta)=0$;

(ii) there exist two neurons with indexes $k_1 \neq k_2$, such that they have the same sign, i.e.,  $\mathrm{sign}(w_{k_1})=\mathrm{sign}(w_{k_2})$, and different intercept points in the same interval, i.e., $-{b_{k_1}}/{w_{k_1}}, -{b_{k_2}}/{w_{k_2}}\in [x_ {i}, x_ {i+1}]$, and $-{b_{k_1}}/{w_{k_1}}\neq -{b_{k_2}}/{w_{k_2}}$ for some $i \in [2: n-1]$;\\
then there exists parameters $\vtheta^{\prime}$, an infinitesimal perturbation of $\vtheta$, s.t.,

(i) $R_S(\vtheta^{\prime})=0$;

(ii) $R_1(\vtheta^{\prime})<R_1(\vtheta)$.
\end{theorem*}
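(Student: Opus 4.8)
The plan is to construct an explicit infinitesimal perturbation $\vtheta^{\prime}$ in each of the two cases, and show directly that it preserves the interpolation condition $R_S=0$ while strictly decreasing $R_1(\vtheta)=\frac{1-p}{2np}\sum_{i,j}\Vert \mW^{[L]}_j f^{[L-1]}_{\vtheta,j}(\vx_i)\Vert^2$. Recall that for this two-layer ReLU network $R_1(\vtheta)$ is (up to the constant $\frac{1-p}{2np}$) the squared norm $\sum_j \norm{\vo_j}^2$ of the per-neuron output vectors over the data points, as already computed in the toy example. The key structural fact I would exploit is that moving a single ReLU neuron's intercept point $-b_j/w_j$ within a fixed inner interval $[x_i,x_{i+1}]$, while rescaling $a_j$ to keep the fit unchanged on the sample points, changes the neuron's contribution vector $\vo_j$ in a controlled one-parameter way; and that two neurons of the same sign with intercept points in the same interval act, on the sample points, like two copies of a single affine piece whose combined contribution can be reallocated.

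For case (ii) (same sign, two distinct intercept points $t_1\neq t_2$ in a common inner interval $[x_i,x_{i+1}]$, $i\in[2:n-1]$), I would argue as follows. Since the interval is inner, every sample point $x_m$ lies either entirely to the left of both kinks or entirely to the right; on the sample points the two neurons $a_{k_1}\sigma(w_{k_1}x+b_{k_1})$ and $a_{k_2}\sigma(w_{k_2}x+b_{k_2})$ are each an affine function of $x_m$ (zero on one side, linear on the other, with the same activation pattern because both intercepts lie strictly between two consecutive samples). Hence their joint output on the data is determined by the pair of effective slope-and-offset coefficients they contribute, and this joint affine contribution can be realized by a one-parameter family of $(t_1,t_2)$ positions keeping $R_S=0$. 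I would then treat $R_1$ as a function of the two intercept positions (with amplitudes solved to maintain the fit) and show that $\norm{\vo_{k_1}}^2+\norm{\vo_{k_2}}^2$ is strictly convex along the direction that merges $t_1$ and $t_2$, so that moving the two intercepts infinitesimally toward each other strictly decreases $R_1$; this is the quantitative version of the ``components perpendicular to the target must cancel'' heuristic in the toy example. Thus $\vtheta^{\prime}$ with $t_1,t_2$ nudged together (and amplitudes adjusted) satisfies both conclusions.

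For case (i), where the number of convexity changes can be reduced while keeping $R_S=0$, I would use the characterization that a convexity change at an interior breakpoint corresponds to neurons whose contributions create a ``kink'' that is not forced by the data. If the convexity-change count is not minimal, there exist two regions of opposite local convexity that can be flattened: concretely, there is a family of piecewise-linear interpolants with strictly fewer sign changes of the discrete second difference, all agreeing on $S$. I would pick a smooth path within this family that removes one convexity change, parametrized so that it starts at the current $\vtheta$, and compute $\frac{d}{ds}R_1$ along it; reducing a convexity change reduces the oscillation of the fitted function between sample points, which strictly lowers the $\sum_j\norm{\vo_j}^2$ energy, again because unnecessary opposing neuron contributions carry extra norm that cancels on $S$ but not in $R_1$. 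The infinitesimal step along this path gives the desired $\vtheta^{\prime}$.

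The main obstacle I anticipate is making the ``reallocation of neuron contributions'' rigorous as an infinitesimal perturbation of the actual parameters $(a_j,w_j,b_j)$ rather than of the abstract output vectors $\vo_j$ — in particular verifying that the activation patterns of all neurons on all sample points are preserved under the perturbation (so that $R_S$ stays exactly $0$ and the perturbation is genuinely infinitesimal and smooth), and handling the degenerate bookkeeping when a neuron's intercept sits exactly at a sample point or when $w_j$ is small. The cleanest route is probably to fix the activation pattern, express both $R_S$ and $R_1$ as smooth functions of the free parameters on the open set where that pattern holds, and exhibit an explicit descent direction for $R_1$ tangent to the constraint manifold $\{R_S=0\}$; strict decrease then follows from showing this direction is nonzero, which is exactly where conditions (i) and (ii) are used.
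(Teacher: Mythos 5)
Your overall strategy---freeze the activation pattern, treat $R_S$ and $R_1$ as smooth functions of the parameters, and exhibit a descent direction for $R_1$ tangent to $\{R_S=0\}$---is exactly the skeleton of the paper's proof, which realizes such directions as explicit $O(\varepsilon)$ perturbations of the two offending neurons. The genuine gap is that the concrete directions you propose fail precisely in the sub-cases where the two neurons' \emph{output} weights have opposite signs, and the theorem cannot avoid these sub-cases: condition (ii) constrains only $\mathrm{sign}(w_{k_1})=\mathrm{sign}(w_{k_2})$ and says nothing about $a_{k_1},a_{k_2}$, while a convexity change in condition (i) is by definition produced by a pair of neurons with $a_{k_1}a_{k_2}<0$. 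Your QM--AM / ``merge the kinks'' argument for (ii) presumes that $\vo_{k_1}$ and $\vo_{k_2}$ have the same sign on the active samples, so that the balanced configuration $\vo_{k_1}=\vo_{k_2}$ is feasible; when $a_{k_1}a_{k_2}<0$ the two contributions have opposite signs pointwise, the balanced configuration is unreachable, and merging the kinks strictly \emph{increases} $R_1$. Concrete check: write $p_j=a_jw_j$ and $t_j=-b_j/w_j$, so on samples to the right of both kinks neuron $j$ contributes $p_j(x-t_j)$, and preserving the fit means preserving $p_1+p_2$ and $p_1t_1+p_2t_2$ (the pair $(p_j,t_j)$ carries all degrees of freedom that matter on the data, so ``solving for amplitudes'' adds no generality). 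Take samples $\{-2,-1,2,3\}$, kinks $t_1=0$, $t_2=1$ in the inner interval $[-1,2]$, slopes $p_1=-1$, $p_2=2$. The constraints force $dp_2=-dp_1$ and $dp_1=-dt_1+2\,dt_2$, and a first-order computation gives, up to the positive constant $\tfrac{1-p}{2np}$, $dR_1=36\,dt_1-116\,dt_2$, which is strictly positive whenever $dt_1>0>dt_2$, i.e.\ for every merging direction. The directions that do decrease $R_1$ move both kinks to the right, toward the nearest active sample (e.g.\ $dt_1=2\,dt_2>0$ with slopes fixed gives $dR_1=-44\,dt_2<0$); this shrinks $|\vo_{k_1}|$ and $|\vo_{k_2}|$ pointwise rather than equalizing them, and it is exactly the perturbation the paper constructs for its $a_2>0$ sub-case.

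For case (i) the opposite-sign situation is unavoidable, and your argument there is, at bottom, a restatement of the conclusion: ``reducing oscillation lowers $\sum_j\norm{\vo_j}^2$'' is what must be proved, not a reason. The paper proves (i) by a four-way analysis over the sign patterns of $(w_{k_1},a_{k_1},w_{k_2},a_{k_2})$, and in the two cases where $w_{k_1}w_{k_2}<0$ the active sample sets of the two neurons are essentially disjoint, so no perturbation supported on the two neurons alone can keep $R_S$ exactly zero; the paper must additionally perturb the auxiliary linear term $ax+b$ to absorb the affine residual. Your plan never invokes this degree of freedom, so in those cases even the existence of a tangent direction is left unestablished. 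In short, your final paragraph identifies the right framework and the right obstacle, but the two mechanisms you offer do not implement it; the sign-by-sign constructions---including the essential use of $ax+b$---are the substantive content of the proof.
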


\begin{proof}
The implicit regularization term $R_1(\vtheta)$ can be expresses as follows:
\begin{equation*}
    R_1(\vtheta)=\frac{1-p}{2np}\sum_{i=1}^{n}\sum_{j=1}^{m}(a_{j}\sigma(w_{j}x_{i}+b_{j}))^2, 
\end{equation*}
where $m$ is the width of the NN, and $\sigma(x)=\ReLU(x)$. It is worth noting that the term $ax+b$ will be absorbed by $R_S(\vtheta)$, so it will not affect the value of $R_1(\vtheta)$.

First, we show the proof outline, which is inspired by \cite{blanc2020implicit}. However, \cite{blanc2020implicit} study the setting of noise SGD, and only consider the limitation of convexity changes. For the first part, i.e., the limitation of the number of convexity changes, the proof outline is as follows. For any set of consecutive datapoints $\{(x_i, y_i), (x_{i+1}, y_{i+1}), (x_{i+2}, y_{i+2})\}$, we assume the linear interpolation of three datapoints is convex. If $f_{\vtheta}(x)$ is not convex in $(x_i, x_{i+2})$, there exist two neurons, one with a positive output layer weight and the other with a negative output layer weight, and the intercept point of both neurons are within the interval $(x_i, x_{i+2})$. Then, we can make the intercept points of the two neurons move to both sides by giving a specific perturbation direction, and through this movement, the intercept point of the neurons can be gathered at the data points $x_i, x_{i+2}$. At the same time, we can verify that this moving mode can reduce the value of $R_1(\vtheta)$ term while keeping the training error at $0$. The main proof mainly classifies the different directions of the above two neurons to construct different types of perturbation direction.
For the second part, i.e.,  the limitation of the number of intercept points in the same inner interval, the proof outline is as follows. For two neurons with the same direction and different intercept points in the same interval $[x_ {i}, x_ {i+1}], i\in[n-1]$, without loss of generality, suppose $\{x_\tau\}_{\tau=i+1}^n$ is the input dataset activated by two neurons at the same time. According to the mean value inequality, when the output values of two neurons are constant at each data point $\{x_\tau\}_{\tau=i+1}^n$, the $R_1(\vtheta)$ term is minimized. We can achieve this by moving the two intercept points of neurons to the interval within the two intercept points through perturbation.

\textbf{The limitation of convexity changes.} Without loss of generality, suppose $f_{\vtheta}$ on the interval $\left(x_i, x_{i+2}\right)$ is convex. If $f(\theta, x)$ fits these three points, but is not convex, then it must have a convexity change. This convexity change corresponds to at least two ReLU neurons $a_j \sigma(w_j x + b_j), j \in [2]$, and the output weights of the two have opposite signs. WLOG, we assume the intercept point of the first neuron is less than the intercept point of the second neuron, i.e., $-b_1/w_1<-b_2/w_2$. After reasonably assuming that the output weights of neurons with small turning points are negative, we classify the signs of the two neurons as follows,
\begin{equation*}
\begin{aligned}
    \textrm{Case (1) : }&w_1>0, a_1<0, w_2>0, a_2>0, \\
    \textrm{Case (2) : }&w_1>0, a_1<0, w_2<0, a_2>0, \\
    \textrm{Case (3) : }&w_1<0, a_1<0, w_2>0, a_2>0, \\
    \textrm{Case (4) : }&w_1<0, a_1<0, w_2<0, a_2>0.
\end{aligned}
\end{equation*}

Case (1) : For any sufficiently small $\varepsilon>0$, we perturb the parameters of the two neurons as follows:

\begin{equation*}
\begin{array}{cl}
\tilde{w}_1=w_1(1-\varepsilon), & \tilde{b}_1=b_1+x_{i+1} w_1 \varepsilon, \\
\tilde{w}_2=w_2-\frac{a_1}{a_2}\left(\tilde{w}_1-w_1\right), & \tilde{b}_2=b_2-\frac{a_1}{a_2}\left(\tilde{b}_1-b_1\right).
\end{array}
\end{equation*}

Here we only consider the case where the intercept point of the first neuron $-\frac{b_1}{w_1}\leq x_{i+1}$. For the case where $-\frac{b_1}{w_1}> x_{i+1}$, it is easy to obtain from the proof of the limitation of the number of intercept points below. In the same way, we can only consider the case where $-\frac{b_2}{w_2}\geq x_{i+1}$. 
We first consider the case where $-\frac{b_1}{w_1}< x_{i+1}$ and $-\frac{b_2}{w_2}> x_{i+1}$. By studying the movement of the intercept point of the two neurons, we have,
\begin{equation*}
        -\frac{\tilde{b}_1}{\tilde{w}_1}-(-\frac{b_1}{w_1})=-\frac{\varepsilon\left(w_1 x_{i+1}+b_1\right)}{w_1(1-\varepsilon)}<0, 
\end{equation*}
where $0<\varepsilon<1$, $w_1>0$ and $w_1 x_{i+1}+b_1>0$. As for the second neuron, we have,
\begin{equation*}
        -\frac{\tilde{b}_2}{\tilde{w}_2}-(-\frac{b_2}{w_2})=\frac{w_1 a_1 \varepsilon\left(w_2 x_{i+1}+b_2\right)}{w_2\left(a_2 w_2+a_1 w_1 \varepsilon\right)}>0, 
\end{equation*}
where $0<\varepsilon<1$, $w_1>0$, $w_2>0$, $a_1<0$, $w_2 x_{i+1}+b_2<0$ and $a_2 w_2+a_1 w_1 \varepsilon>0$. Thus the intercept point of the first neuron moves left and the other moves right under the above perturbation. 

We then verify the invariance of the values of NN's output on the input data. We study this invariance within three data point sets $\{x_\tau\}_{\tau=1}^i$, $\{x_{i+1}\}$ and $\{x_\tau\}_{\tau=i+2}^n$. For $\{x_\tau\}_{\tau=1}^i$, the outputs of both neurons remain zero, so the output of the network is unchanged. For $\{x_{i+1}\}$, the output of the second neuron remains zero. As for the first neuron, we have,
\begin{equation*}
\begin{aligned}
       \tilde{a}_1\sigma(\tilde{w}_1 x_{i+1} \tilde{b}_1)&= a_1\sigma(w_1(1-\varepsilon) x_{i+1}+b_1+w_1 \varepsilon x_{i+1})\\
       &=a_1\sigma(w_1 x_{i+1}+b_1),
\end{aligned}
\end{equation*}
thus the output of the network on $\{x_{i+1}\}$ is unchanged. For $\{x_\tau\}_{\tau=i+2}^n$, we study the output value changes of the two neurons separately. For the first neuron, we have, 
\begin{equation*}
\begin{aligned}
    \tilde{a}_1&\sigma(\tilde{w}_1 x_\tau+\tilde{b}_1)-a_1\sigma(w_1 x_\tau+b_1)\\
    &=a_1(w_1(1-\varepsilon) x_\tau+b_1+w_1 \varepsilon x_{i+1}-w_1 x_\tau-b_1)\\
    &=\varepsilon a_1( w_1\left(x_{i+1}-x_\tau\right)).
\end{aligned}
\end{equation*}

For the second neuron, we have, 
\begin{equation*}
\begin{aligned}
    \tilde{a}_2&\sigma(\tilde{w}_2 x_\tau+\tilde{b}_2)-a_2\sigma(w_2 x_\tau+b_2)\\
    &=-\frac{a_1}{a_2}{a}_2\left(\left(\tilde{w}_1-w_1\right) x_\tau+\tilde{b}_1-b_1\right)\\
    &=-\varepsilon a_1( w_1\left(x_{i+1}-x_\tau\right)).
\end{aligned}
\end{equation*}
Thus output value changes of both neurons remain zero. After verifying that the output of the network remains constant on the input data points, we study the effect of this perturbation on the $R_1(\vtheta)$ term. Noting that the output values of these two neurons on the input data $\{x_\tau\}_{\tau=1}^{i+1}$ remain unchanged, we only need to study the effect of the output changes on the input data $\{x_\tau\}_{\tau=i+2}^n$. For $\{x_\tau\}_{\tau=i+2}^n$, we have,

\begin{equation}
\begin{aligned}
     a_1^2=\tilde{a}_1^2, \quad &a_2^2=\tilde{a}_2^2\\
    \sigma(\tilde{a}_1 x_\tau+\tilde{b}_1)^2<& \sigma(a_1 x_\tau+b_1)^2\\
    \sigma(\tilde{a}_2 x_\tau+\tilde{b}_2)^2<& \sigma(a_2 x_\tau+b_2)^2\\
    \label{equ:inequ}
\end{aligned}
\end{equation}

Thus we have, 
\begin{equation*}
\begin{aligned}
    \tilde{R}_1(\vtheta)&-R_1(\vtheta)\\
    =&\sum_{\tau=i+2}^n \Big(\tilde{a}_1^2 \sigma(\tilde{w}_1 x_\tau + \tilde{b}_1)^2+\tilde{a}_2^2 \sigma(\tilde{w}_2 x_\tau + \tilde{b}_2)^2\\
    &- a_1^2 \sigma(w_1 x_\tau + b_1)^2-a_2^2 \sigma(w_2 x_\tau + b_2)^2\Big)\\
    <&-\Theta(\varepsilon).
\end{aligned}
\end{equation*}

As for $\frac{b_1}{w_1} = x_{i+1}$(or $\frac{b_2}{w_2} = x_{i+1}$), we can still get a consistent conclusion through the above perturbation, the only difference is that the intercept point of the first(or second) neuron will not move to the left(or right).

For the remaining three cases, we show the direction of the perturbation respectively, and we can similarly verify that the loss value remains unchanged and the $R_1(\vtheta)$ term decreases by $\Theta(\varepsilon)$.

Case (2) and Case (3): The perturbation is shown as follows:
\begin{equation*}
\begin{array}{cc}
\tilde{w}_1=w_1(1-\varepsilon), & \tilde{b}_1=b_1+x_{i+1} w_1 \varepsilon, \\
\tilde{w}_2=w_2+\frac{a_1}{a_2}\left(\tilde{w}_1-w_1\right), & \tilde{b}_2=b_2+\frac{a_1}{a_2}\left(\tilde{b}_1-b_1\right), \\
a=-a_1\left(\tilde{w}_1-w_1\right), & b=-a_1\left(\tilde{b}_1-b_1\right).
\end{array}
\end{equation*}

Case (4): The perturbation is shown as follows:
\begin{equation*}
\begin{array}{cl}
\tilde{w}_2=w_2(1-\varepsilon), & \tilde{b}_2=b_2+x_{i+1} w_2 \varepsilon, \\
\tilde{w}_1=w_1-\frac{a_2}{a_1}\left(\tilde{w}_2-w_2\right), & \tilde{b}_1=b_1-\frac{a_2}{a_1}\left(\tilde{b}_2-b_2\right).
\end{array}
\end{equation*}

\textbf{The limitation of the number of intercept points.} For any inner interval $[x_ {i}, x_ {i+1}]\cap(x_ {2}, x_ {n-1}), i\in[n-1]$, we take two neurons with the same direction and different intercept points in the same inner interval denoted as $a_j\sigma(w_jx+b_j)$, $j\in [2]$. In order to ensure consistency with the above proof, we denote the boundary points of the inner interval are $x_{i+1}$ and $x_{i+2}$.
 Without loss of generality, we assume that the input weights of two neurons $w_1,w_2>0$, and the output weight of the first neuron $a_1<0$, and we assume the intercept point of the first neuron is less than the intercept point of the second neuron, i.e., $-b_1/w_1<-b_2/w_2$. We study $a_2>0$ and $a_2<0$ respectively, where the first case corresponds to the case $-\frac{b_1}{w_1}> x_{i+1}$ in Case (1) above. We first study the case where $a_2>0$. We perturb the parameters of two neurons as follows:
\begin{equation*}
\begin{array}{cl}
\tilde{w}_1=w_1(1-\varepsilon), & \tilde{b}_1=b_1+x_{i+1} w_1 \varepsilon, \\
\tilde{w}_2=w_2-\frac{a_1}{a_2}\left(\tilde{w}_1-w_1\right), & \tilde{b}_2=b_2-\frac{a_1}{a_2}\left(\tilde{b}_1-b_1\right).
\end{array}
\end{equation*}

 By studying the movement of the intercept points of the two neurons, we have,
\begin{equation*}
        -\frac{\tilde{b}_1}{\tilde{w}_1}-(-\frac{b_1}{w_1})=-\frac{\varepsilon\left(w_1 x_{i+1}+b_1\right)}{w_1(1-\varepsilon)}>0, 
\end{equation*}
where $0<\varepsilon<1$, $w_1>0$ and $w_1 x_{i+1}+b_1<0$. As for the second neuron, we have,
\begin{equation*}
        -\frac{\tilde{b}_2}{\tilde{w}_2}-(-\frac{b_2}{w_2})=\frac{w_1 a_1 \varepsilon\left(w_2 x_{i+1}+b_2\right)}{w_2\left(a_2 w_2+a_1 w_1 \varepsilon\right)}>0, 
\end{equation*}
where $0<\varepsilon<1$, $w_1>0$, $w_2>0$, $a_1<0$, $w_2 x_{i+1}+b_2<0$ and $a_2 w_2+a_1 w_1 \varepsilon>0$. Thus the intercept points of two neurons move right under the above perturbation.

We then verify the invariance of the values of NN's output on the input data. We study this invariance within two data point sets $\{x_\tau\}_{\tau=1}^{i+1}$ and $\{x_\tau\}_{\tau=i+2}^n$. For $\{x_\tau\}_{\tau=1}^{i+1}$, the outputs of both neurons remain zero, so the output of the network is unchanged. For $\{x_\tau\}_{\tau=i+2}^n$, we study the output value changes of the two neurons separately. For the first neuron, we have, 
\begin{equation*}
\begin{aligned}
    \tilde{a}_1&\sigma(\tilde{w}_1 x_\tau+\tilde{b}_1)-a_1\sigma(w_1 x_\tau+b_1)\\
    &=a_1(w_1(1-\varepsilon) x_\tau+b_1+w_1 \varepsilon x_{i+1}-w_1 x_\tau-b_1)\\
    &=\varepsilon a_1( w_1\left(x_{i+1}-x_\tau\right)).
\end{aligned}
\end{equation*}

For the second neuron, we have, 
\begin{equation*}
\begin{aligned}
    \tilde{a}_2&\sigma(\tilde{w}_2 x_\tau+\tilde{b}_2)-a_2\sigma\left(w_2 x_\tau+b_2\right)\\
    &=-\frac{a_1}{a_2}{a}_2\left(\left(\tilde{w}_1-w_1\right) x_\tau+\tilde{b}_1-b_1\right)\\
    &=-\varepsilon a_1( w_1\left(x_{i+1}-x_\tau\right)).
\end{aligned}
\end{equation*}

With the same relation shown in Equution (\ref{equ:inequ}), we have, 
\begin{equation*}
    \tilde{R}_1(\vtheta)-R_1(\vtheta)<-\Theta(\varepsilon).
\end{equation*}

For $a_2<0$, we consider three cases: 
\begin{equation*}
\begin{aligned}
    \textrm{Case (1): }&a_1w_1=a_2w_2, \\
    \textrm{Case (2): }&a_1w_1> a_2w_2, \\
    \textrm{Case (3): }&a_1w_1< a_2w_2.
\end{aligned}
\end{equation*}

For Case (1), we perturb the parameters of two neurons as follows:
\begin{equation*}
\begin{array}{cl}
\tilde{w}_1=w_1, & \tilde{b}_1=b_1- \varepsilon, \\
\tilde{w}_2=w_2, & \tilde{b}_2=b_2-\frac{a_1}{a_2}\left(\tilde{b}_1-b_1\right). 
\end{array}
\end{equation*}

We first study the movement of the intercept points of the two neurons, 

\begin{equation*}
        -\frac{\tilde{b}_1}{\tilde{w}_1}-(-\frac{b_1}{w_1})=\frac{\varepsilon}{w_1}>0, 
\end{equation*}
where $0<\varepsilon<1$, $w_1>0$. In the same way, we have  
\begin{equation*}
        -\frac{\tilde{b}_2}{\tilde{w}_2}-(-\frac{b_2}{w_2})<0. 
\end{equation*}
For the invariance of the values of NN's output on the input dataset $\{x_\tau\}_{\tau=i+2}^n$, we have, 
\begin{equation*}
\begin{aligned}
    \tilde{a}_1&\sigma(\tilde{w}_1 x_\tau+\tilde{b}_1)-a_1\sigma(w_1 x_\tau+b_1)\\
    &=a_1(w_1 x_\tau+b_1-\varepsilon-w_1 x_\tau-b_1)\\
    &=-\varepsilon a_1,\\
    \tilde{a}_2&\sigma(\tilde{w}_2 x_\tau+\tilde{b}_2)-a_2\sigma(w_2 x_\tau+b_2)\\
    &=-\frac{a_1}{a_2}{a}_2\left(\left(\tilde{w}_1-w_1\right) x_\tau+\tilde{b}_1-b_1\right)\\
    &=\varepsilon a_1.
\end{aligned}
\end{equation*}
Thus, we can easily get, for each input point $x_\tau$ in $\{x_\tau\}_{\tau=i+2}^n$, 
\begin{equation*}
\begin{aligned}
    \tilde{a}_1&\sigma(\tilde{w}_1 x_\tau+\tilde{b}_1)+\tilde{a}_2\sigma(\tilde{w}_2 x_\tau+\tilde{b}_2)\\
    &-a_1\sigma(w_1 x_\tau+b_1)-a_2\sigma(w_2 x_\tau+b_2)=0, \\
    |\tilde{a}_1&\sigma(\tilde{w}_1 x_\tau+\tilde{b}_1)-\tilde{a}_2\sigma(\tilde{w}_2 x_\tau+\tilde{b}_2)|\\
    &<|a_1\sigma(w_1 x_\tau+b_1)-a_2\sigma(w_2 x_\tau+b_2)|. 
\end{aligned}
\end{equation*}
Then, we have, 
\begin{equation*}
\begin{aligned}
    \tilde{R}_1(\vtheta)&-R_1(\vtheta)\\
    =&\sum_{\tau=i+2}^n \Big(\tilde{a}_1^2 \sigma(\tilde{w}_1 x_\tau + \tilde{b}_1)^2+\tilde{a}_2^2 \sigma(\tilde{w}_2 x_\tau + \tilde{b}_2)^2\\
    &- a_1^2 \sigma(w_1 x_\tau + b_1)^2-a_2^2 \sigma(w_2 x_\tau + b_2)^2\Big)\\
    <&-\Theta(\varepsilon).
\end{aligned}
\end{equation*}

For the remaining three cases, we show the direction of the perturbation respectively, and we can similarly verify that the loss value remains unchanged and the $R_1(\vtheta)$ term decreases by $\Theta(\varepsilon)$.

Case (2): The perturbation is shown as follows:
\begin{equation*}
\begin{array}{cc}
\tilde{w}_1=w_1(1+\varepsilon), & \tilde{b}_1=b_1-\frac{a_2b_2-a_1b_1}{a_1w_1-a_2w_2} w_1 \varepsilon, \\
\tilde{w}_2=w_2-\frac{a_1}{a_2}\left(\tilde{w}_1-w_1\right), & \tilde{b}_2=b_2-\frac{a_1}{a_2}\left(\tilde{b}_1-b_1\right).
\end{array}
\end{equation*}

Case (3): The perturbation is shown as follows:
\begin{equation*}
\begin{array}{cl}
\tilde{w}_1=w_1(1-\varepsilon), & \tilde{b}_1=b_1+x_{i+1} w_1 \varepsilon, \\
\tilde{w}_2=w_2-\frac{a_1}{a_2}\left(\tilde{w}_1-w_1\right), & \tilde{b}_2=b_2-\frac{a_1}{a_2}\left(\tilde{b}_1-b_1\right).
\end{array}
\end{equation*}

\end{proof}

\begin{theorem*}[\textbf{the effect of $R_1(\vtheta)$ on facilitating flatness}] Under the Setting \ref{set:1}--\ref{set:3}, consider a two-layer ReLU NN, 
\begin{equation*}
	f_{\vtheta}(x)=\sum_{j=1}^{m} a_j \sigma(\vw_j \vx), 
\end{equation*}
trained with dataset $S=\{(\vx_{i}, y_{i})\}_{i=1}^n$. Under the gradient flow training with the loss function $R_S(\vtheta)+R_1(\vtheta)$, if $\vtheta_0$ satisfying $R_S(\vtheta_0)=0$ and $\nabla_{\vtheta} R_1(\vtheta_0)\neq 0$, we have 
\begin{equation*}
    \frac{{\rm d}\left(\frac{1}{n}\sum_{i=1}^n\norm{\nabla_{\vtheta}f_{\vtheta_0}(\vx_i)}_2^2\right)}{\rm dt}<0.
\end{equation*}
\end{theorem*}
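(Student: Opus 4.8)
The plan is to reduce the statement to the positivity of an inner product of two gradients, and then to compute both gradients explicitly from the two-layer ReLU structure. Write $\Phi(\vtheta):=\frac{1}{n}\sum_{i=1}^n\norm{\nabla_{\vtheta}f_{\vtheta}(\vx_i)}^2$ for the flatness functional, which by the identity preceding the statement equals $\mathrm{Tr}(H)$ at any $\vtheta$ with $R_S(\vtheta)=0$. The first observation is that since the loss is MSE and $R_S(\vtheta_0)=0$, every residual $f_{\vtheta_0}(\vx_i)-y_i$ vanishes, so $\nabla_{\vtheta}R_S(\vtheta_0)=0$; hence the gradient flow $\dot{\vtheta}=-\nabla_{\vtheta}(R_S+R_1)$ reduces at $\vtheta_0$ to $\dot{\vtheta}=-\nabla_{\vtheta}R_1(\vtheta_0)$. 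By the chain rule, $\frac{\mathrm d}{\mathrm dt}\Phi(\vtheta(t))\big|_{0}=-\nabla_{\vtheta}\Phi(\vtheta_0)\cdot\nabla_{\vtheta}R_1(\vtheta_0)$, so it suffices to show $\nabla_{\vtheta}\Phi(\vtheta_0)\cdot\nabla_{\vtheta}R_1(\vtheta_0)>0$ whenever $\nabla_{\vtheta}R_1(\vtheta_0)\neq 0$.

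Next I would compute the two gradients neuron by neuron. Under Setting \ref{set:1}--\ref{set:3} the output weight attached to neuron $j$ is the scalar $a_j$ and $f^{[L-1]}_{\vtheta,j}(\vx_i)=\sigma(\vw_j\vx_i)$, so with $c:=\frac{1-p}{2np}$ we have $R_1(\vtheta)=c\sum_{i,j}a_j^2\sigma(\vw_j\vx_i)^2$ and, since $\partial_{a_j}f=\sigma(\vw_j\vx_i)$ and $\partial_{\vw_j}f=a_j\sigma'(\vw_j\vx_i)\vx_i$, also $\Phi(\vtheta)=\frac1n\sum_{i,j}\big(\sigma(\vw_j\vx_i)^2+a_j^2\,\sigma'(\vw_j\vx_i)\,\norm{\vx_i}^2\big)$, using the ReLU identities $\sigma(z)=\sigma'(z)z$ and $\sigma'(z)^2=\sigma'(z)$ away from the measure-zero kink set. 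Differentiating, the $a_k$-blocks are $\partial_{a_k}R_1=2ca_k\sum_i\sigma(\vw_k\vx_i)^2$ and $\partial_{a_k}\Phi=\frac2n a_k\sum_i\sigma'(\vw_k\vx_i)\norm{\vx_i}^2$, while the $\vw_k$-blocks are $\partial_{\vw_k}R_1=2ca_k^2\sum_i\sigma'(\vw_k\vx_i)(\vw_k\vx_i)\vx_i$ and $\partial_{\vw_k}\Phi=\frac2n\sum_i\sigma'(\vw_k\vx_i)(\vw_k\vx_i)\vx_i$; note that the two $\vw_k$-blocks are parallel.

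Assembling the inner product then gives a per-neuron sum of two manifestly nonnegative contributions,
\begin{equation*}
\nabla_{\vtheta}\Phi\cdot\nabla_{\vtheta}R_1=\frac{4c}{n}\sum_{k}a_k^2\Big[\Big(\sum_i\sigma'(\vw_k\vx_i)\norm{\vx_i}^2\Big)\Big(\sum_i\sigma(\vw_k\vx_i)^2\Big)+\Big\|\sum_i\sigma'(\vw_k\vx_i)(\vw_k\vx_i)\vx_i\Big\|^2\Big],
\end{equation*}
so the product is always $\ge 0$ and $\frac{\mathrm d\Phi}{\mathrm dt}\le 0$. For the strict inequality I would argue contrapositively: if $\nabla_{\vtheta}R_1(\vtheta_0)\neq 0$ then some neuron $k$ has a nonzero block. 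If $\partial_{a_k}R_1\neq0$ then $a_k\neq0$ and some $\sigma(\vw_k\vx_{i_0})>0$, which forces $\vw_k\vx_{i_0}>0$, hence $\vx_{i_0}\neq0$ and $\sigma'(\vw_k\vx_{i_0})\norm{\vx_{i_0}}^2>0$, making the first bracketed term strictly positive; if instead $\partial_{\vw_k}R_1\neq0$ then $a_k\neq0$ and the vector inside the norm is nonzero, making the second term strictly positive. In either case the $k$-th summand is positive while all others are nonnegative, so $\nabla_{\vtheta}\Phi\cdot\nabla_{\vtheta}R_1>0$ and $\frac{\mathrm d\Phi}{\mathrm dt}<0$.

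The step I expect to be the main obstacle is the careful handling of ReLU's nondifferentiability: justifying the gradient formulas on the kink set $\{\vw_k\vx_i=0\}$ (choosing $\sigma'(0)=0$ and checking that the exceptional directions are non-generic so the flow is well defined), and making precise that the vanishing of the nonnegative per-neuron terms coincides exactly with the vanishing of the corresponding block of $\nabla_{\vtheta}R_1$, so that no spurious flat direction of $\Phi$ can spoil the strict correlation. The remaining algebra---expanding the two gradients and collecting the $a_k$- and $\vw_k$-terms---is routine once the identities $\sigma(z)=\sigma'(z)z$ and $\sigma'(z)^2=\sigma'(z)$ are in hand.
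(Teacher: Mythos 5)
Your proof is correct and takes essentially the same route as the paper: both reduce $\frac{\mathrm{d}}{\mathrm{d}t}\left(\frac{1}{n}\sum_i\norm{\nabla_{\vtheta}f_{\vtheta}(\vx_i)}_2^2\right)$ to minus the inner product of the flatness gradient with $\nabla_{\vtheta}R_1(\vtheta_0)$ (the paper does this by substituting the gradient-flow ODEs for $a_j$ and $\vw_j$ directly into the time derivative), and both then decompose per neuron into exactly the two nonnegative pieces you exhibit --- a product of nonnegative sums from the $a_k$-block and a perfect-square norm $\bigl\|\sum_i \sigma'(\vw_k\vx_i)(\vw_k\vx_i)\vx_i\bigr\|^2$ from the parallel $\vw_k$-blocks, restricted to the active data set. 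If anything, your contrapositive argument deducing strict negativity from $\nabla_{\vtheta}R_1(\vtheta_0)\neq 0$ is spelled out more explicitly than in the paper, which asserts the strict inequality without tracing it back to that hypothesis.
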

\begin{proof}
Recall the quantity that characterizes the flatness of model $f_{\vtheta}$,
\begin{equation*}
\begin{aligned}
\frac{1}{n}\sum_{i=1}^n&\norm{\nabla_{\vtheta}f_{\vtheta}(\vx_i)}_2^2\\
&=\frac{1}{n}\sum_{i=1}^n\sum_{j=1}^m\left(\sigma^2(\vw_j^{\T}\vx_i)+a_j^2\sigma^{'}(\vw_j^{\T}\vx_i)^2\Norm{\vx_i}_2^2\right).
\end{aligned}
\end{equation*}

Then, under the assumption of $R_{S}(\vtheta)=0$, we obtain that $\nabla_{\vtheta}\left(R_{S}(\vtheta)+R_1(\vtheta)\right) =\nabla_{\vtheta}\left(R_1(\vtheta)\right)$,
we notice that for  sufficiently small  amount of time $t$, the sign of  $\vw_j^{\T}\vx_i$ remains the same. Hence given data $\vx_i, i\in[n]$, and parameter set $\vtheta_j:=(a_k, \vw_j), j \in [m]$,   if $\vw_j^{\T}\vx_i<0$, for some indices $i, j$,
 then $ \norm{\nabla_{\vtheta_j}f_{\vtheta}(\vx_i)}_2^2\equiv0$. The analysis above reveals that for certain neuron whose index is $j$, we shall focus the data $\vx_i$ satisfying 
$\vw_j^{\T}\vx_i>0$, i.e., the following set constitutes our data of interest for the $j$-th neuron
\[
D_j:=\{(\vx_k, y_k)\}_{i=1}^{n_j}:=\{(\vx_i, y_i) \mid \vw_j^{\T}\vx_i>0\} \subseteq S,
\] 
as  $\vtheta_j$ is trained under the gradient flow of $R_1(\vtheta)$, i.e.
\begin{align*}
\frac{\D a_j}{\D t}&=-\frac{1-p}{2np}\sum_{i=1}^n a_j\sigma^2(\vw_j^{\T} \vx_i),\\
\frac{\D \vw_j}{\D t}&=-\frac{1-p}{2np}\sum_{i=1}^n a_j^2\sigma(\vw_j^{\T}\vx_i)\vx_i^{\T},  
\end{align*}
then the gradient flow of the flatness reads
 
\begin{align*}
     \frac{\D}{\D t}&\sum_{k=1}^{n_j}\norm{\nabla_{\vtheta_j}f_{{\vtheta}}(\vx_k)}_2^2 \\
     & = \sum_{k=1}^{n_j}\left(\left<2(\vw_j^\T\vx_k)\vx_k, \frac{\D \vw_j}{\D t} \right>+ 2a_j\Norm{\vx_i}_2^2\frac{\D a_j}{\D t}  \right)\\
&=-\frac{1-p}{np}\sum_{i=1}^n\sum_{k=1}^{n_j}a_j^2(\vw_j^\T\vx_k)(\vw_j^\T\vx_i)\left<\vx_k, \vx_i\right>\mathbf{1}_{\vw_j^\T\vx_i>0}\\
&~~-\frac{1-p}{np}\sum_{i=1}^n\sum_{k=1}^{n_j}a^2_j\Norm{\vx_i}_2^2(\vw_j^\T\vx_i)^2,
\end{align*}
by definition of $D_j$,  we obtain that the first term can be written as 
\begin{equation*}
\begin{aligned}
  & \frac{1-p}{np}\sum_{i=1}^n\sum_{k=1}^{n_j}a_j^2(\vw_j^\T\vx_k)(\vw_j^\T\vx_i)\left<\vx_k, \vx_i\right>\mathbf{1}_{\vw_j^\T\vx_i>0}\\
  &= \frac{1-p}{np}\sum_{i=1}^{n_j}\sum_{k=1}^{n_j}a_j^2(\vw_j^\T\vx_k)(\vw_j^\T\vx_i)\left<\vx_k, \vx_i\right> \\
  &= \frac{1-p}{np}\Norm{\sum_{k=1}^{n_j}a_j(\vw_j^\T\vx_k)\vx_k}_2^2\geq 0,
\end{aligned}
\end{equation*}
hence 
\begin{align*}
\frac{\D}{\D t}\sum_{k=1}^{n_j}\norm{\nabla_{\vtheta_j}f_{{\vtheta}}(\vx_k)}_2^2 & <0,    
\end{align*}
which finishes the proof.

\end{proof}

\newpage
\section{Additional Experimental Results}\label{app:exp}

\subsection{Verification of $\texorpdfstring{R_2(\vtheta)}{E=energy, m=mass, c=speed\ of\ light}$ on Complete Datasets}\label{app:R2_sgd}

 As shown in Fig. \ref{fig:R22}, the learning rate $\varepsilon$ and the regularization coefficient $\lambda$ are similar when they reach the maximum test accuracy (red point).

\begin{figure}[h]
	\centering
	\subfloat[]{\includegraphics[width=0.4\textwidth]{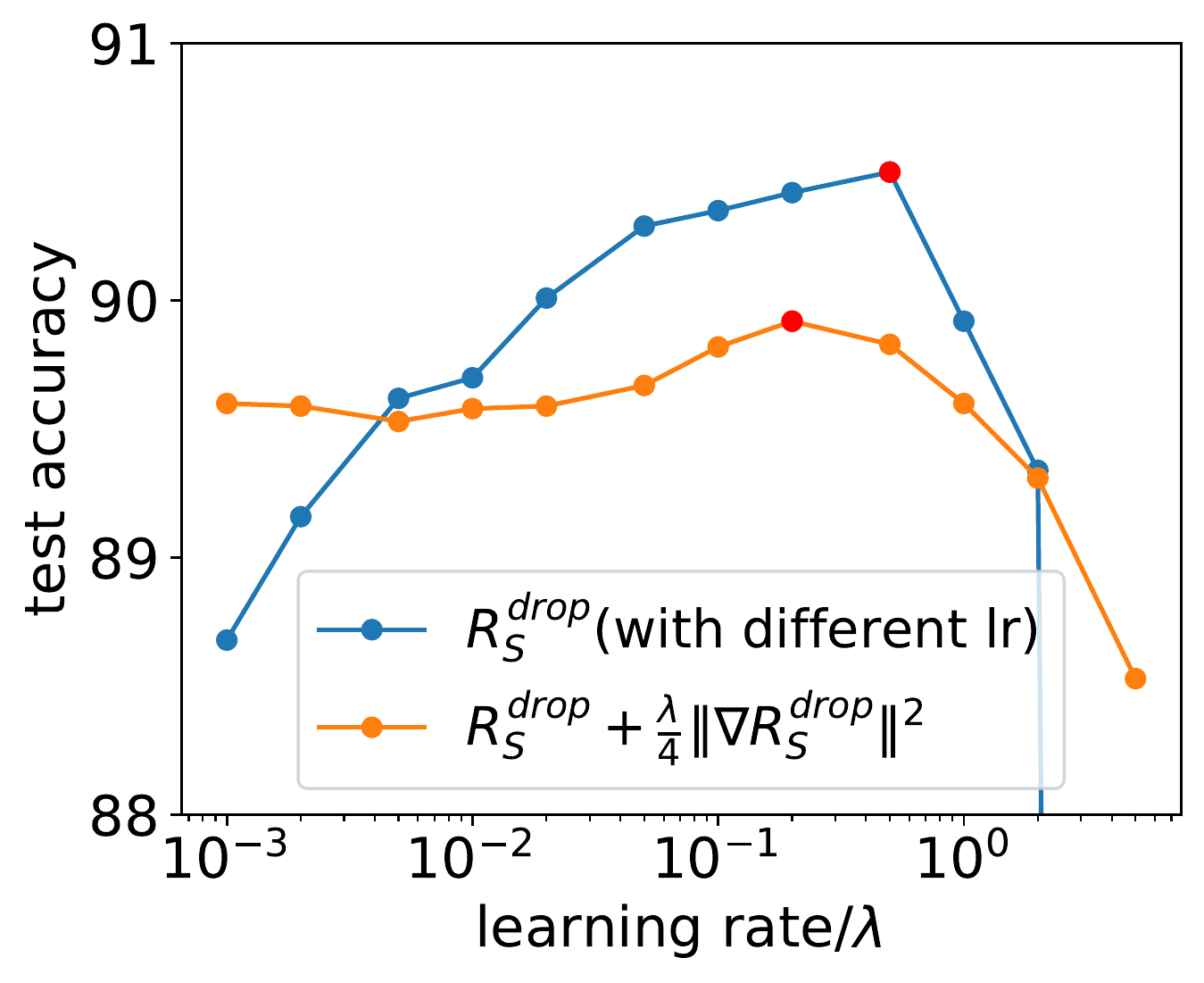}}
	\subfloat[]{\includegraphics[width=0.4\textwidth]{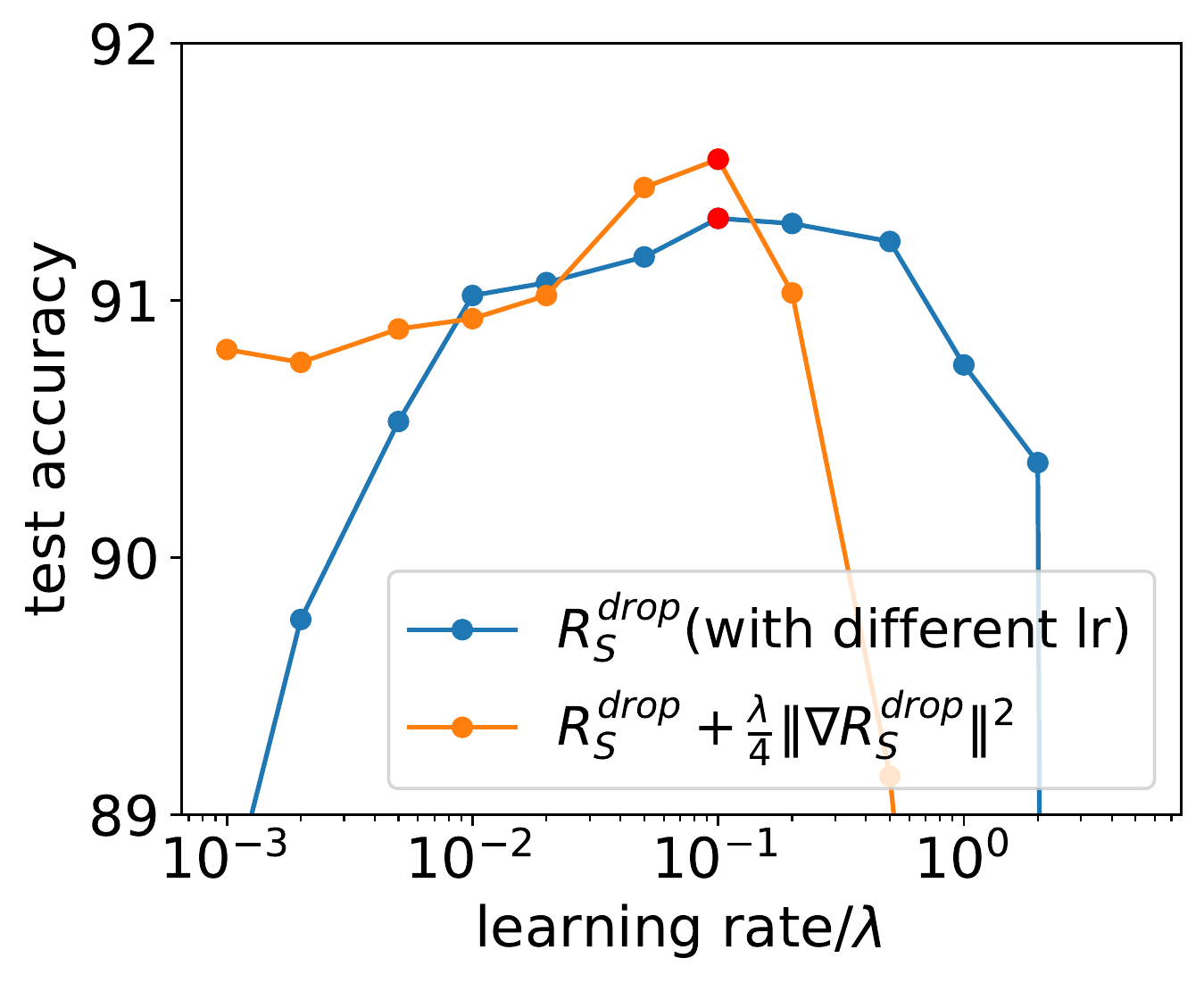}}
    \caption{
    For different training tasks, the test accuracy is obtained by training the network with SGD under different learning rates and regularization coefficients. The red dots indicate the location of the maximum test accuracy of the NNs obtained by training with both two loss functions. For loss function $\RS ^\mathrm{drop}\left(\vtheta, \veta\right)$, we train the NNs with different learning rates $\varepsilon$. For loss function $\RS ^\mathrm{drop}\left(\vtheta, \veta\right)+(\lambda/4)\Vert \nabla_{\vtheta} \RS ^\mathrm{drop}\left(\vtheta, \veta\right) \Vert ^2$, we train the NNs with different regularization coefficient $\lambda$, with a small and unchanged learning rate ($\varepsilon=5\times 10^{-3}$). (a) Classify the Fashion-MNIST dataset by training four-layer NNs of width 4096 with a training batch size of 64. (b) Classify the CIFAR-10 datasets by training VGG-9 with a training batch size of 32. 
    \label{fig:R22}}
\end{figure} 

\subsection{Dropout Facilitates Condensation on ReLU NNs} \label{app:relu_further}
 In this subsection, We verify that the ReLU network facilitates condensation under dropout shown in Fig. \ref{pic:condense_relu}, similar to the situation in the tanh NNs shown in Fig. \ref{pic:condense_tanh} in the main text. We only plot the neurons with non-zero output value in the data interval $[x_1, x_n]$ in the situation in the ReLU NNs. For the neurons with constant zero output value in the data interval, they will not affect the training process and the NN's output. Further, we study the results of ReLU NNs training under SGD, and the relationship between the model rank and generalization error, as shown in Figs. \ref{pic:sgd_condense_relu}, \ref{pic:condense_heat_relu}, which correspond to the tanh NNs shown in Figs. \ref{pic:sgd_condense_tanh}, \ref{pic:condense_heat_tanh} in the main text.

\begin{figure}[h]
	\centering
	\subfloat[$p=1$, output]{\includegraphics[width=0.24\textwidth]{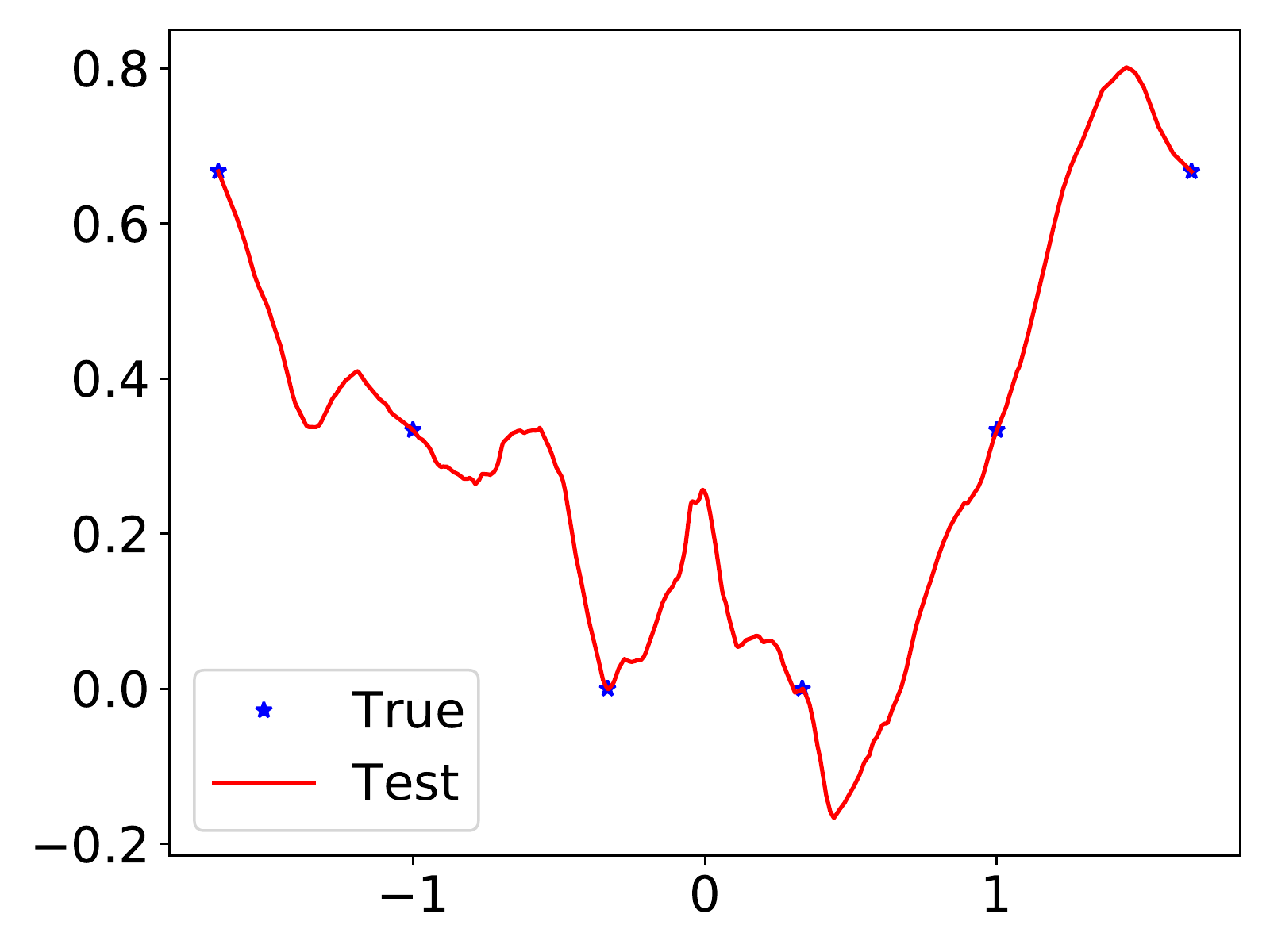}}
	\subfloat[$p=0.9$, output]{\includegraphics[width=0.24\textwidth]{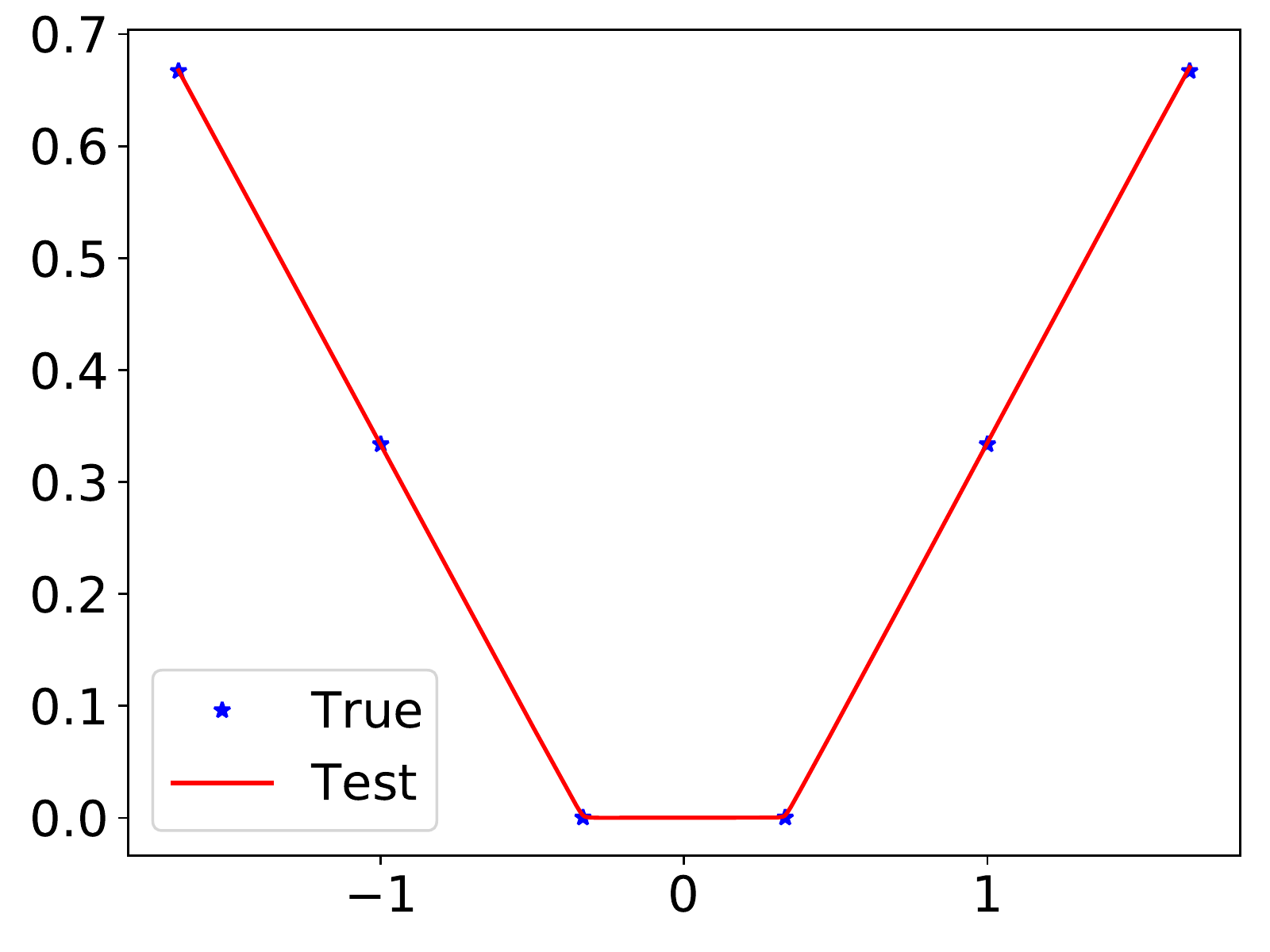}}
	\subfloat[$p=1$, feature]{\includegraphics[width=0.24\textwidth]{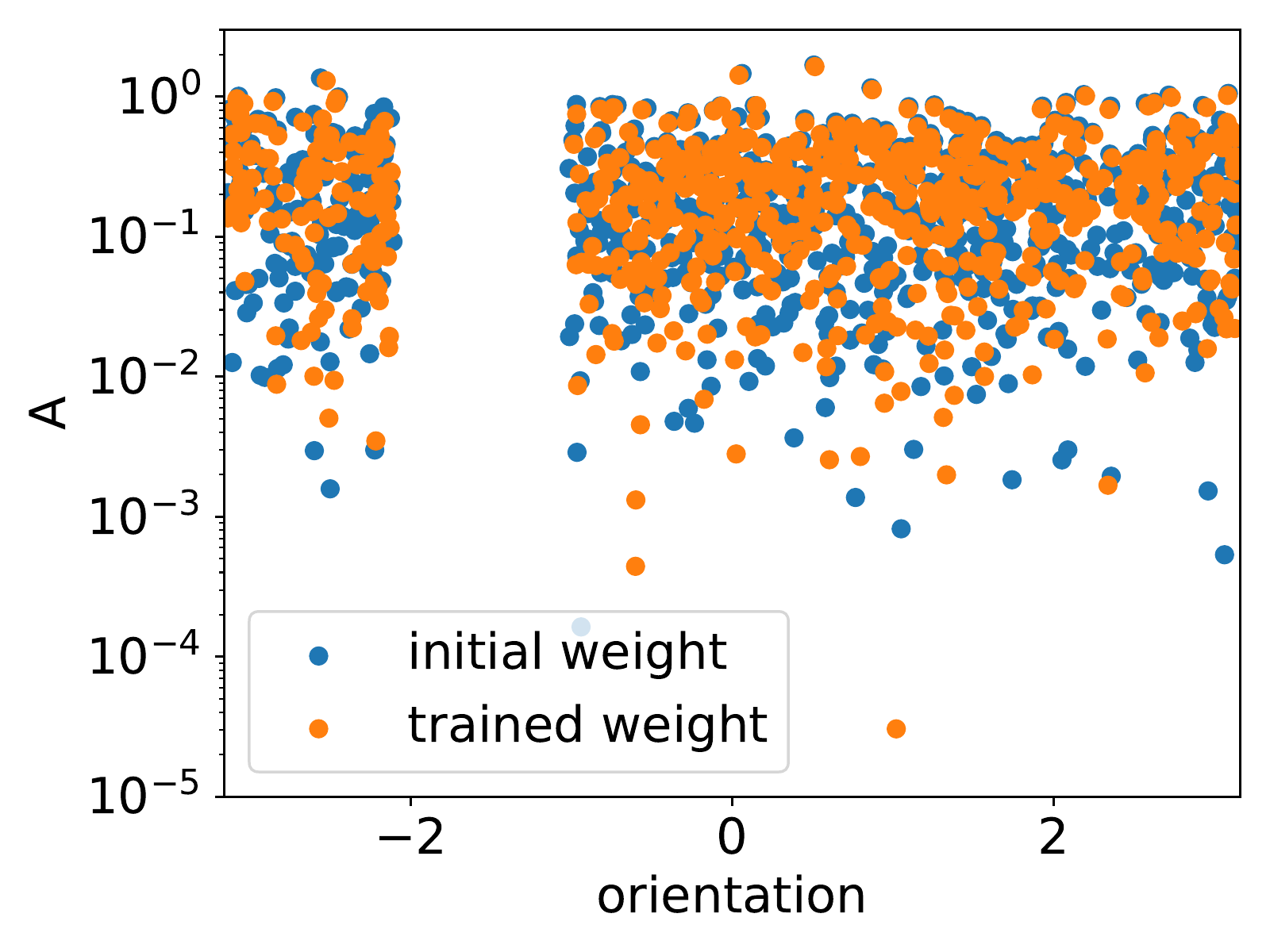}}
	\subfloat[$p=0.9$, feature]{\includegraphics[width=0.24\textwidth]{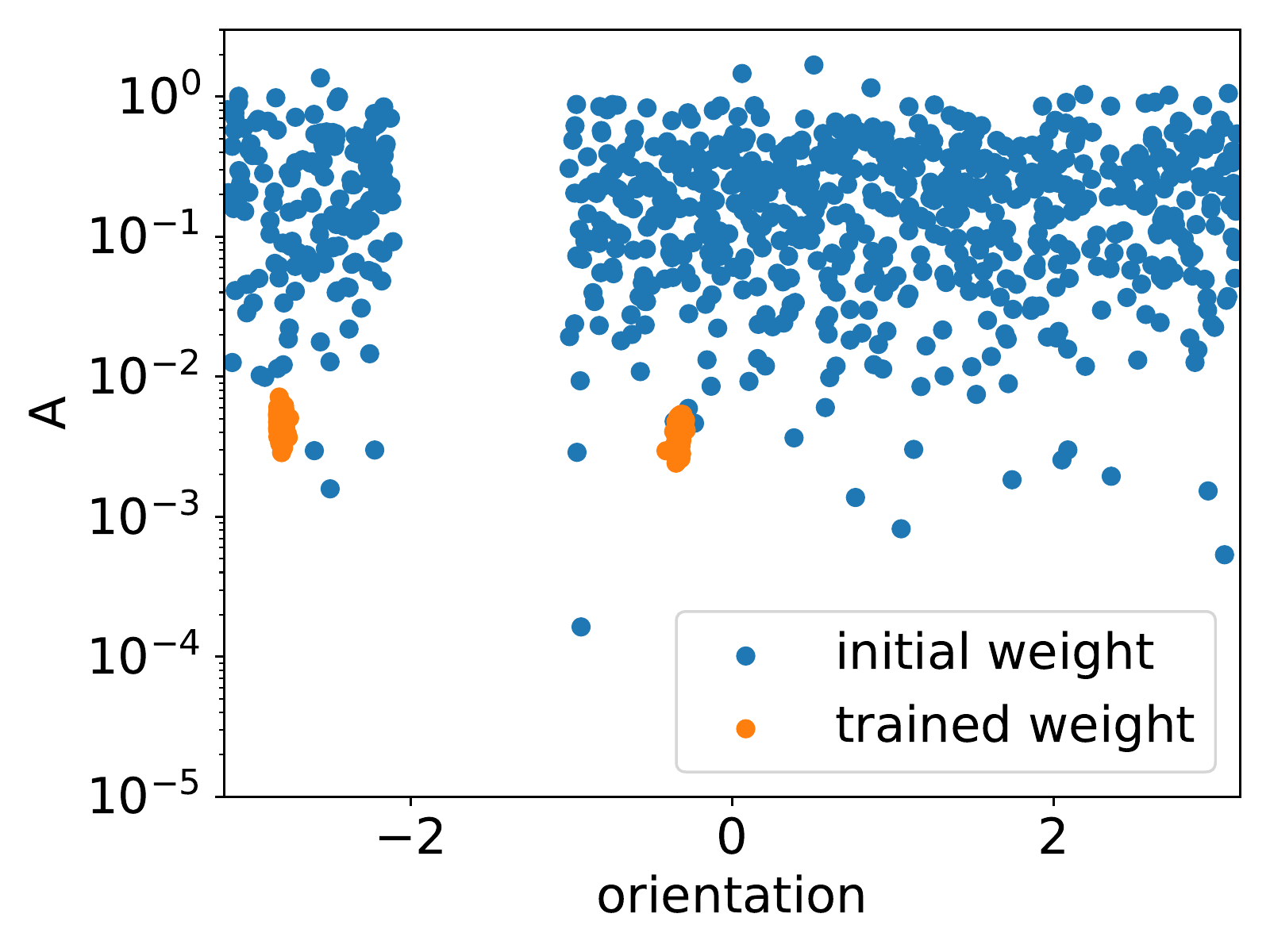}}\\
	\subfloat[$p=1$, output]{\includegraphics[width=0.24\textwidth]{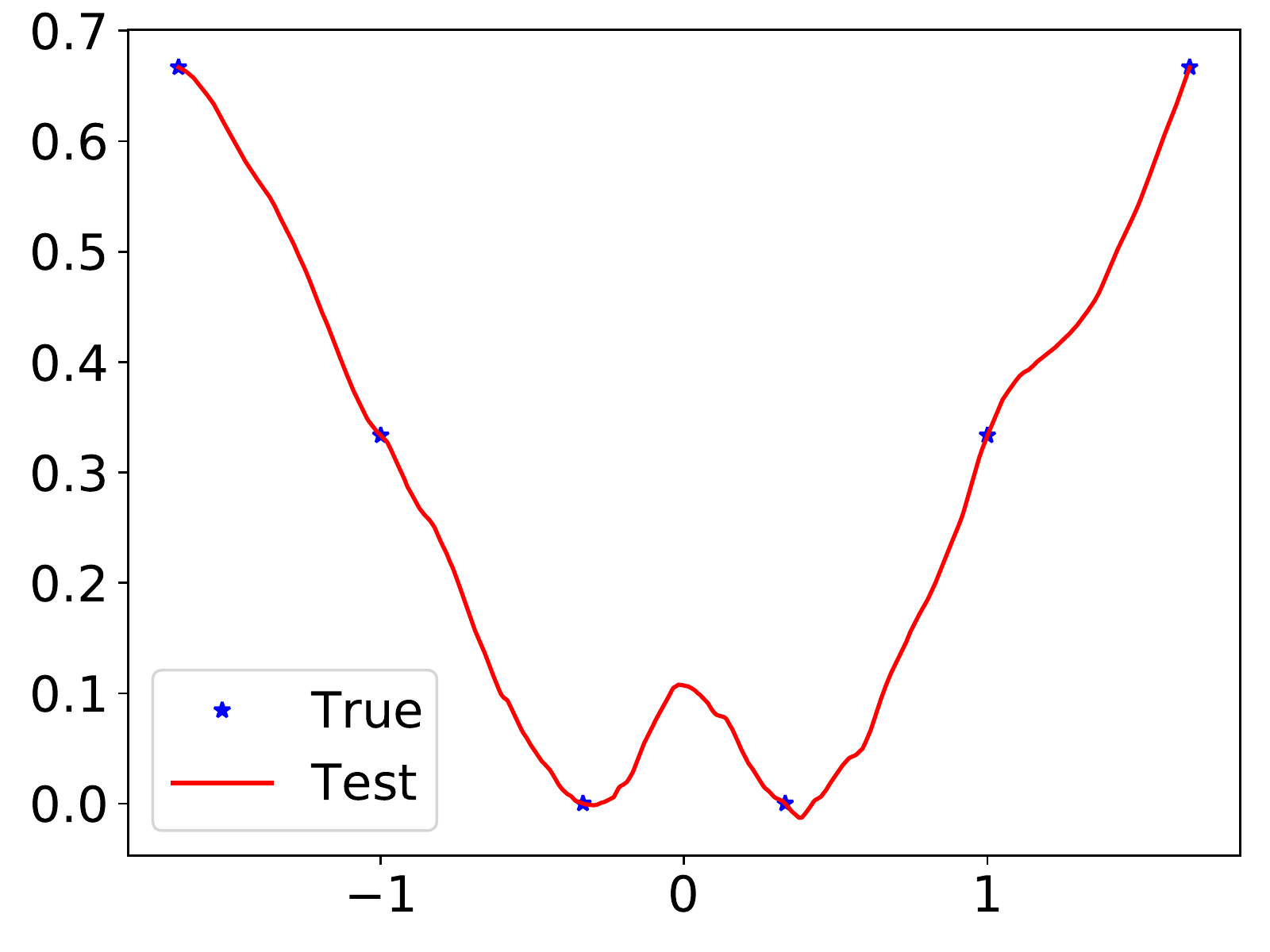}}
	\subfloat[$p=0.9$, output]{\includegraphics[width=0.24\textwidth]{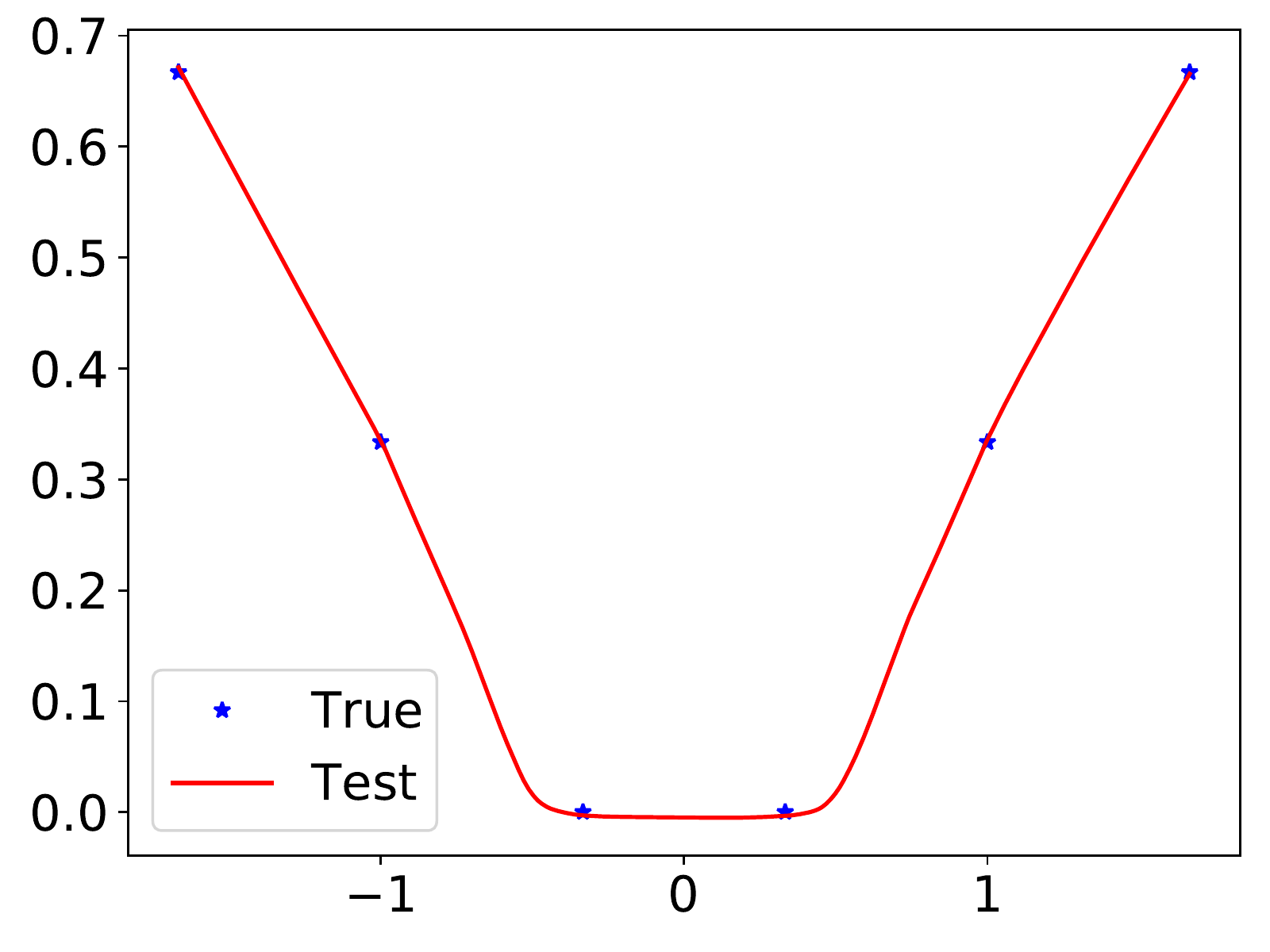}}
	\subfloat[$p=1$, feature]{\includegraphics[width=0.24\textwidth]{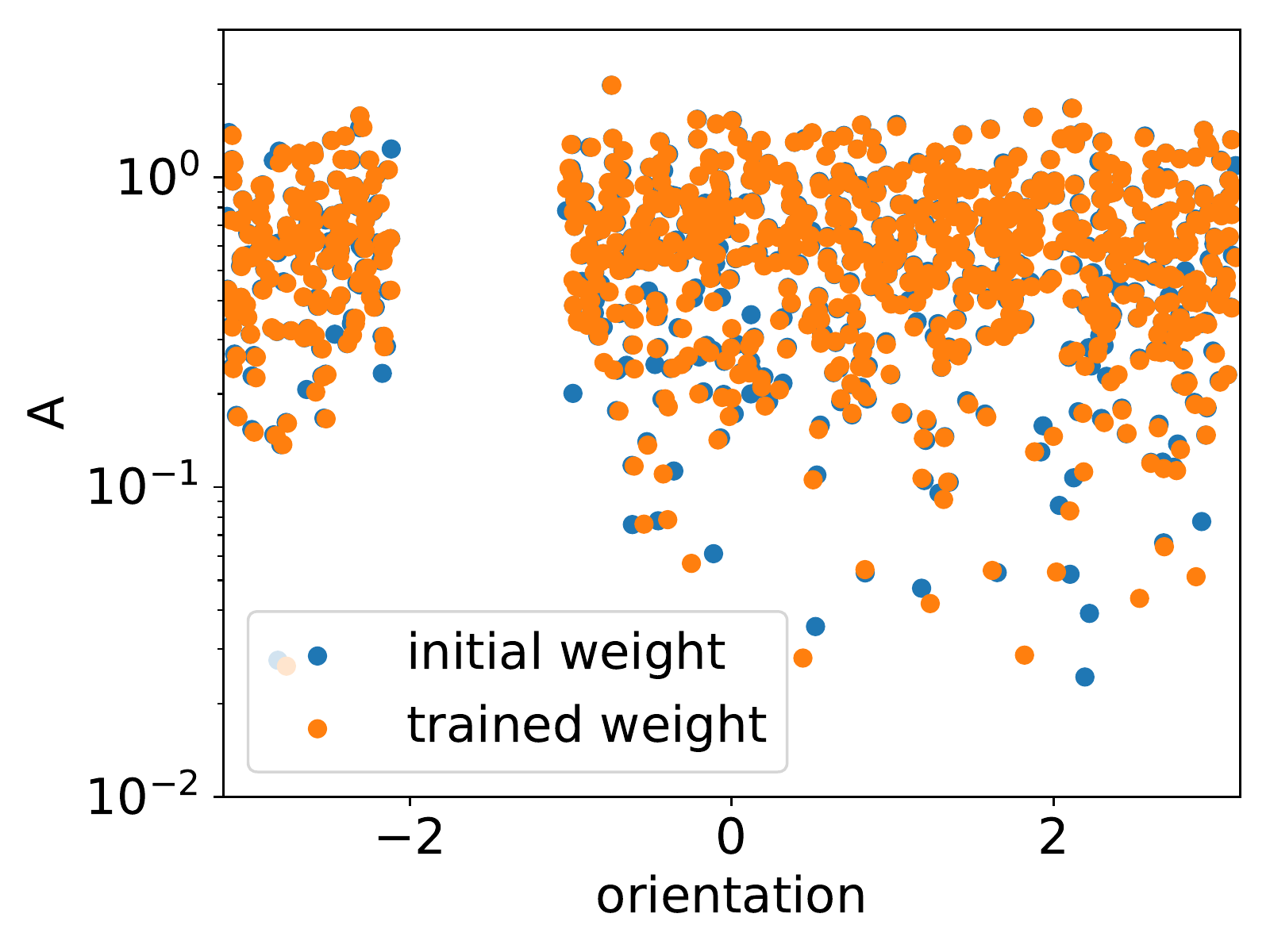}}
	\subfloat[$p=0.9$, feature]{\includegraphics[width=0.24\textwidth]{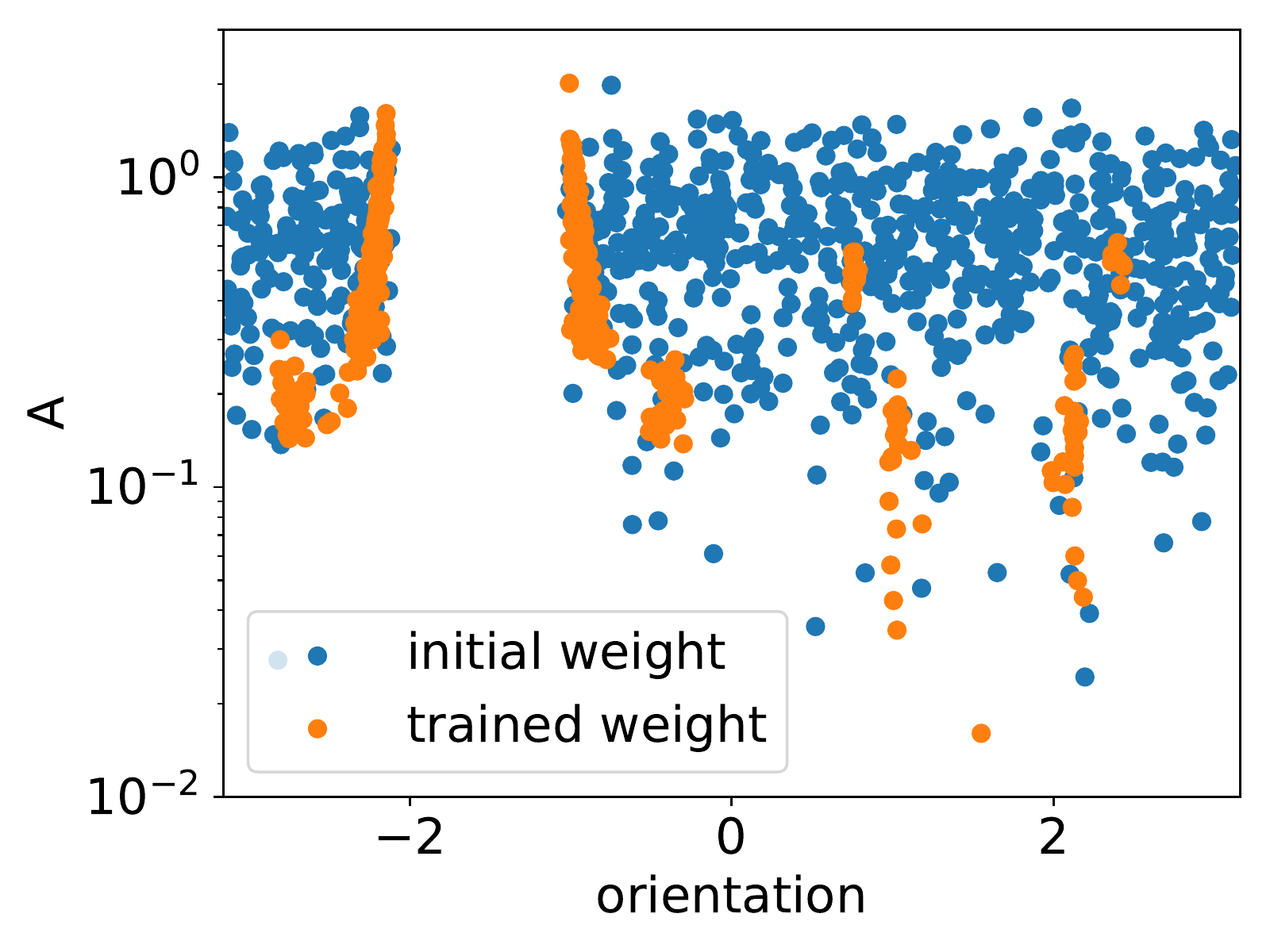}}

  \caption{ReLU NNs outputs and features under different dropout rates. The width of the hidden layers is $1000$, and the learning rate for different experiments is $1\times10^{-3}$. In (c,d,g,h), blue dots and orange dots are for the weight feature distribution at the initial and final training stages, respectively. The top row is the result of two-layer networks, with the dropout layer after the hidden layer. The bottom row is the result of three-layer networks, with the dropout layer between the two hidden layers and after the last hidden layer. \label{pic:condense_relu}}
\end{figure} 

  \begin{figure}[h]
	\centering
	\subfloat[batch size $=2$, output]{\includegraphics[width=0.4\textwidth]{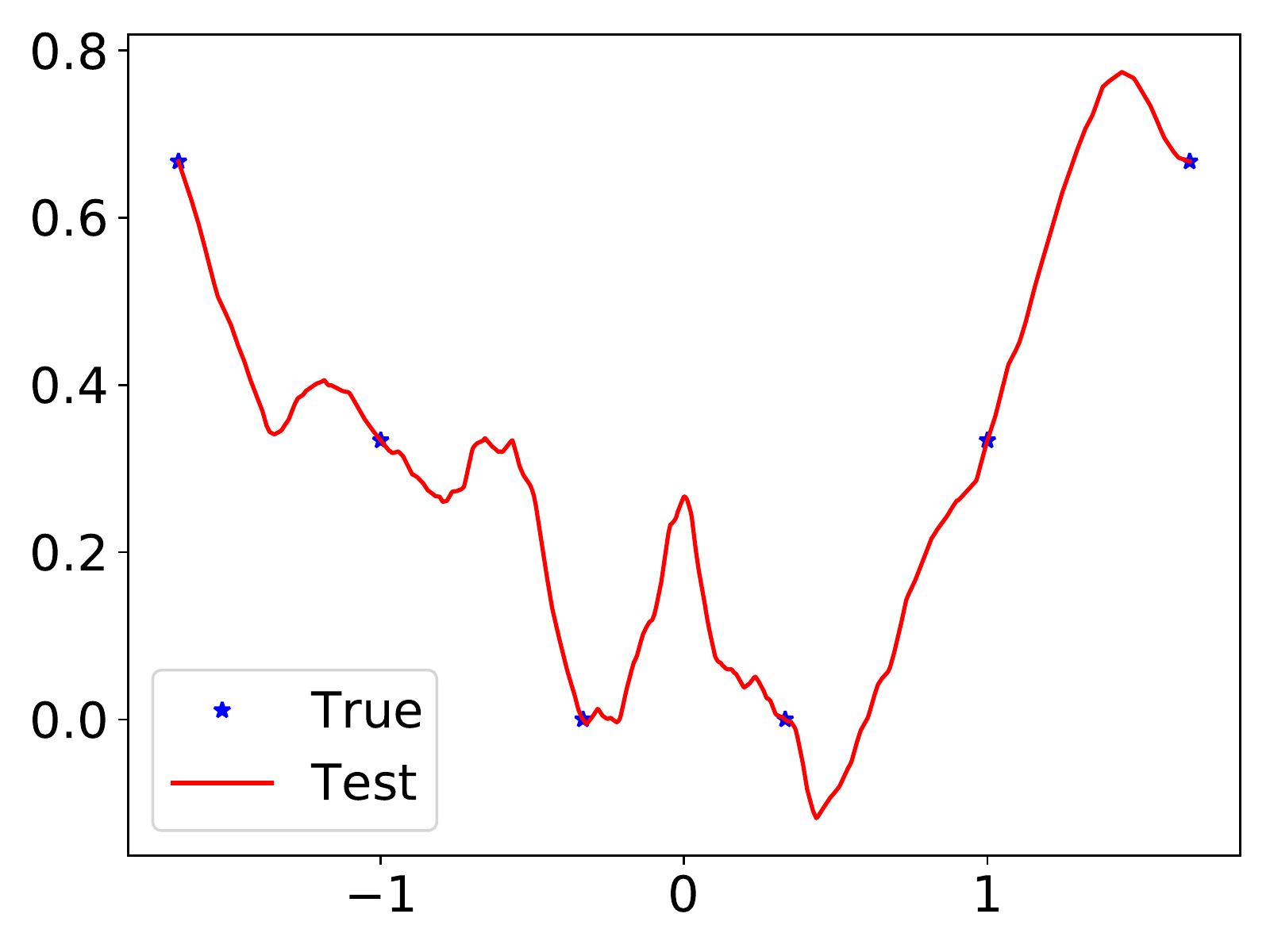}}
	\subfloat[batch size $=2$, feature]{\includegraphics[width=0.4\textwidth]{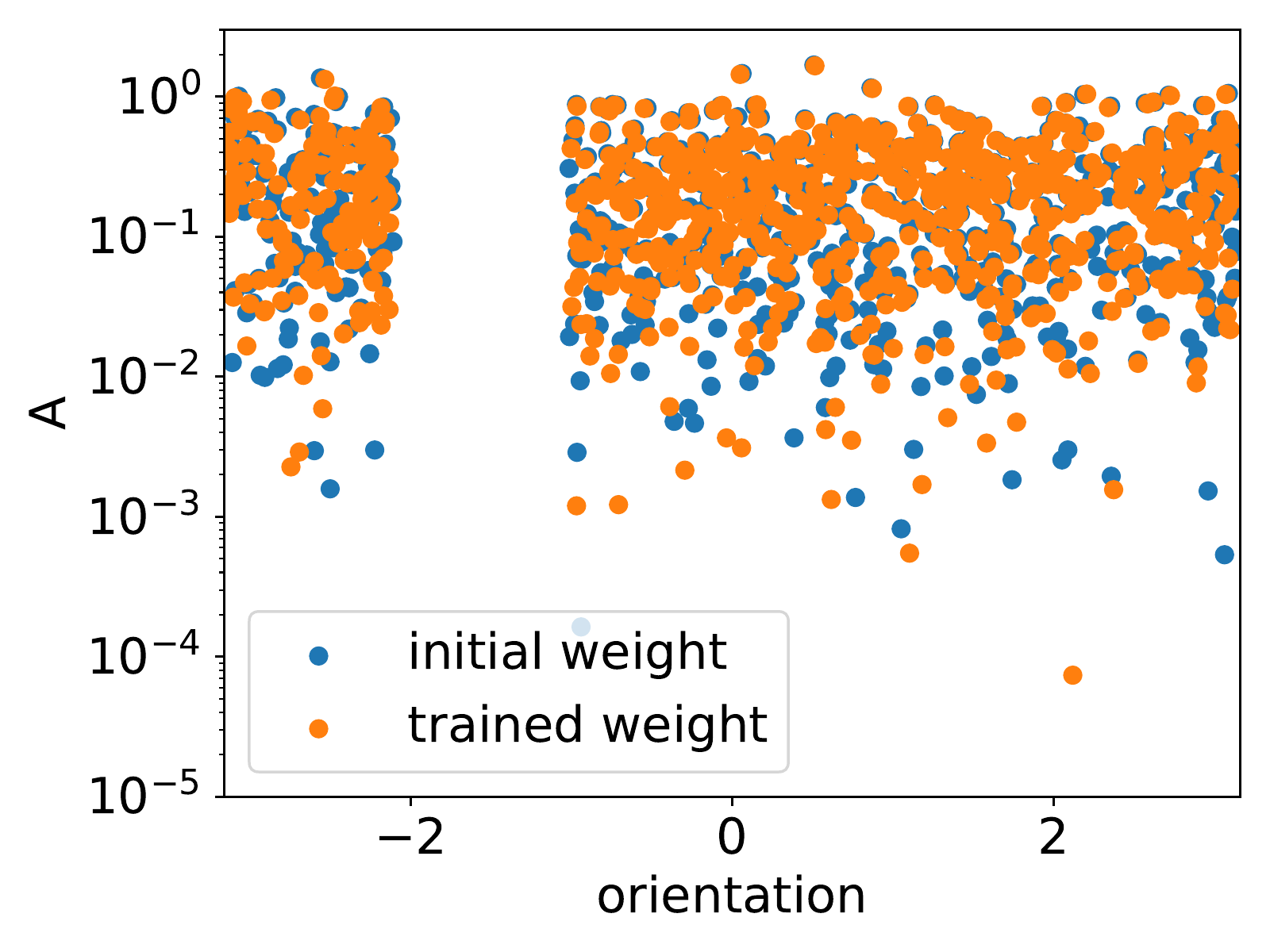}}

  \caption{Two-layer ReLU NN output and feature under a batch size of 2. The width of the hidden layer is $1000$, and the learning rate is $1\times10^{-3}$. In (b), blue dots and orange dots are for the weight feature distribution at the initial and final training stages, respectively. \label{pic:sgd_condense_relu}}
\end{figure}

\begin{figure}[h]
	\centering
	\includegraphics[width=0.6\textwidth]{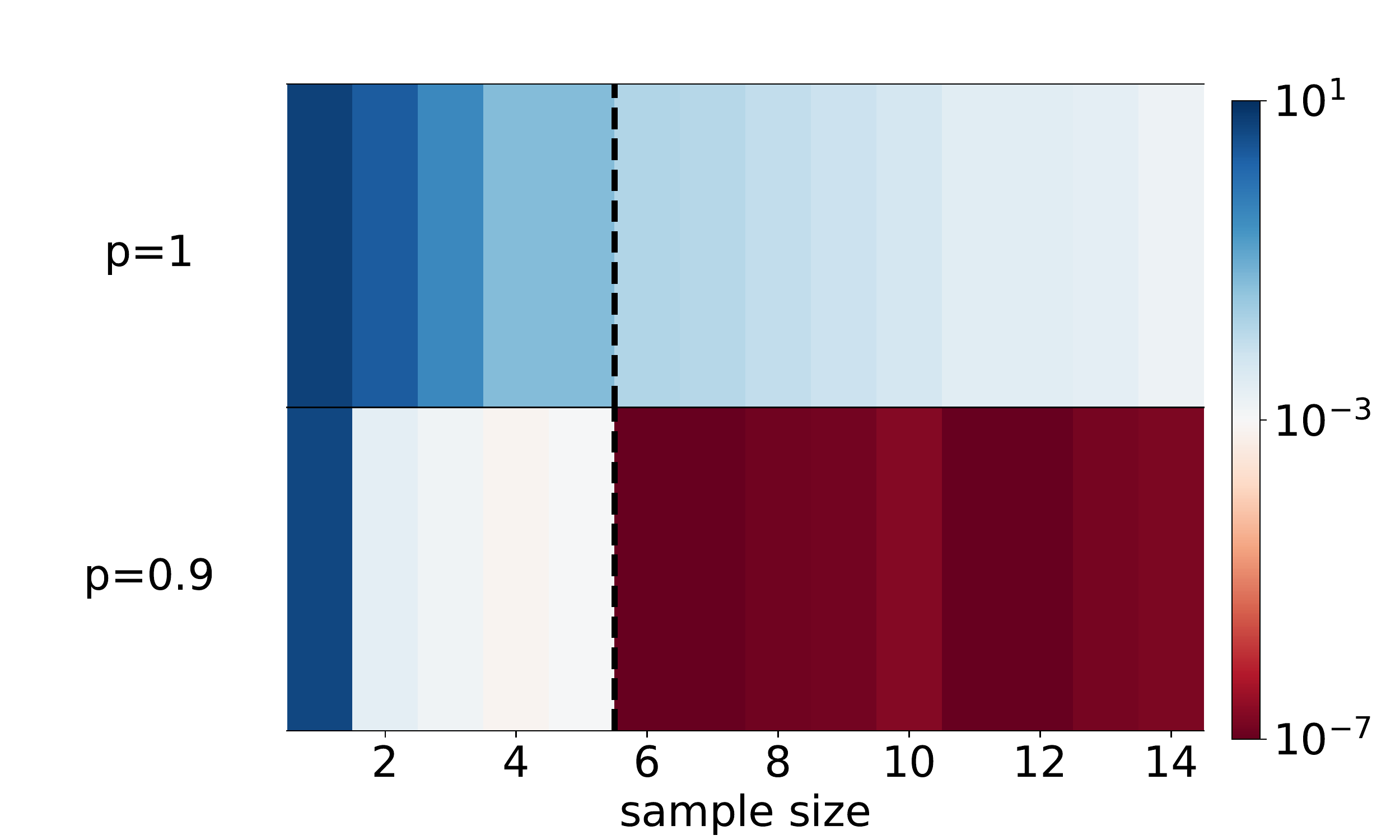}

  \caption{Average test error of the two-layer ReLU NNs (color) vs. the number of samples (abscissa) for different dropout rates (ordinate). For all experiments, the width of the hidden layer is $1000$, and the learning rate is $1\times10^{-4}$ with the Adam optimizer. Each test error is averaged over $10$ trials with random initialization. \label{pic:condense_heat_relu}}
\end{figure} 

\subsection{Detailed Features of Tanh NNs} \label{app:tanh_further}

In order to eliminate the influence of the inhomogeneity of the tanh activation function on the parameter features of Fig. \ref{pic:condense_tanh}, we drew the normalized scatter diagrams between $\norm{a_j}$, $\norm{\vw_j}$ and the orientation, as shown in Fig. \ref{fig:condense_tanh_further}. Obviously, for the network with dropout, both the input weight and the output weight have weight condensation, while the network without dropout does not have weight condensation.

\begin{figure}[!t]
	\centering
	\subfloat[two-layer NN, initialization]{\includegraphics[width=0.3\textwidth]{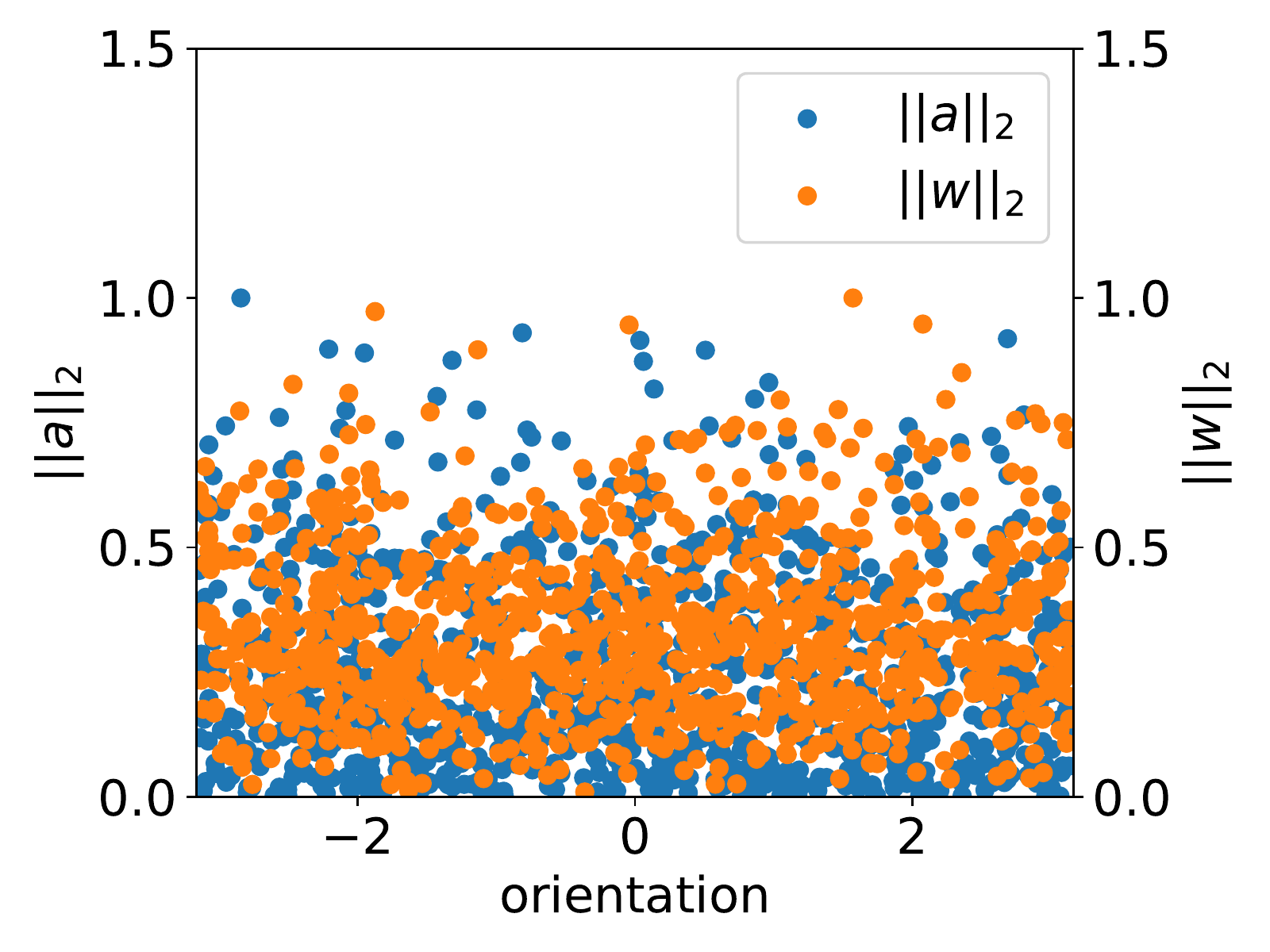}}
	\subfloat[three-layer NN, initialization]{\includegraphics[width=0.3\textwidth]{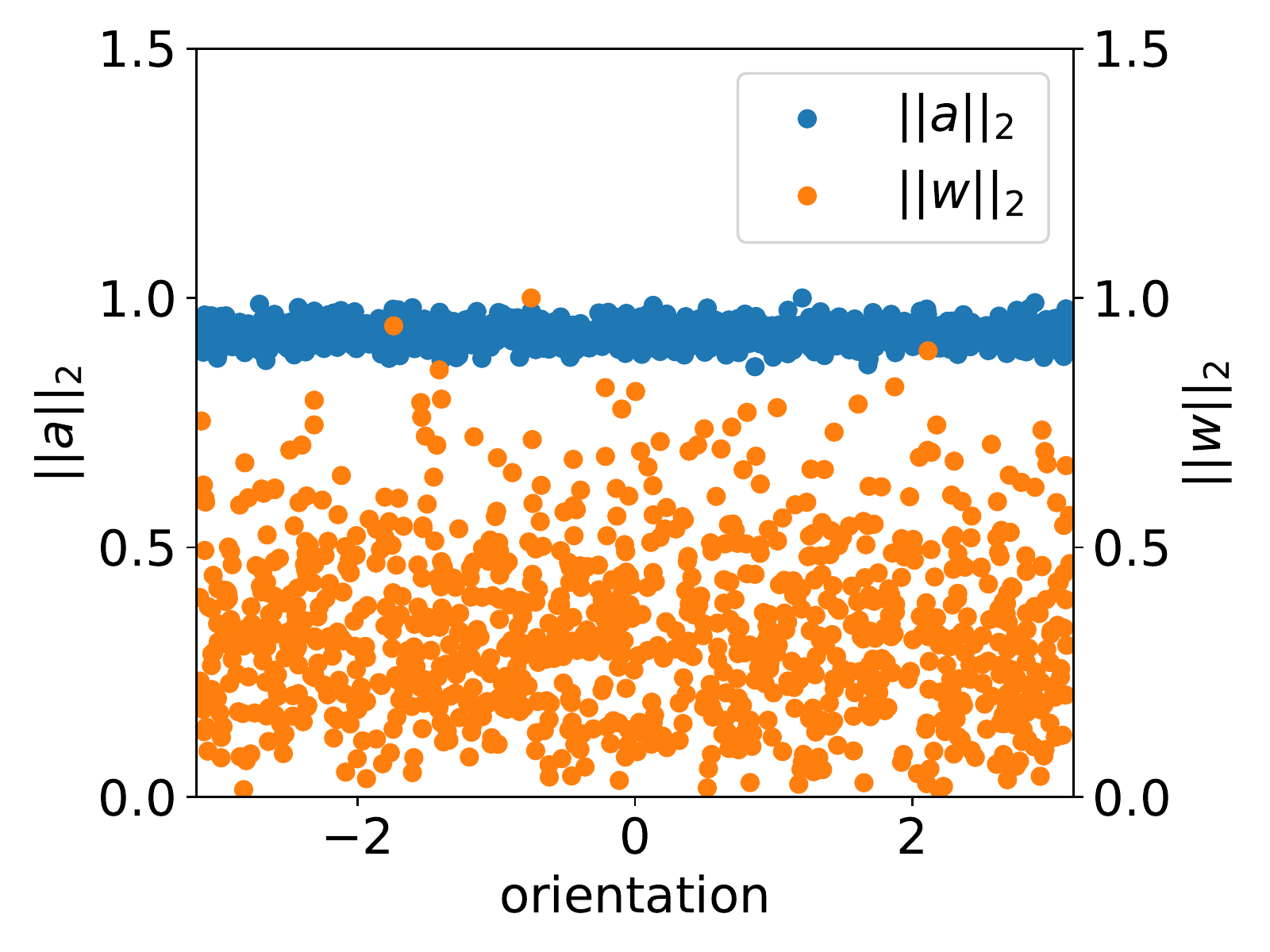}}
 \subfloat[two-layer NN, $p=0.9$]{\includegraphics[width=0.3\textwidth]{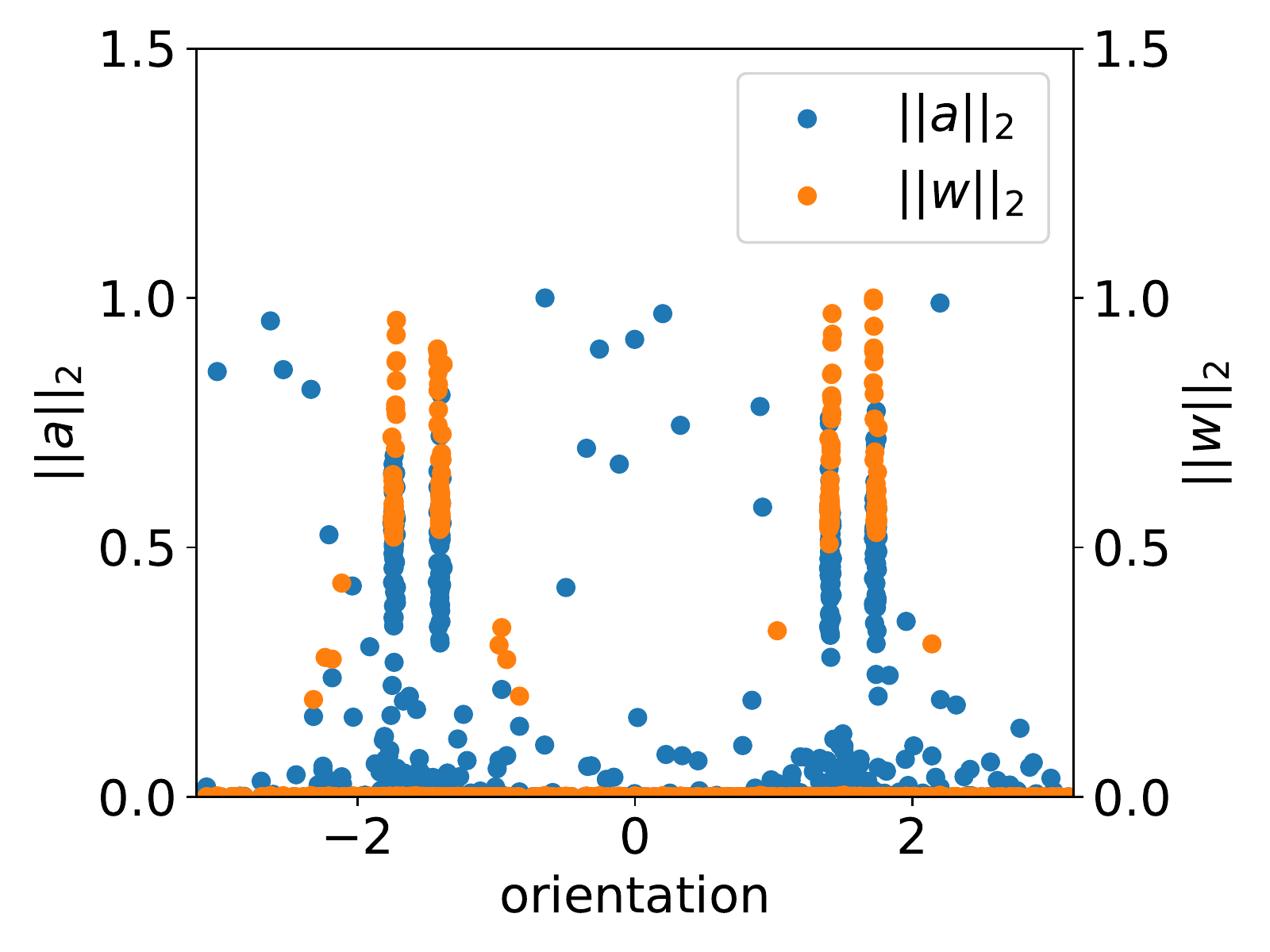}} \\
  \subfloat[three-layer NN, $p=0.9$]{\includegraphics[width=0.3\textwidth]{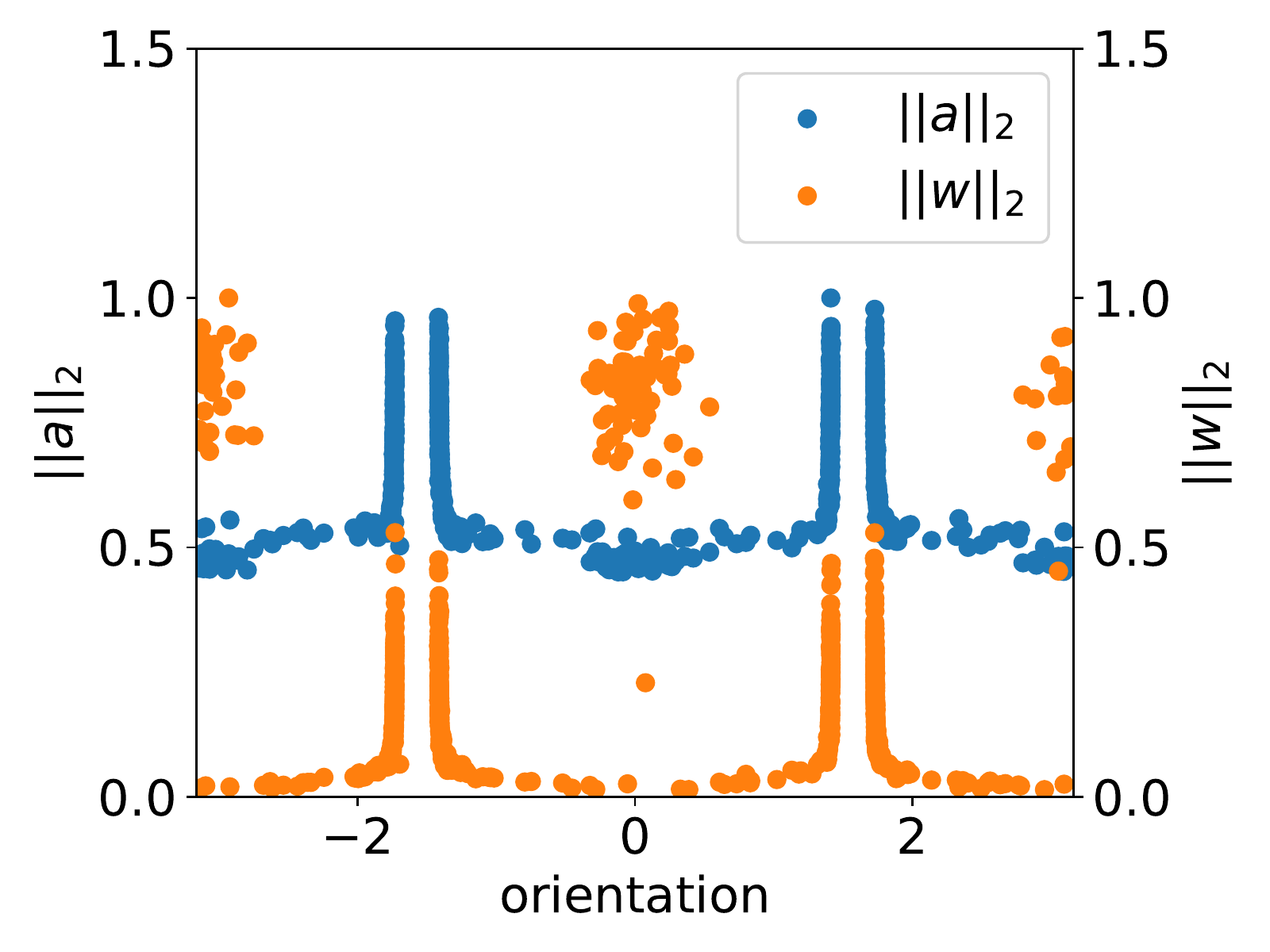}}
 \subfloat[two-layer NN, $p=1$]{\includegraphics[width=0.3\textwidth]{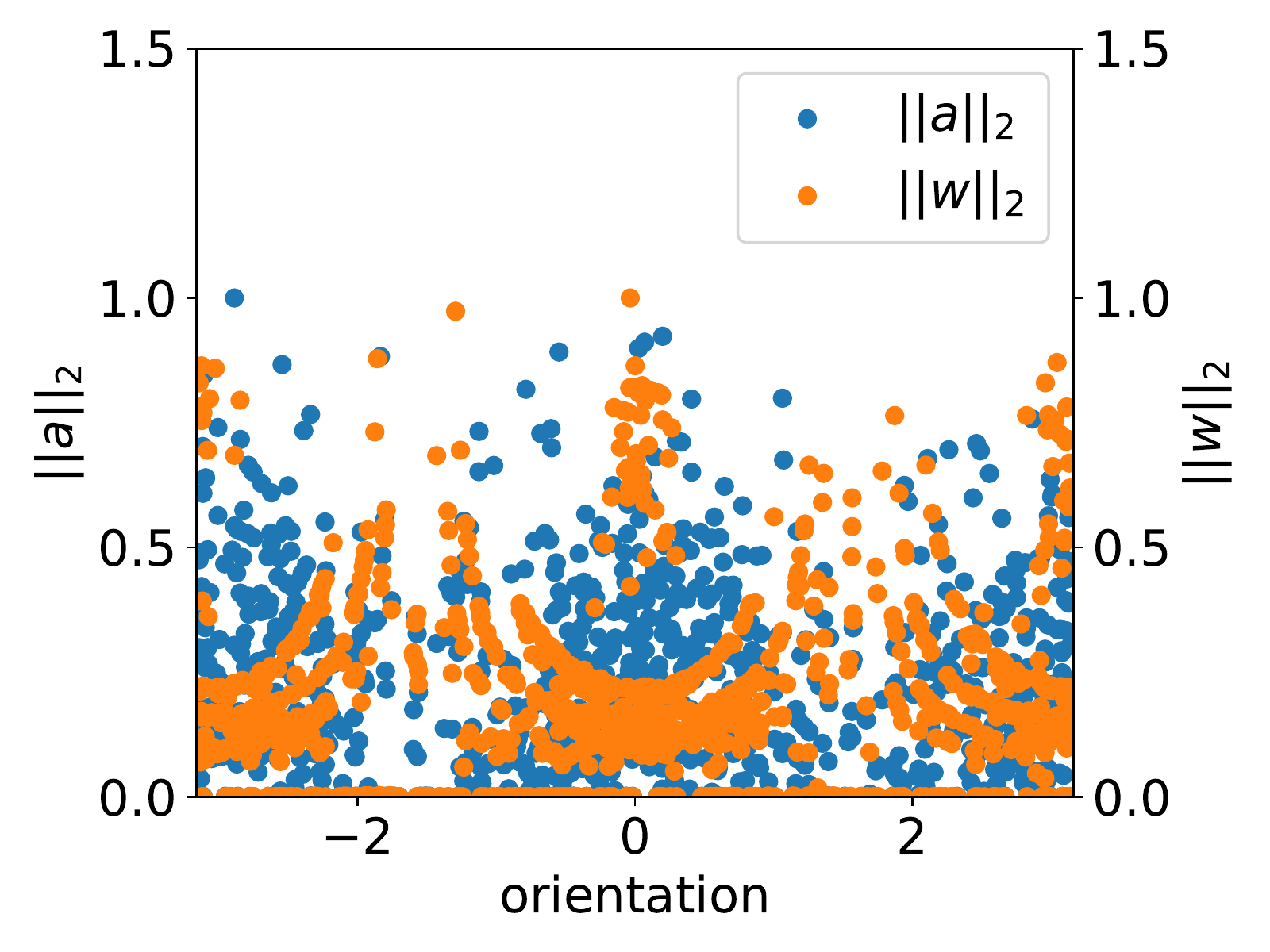}} 
  \subfloat[three-layer NN, $p=1$]{\includegraphics[width=0.3\textwidth]{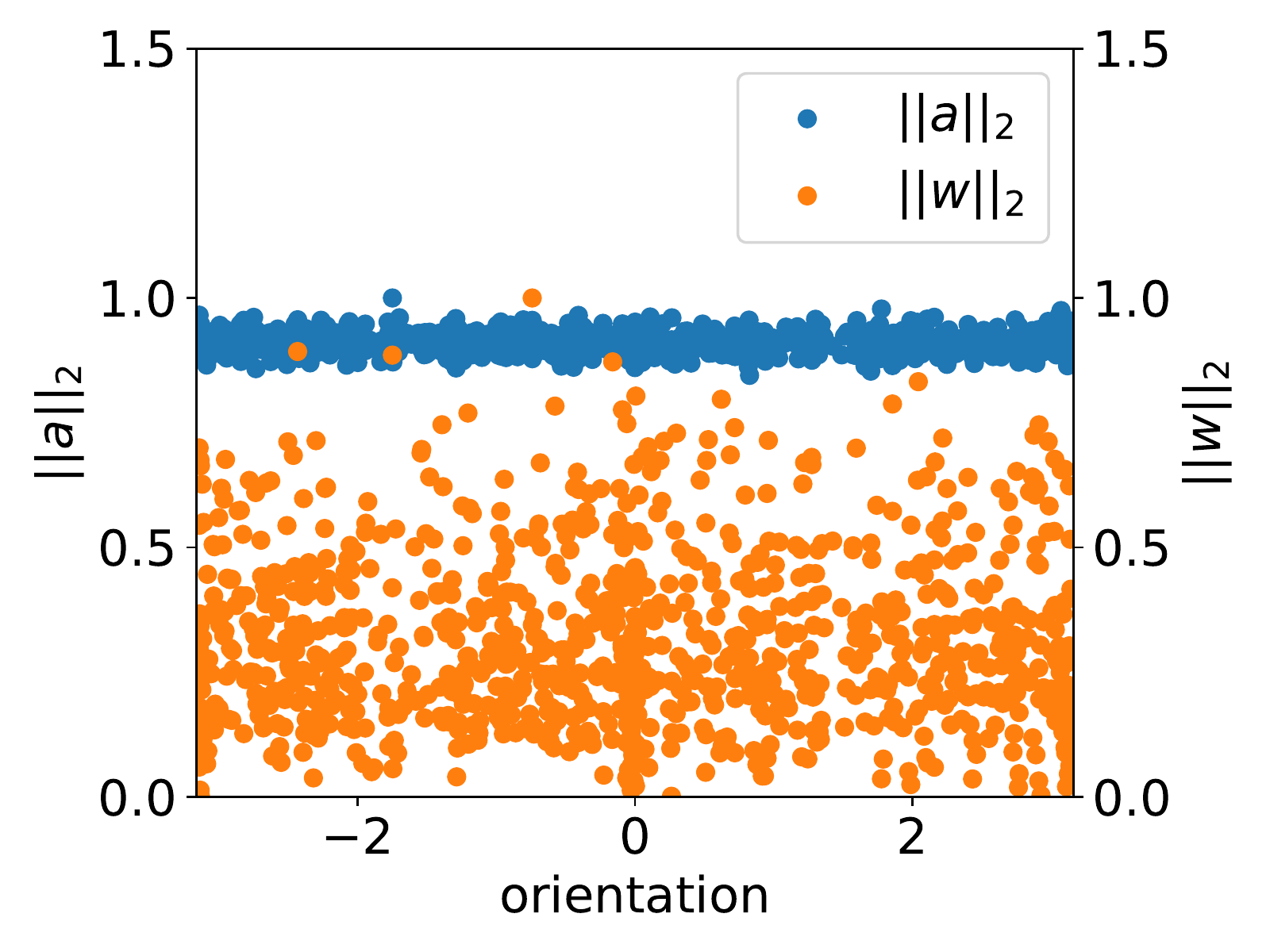}}

  \caption{The normalized scatter diagrams between $\norm{a_j}$, $\norm{\vw_j}$ and the orientation of tanh NNs for the initialization parameters and the parameters trained with and without dropout. Blue dots and orange dots are the output weight distribution and the input weight distribution, respectively. } \label{fig:condense_tanh_further}
\end{figure}

\end{document}